\newtheorem{assumption}{Assumption}
\newtheorem{lemma}{Lemma}
\newtheorem{theorem}{Theorem}
\newtheorem{corollary}{Corollary}
\newtheorem{proposition}{Proposition}
\newcommand{\blind}{1}
\def\reals{{\mathbb R}}
\def\G{{\mathcal G}}
\def\E{{\mathbb E}}
\def\argmin{\mathop{\text{\rm arg\,min}}}
\def\s{\text{s}}
\def\p{\text{p}}
\def\argmax{\mathop{\text{\rm arg\,max}}}
\def\span{{\mathop{\text{\rm span}}}}
\def\tr{{\mathop{\text{\rm tr}}}}
\begin{document}

\def\spacingset#1{\renewcommand{\baselinestretch}%
{#1}\small\normalsize} \spacingset{1}


\if1\blind
{
  \title{\bf Knowledge Transfer across Multiple Principal Component Analysis Studies}
  \author{Zeyu Li\footnotemark[2]\\
    School of Management, Fudan University\\
    Kangxiang Qin\footnotemark[2] \\
    School of Mathematics, Shandong University
    \\
    Yong He\footnotemark[1]\\
    Institute for Financial Studies, Shandong University\\
    Wang Zhou\\
    Department of Statistics and Data Science, National University of Singapore\\
    Xinsheng Zhang\\
    School of Management, Fudan University}
  \maketitle
} \fi

\renewcommand{\thefootnote}{\fnsymbol{footnote}}

\footnotetext[2]{The authors contributed equally to this work.}
\footnotetext[1]{Corresponding author, email: heyong@sdu.edu.cn.}

\if0\blind
{
  \bigskip
  \bigskip
  \bigskip
  \begin{center}
    {\LARGE\bf Knowledge Transfer across Multiple Principal Component Analysis Studies}
\end{center}
  \medskip
} \fi

\bigskip
\begin{abstract}
Transfer learning has aroused great interest in the statistical community. In this article, we focus on knowledge transfer for unsupervised learning tasks in contrast to the supervised learning tasks in the literature.  Given the transferable source populations, we propose a two-step transfer learning algorithm to extract useful information from multiple source principal component analysis (PCA) studies, thereby enhancing estimation accuracy for the target PCA task. In the first step, we integrate the shared subspace information across multiple studies by a proposed method named as Grassmannian barycenter, instead of directly performing PCA on the pooled dataset. The proposed Grassmannian barycenter method enjoys robustness and computational advantages in more general cases. Then the resulting  estimator for the shared subspace from the first step is further utilized to estimate the target private subspace  in the second step. Our theoretical analysis credits the gain of knowledge transfer between PCA studies to the enlarged eigenvalue gap, which is different from the existing  supervised transfer learning tasks where sparsity plays the central role. In addition, we prove that the bilinear forms of the empirical spectral projectors have asymptotic normality under weaker eigenvalue gap conditions after knowledge transfer. When the set of informativesources is unknown, we endow our algorithm with the capability of useful dataset selection by solving a rectified optimization problem on the Grassmann manifold, which in turn leads to a computationally friendly rectified Grassmannian K-means procedure.  In the end, extensive numerical simulation results and a real data case concerning activity recognition are reported to  support our theoretical claims and to illustrate the empirical usefulness of the proposed transfer learning methods.
\end{abstract}
\noindent%
{\it Keywords:}  Grassmann manifold, principal component analysis, transfer learning.
\bigskip
\spacingset{1.45} 

\section{Introduction}
\label{sec:intro}

In this era of big data, the availability of various public datasets enables us to improve the performance of a new study of interest by taking advantage of relevant information from the existing ones. The idea of transferring knowledge from related source studies to the target study, originated from computer science \citep{torrey2010transfer,zhuang2020comprehensive,niu2020decade}, has aroused great interest in the statistical community. For transfer learning of supervised tasks, \cite{bastani2021predicting, li2020transfer} lead the trend of two-step transfer learning procedures: in the first step the useful datasets are pooled to obtain a primal estimator, which is then debiased in the second step using target data only \citep{tian2022transfer,transquantile}. One may also refer to \cite{cai2021transfer,reeve2021adaptive} for non-parametric classification problems. In \cite{li2023estimation}, the authors  propose to jointly estimate the target parameter and contrast vectors between the target and the sources for high-dimension generalized linear model, and their method does not require any regularity condition on the Hessian matrices, which is typically  assumed for the two-step procedures \citep{bastani2021predicting, li2020transfer,tian2022transfer,transquantile}.
On the contrary, knowledge transfer for unsupervised learning tasks is rarely discussed in the literature. In this work, we focus on transfer learning problems for one of the most important unsupervised learning task, that's Principal Component Analysis (PCA). As far as we know, this is the first work for unsupervised learning tasks in the literature.

Principal component analysis is one of the  most popular methods for dimension reduction and feature extraction \citep{pearson1901liii}. The main goal of PCA is to reduce the dimension of a dataset while retaining as much of the original variability as possible. It involves finding the eigenvectors and eigenvalues of the sample covariance matrix, where eigenvectors represent the directions of maximum variance in the data, and eigenvalues indicate the magnitude of variance in those directions. PCA has wide applications in various fields, including  but not limited to linear regression, functional data analysis and factor analysis \citep{jolliffe2003modified,fan2013large,Chen2020StatisticalIF,he2022large}.
However, in the high-dimensional regime when the data dimension $p$ is comparable or even much larger than the sample size $n$, it is well-known that classical PCA would deliver inconsistent estimators \citep{baik2005phase,johnstone2009consistency,bai2010spectral}. One stream of research focuses on the so-called ``diverging eigenvalue" regime \citep{wang2017asymptotics,cai2020limiting,xia2021normal}, where consistency of the PCA estimator is guaranteed by the strong signal strength. Indeed, \cite{fan2013large} shows that the leading eigenvalues of the covariance matrix grow linearly with the data dimension $p$ for factor models with pervasive factors. A profound link between the convergence of the empirical eigen-structure and the effective rank of the covariance matrix is established in \cite{koltchinskii2014asymptotics,koltchinskii2017concentration}.
Alternatively, another way to tackle such ``curse of dimension" problem is the sparse PCA technique, also known as SPCA. Researches on sparse PCA has experienced a surge development including \cite{zou2006sparse,d2004direct,cai2013sparse,lei2015sparsistency,zou2018selective}. Other than the ``diverging eigenvalue" regime, literature related to sparse PCA tends to limit the signal strength of the data and require additional sparsity conditions by making the bounded eigenvalues and sparse eigenvectors assumptions, but the sparsity conditions can be hard to verify in practice. Thus we adopt the diverging eigenvalue mindset in this work so that no additional structure of eigenvectors is required.


\subsection{Knowledge transfer framework}

In this work, we focus on knowledge transfer across multiple PCA studies. Let $\Sigma^*_0$ be the population target covariance matrix, and $\hat{\Sigma}_0=\Sigma^*_0+E_0$ be its  sample version (e.g., the sample covariance matrix), with sample size $n_0$. The target goal is to retrieve the principal subspace $\span(U^*_0)$, where the $p\times r_0$ column orthogonal matrix $U^*_0$ is  the leading $r_0$ eigenvectors of $\Sigma^*_0$, i.e., the eigenvectors corresponding to the largest $r_0$ eigenvalues. Classical PCA estimator only using the target dataset is given by the leading $r_0$ eigenvectors of $\hat{\Sigma}_0$. The purpose of this work is to fully extract  useful homogeneity information from multiple source sample-version covariance matrices $\hat{\Sigma}_k=\Sigma^*_k+E_k$, with sample size $n_k$, for $k\in[K]:=\{1,\dots,K\}$, so as to enhance the estimation accuracy of the target study.

To model the transferable knowledge between the target dataset and the informative sources denoted by $k\in\cI \subseteq[K]$, we assume that certain subspace information is shared across these datasets. For any $\Sigma_k^*$ with $k\in\cI$, let  $U_k^*$  be a column orthogonal matrix with columns being $\Sigma_k^*$'s leading $r_k$ eigenvectors. For transfer learning, it is surely desirable if $r_k=r_0$ and $\span(U_k^*)=\span(U_0^*)$. It's even better if $\Sigma_k^*=\Sigma_0^*$, then knowledge transfer from the $k$-th study would be a relatively easy task, as one could simply calculate the leading $r_0$ eigenvectors of the pooled sample covariance matrix $(n_k\hat{\Sigma}_k+n_0\hat{\Sigma}_0)/(n_k+n_0)$. However, in real applications things shall be more complicated in the following senses.
\begin{itemize}
    \item First, $r_k$ is not necessarily equal to $r_0$, even if it is the case, $\span(U_k^*)$ as a whole might not  be close to $\span(U_0^*)$. There is chance that only a subspace of $\span(U_k^*)$ is just close to a subspace of $\span(U_0^*)$, while the remainder subspace of $\span(U_k^*)$ would result in negative transfer . We denote this transferable (shared) subspace of $\span(U_k^*)$ as $\span(U_k^{\s})$, where $U_k^{\s}$ is a column orthogonal matrix.
    \item Second, even there exists such shared subspace $\span(U_k^{\s}) \subseteq \span(U_k^*)$ that is helpful to the target task, there is no guarantee that $\span(U_k^{\s})$ is spanned by the eigenvectors corresponding to the leading eigenvalues of $\Sigma_k^*$. For instance, it could be spanned by those eigenvectors corresponding to some small eigenvalues, or even be an arbitrary subspace of $\span(U_k^*)$. It makes the naive method of summing the sample covariance matrices and then performing PCA questionable. Consider the toy example when $U_k^{\s}$ is a single eigenvector corresponding to the second largest eigenvalue of $\Sigma_k^*$. Then, the eigen-structure of the pooled sample covariance matrix $(n_k\hat{\Sigma}_k+n_0\hat{\Sigma}_0)/(n_k+n_0)$ could be severely influenced by the largest eigenvalue (and its corresponding eigenvector) of $\Sigma_k^*$, if, e.g., it is significantly larger than the second largest eigenvalue of $\Sigma_k^*$ and the largest eigenvalue of $\Sigma_0^*$. Since the eigenvector corresponding to the largest eigenvalue of $\Sigma_k^*$ could be irrelevant to the target study, knowledge transfer by summing the sample covariance matrices would be negative in this case.
    \item In the end, there is also no guarantee that  $\span(U_0^*)$ as a whole could all benefit from the source information. It's possible that there exists a subspace of $\span(U_0^*)$ that is not close to any subspace of $\span(U_k^*)$, and we have  to estimate such subspace, denoted as $\span(U_0^{\p})$, by using the target dataset alone. In this work, we call $\span(U_0^{\p})$ as the private subspace of $\span(U_0^*)$\footnote[4]{For better comprehension, $\s$ in the superscript stands for ``shared", while $\p$ stands for ``private".}.
\end{itemize}

We turn to briefly introduce our knowledge transfer framework, which takes all the arguments above into account. For $k\in\{0\}\cup\cI$ and $1\leq r_\s\leq \min_{k\in \{0\}\cup\cI}(r_k)$, we decompose each $r_k$-dimensional $\span(U_k^*)$ into the direct sum of two subspaces, namely
$$\span(U_k^*)= \span(U_k^{\s})\oplus \span(U_k^{\p}),$$
where $U_k^{\s}$ and $U_k^{\p}$ are $p\times r_\s$ and $p\times (r_k-r_\s)$ column orthogonal matrices, respectively. Equivalently, let $P_k^*=U_k^*(U_k^*)^{\top}$, $P_k^{\text{s}}=U_k^{\s}(U_k^{\s})^{\top}$ and $P_k^{\text{p}}=U_k^{\p}(U_k^{\p})^{\top}$ be the orthogonal projection matrices to the corresponding subspaces respectively, we then write equivalently that
$$P^*_k=P_k^{\text{s}}+P_k^{\text{p}},\quad \text{such that}\quad P_k^{\text{s}}P_k^{\text{p}} =0,$$
as any subspace can be uniquely determined by the orthogonal projector from $\mathbb{R}^p$ onto itself. For $k\in \cI$, $\span(U_k^{\s})$ contains transferable information of $\span(U_0^{\s})$. To evaluate the informative level of $\span(U_k^{\s})$ for $k\in \cI$, we assume that for some $h>0$,
\begin{equation}\label{eq:inflvl}
    \left\|P_k^\s-P_0^\s\right\|_F\leq h,
\end{equation}
where $\|\cdot\|_F$ is the matrix Frobenius norm and (\ref{eq:inflvl}) bounds the distance between $\span(U_k^{\s})$ and $\span(U_0^{\s})$ by the well-known projection metric. In fact, (\ref{eq:inflvl}) is equivalent to $ \|(U_k^{\s})^{\top }(U_0^{\s})^{\perp}\|_F \leq h/\sqrt{2}$. Geometrically, denote the non-trivial singular values of $(U_k^{\s})^{\top }(U_0^{\s})^{\perp}$ as $\{\sigma^{\s}_{k,i}\}_{i=1}^{r_\s}$, the principal angles between $\span(U_k^{\s})$ and $\span((U_0^{\s})^{\perp})$ are defined as $\{\cos^{-1}(\sigma^{\s}_{k,i})\}_{i=1}^{r_\s}$, then small $\|(U_k^{\s})^{\top }(U_0^{\s})^{\perp}\|_F$ indicates almost orthogonality between $\span(U_k^{\s})$ and $\span((U_0^{\s})^{\perp})$, or equivalently $\span(U_k^{\s})$ shall be very close to $\span(U_0^{\s})$. As for those non-informative source datasets, denoted as $k\in \cI^c$ in this work, $\span(U^*_k)$ can be quite different from $\span(U^*_0)$. For convenience, throughout this work we treat $r_k$ as a finite number, namely there exists some $r_{\max}<\infty$ such that $r_k\leq r_{\max}$ for all $k\in[K]$.

We end this subsection with a few remarks. First, a closely related problem is multi-task learning \citep{zhang2021survey,yamane2016multitask}, whose goal is to jointly solve all multiple tasks at the same time. Indeed, it is worth mentioning that the target dataset and any informative source dataset are interchangeable under our framework, in view of the fact that $\|P_k^\s-P_l^\s\|_F\leq \|P_k^\s-P_0^\s\|_F+\|P_l^\s-P_0^\s\|_F\leq 2 h$ for any $k, l\in\{0\}\cup \cI$. So in principle, any study $k\in\{0\}\cup \cI$ can be viewed as the target study, and it is possible to achieve performance enhancement for each study $k\in\{0\}\cup \cI$ using the knowledge transfer methods proposed in this work. Second, in this article we mainly discuss principal component analysis in a classical way for the sake of brevity. That is say, we assume that the $k$-th dataset includes $n_k$ i.i.d. $p$-dimensional random vectors $x_{k,i}$ with mean zero and covariance $\Sigma^*_k$, and the corresponding sample covariance matrix is
    $$\hat{\Sigma}_k=\frac{1}{n_k}\sum_{i=1}^{n_k} x_{k,i} x_{k,i}^{\top}.$$
In fact, the knowledge transfer framework in this work goes beyond classical PCA and extends readily to more general cases including elliptical PCA for robust dimension reduction against heavy-tailed noise \citep{Fan2018LARGE, he2022large} and two-directional PCA for two-way dimension reduction of matrix-valued observations \citep{zhang20052d,chen2021statistical,Yu2021Projected}.

\subsection{Closely related works and our contributions}

Among the few works concerning knowledge transfer for unsupervised learning studies, \cite{duan2023target} suggests performing PCA to the linear combination of the target and only one source covariance matrix with large sample size. In this way, to consistently identify the subspace corresponding to the smaller eigenvalues in the target sample covariance matrix, such subspace should  correspond to the relatively stronger eigenvalues in the source sample covariance matrix. While the idea is insightful, this requirement is clearly restrictive and not fulfilled in all knowledge transfer cases. Another closely related work is \cite{fan2019distributed} in which the authors propose a distributed algorithm for PCA when the local population covariance matrices share the same leading eigenspace. In contrast, we assume the existence of private subspaces for the local matrices, hence an additional debiasing step is naturally desirable. In this work, we propose a two-step knowledge transfer procedure across multiple PCA studies. In the first step, we integrate the shared subspace information across the informative source studies using a  ``Grassmannian barycenter" (GB) method. Then, in the second step, we debias and estimate the private subspace of the target study with the target dataset only. Such knowledge transfer framework in fact adapts  to more general PCA settings \citep{Fan2018LARGE, he2022large,zhang20052d,chen2021statistical,Yu2021Projected}.

The proposed GB method  in our first step fully integrate shared information across multiple PCA studies. Compared to first pooling all datasets into a large dataset and then perform PCA, the advantage of the GB method lies in the following aspects. First, as the goal of the first step is to estimate the shared subspace, GB method only extacts directional information  and is less sensitive to those private subspaces corresponding to extremely large eigenvalues, resulting in more robust performance. Second, in a similar manner as \cite{fan2019distributed,hu2023optimal}, our GB method turns out to be suitable even when the source datasets are distributed in a relatively large number of machines. Instead of transmitting the entire source datasets to the target machine, the GB method only requires transmitting the source subspace estimators. That is to say, our procedure naturally adapts to the ``divide and conquer" scenarios, and is both computationally friendly and privacy protecting than directly pooling the target and source datasets.

In the second step, we estimate the private subspace of the target study, which is in the spirit of the debiasing step in the transfer learning literature on high-dimensional supervised learning \citep{li2020transfer,tian2022transfer,transquantile}. In these supervised transfer learning works, debiasing is required to be a relatively easier statistical task than directly estimating the target parameters. Indeed, the difficulty of these supervised statistical tasks is often related to the sparsity of the high-dimensional parameter vectors, and the differences between the target and the informative source parameters are assumed to be even more sparse than the target parameter, so that the transfer learning estimator would outperform the one using only the target dataset. Likewise, in the context of principal component analysis and with the well-known Davis-Kahan theorem, we measure the difficulty level of different PCA studies using the eigenvalue gap of the covariance matrices. The knowledge transfer estimator is promising if the task of estimating the private subspace is endowed with a much larger eigenvalue gap than the original target PCA study. Furthermore, the bilinear forms of the empirical projectors would also  achieve asymptotic normality much easier due to the same reason, making room for further statistical inferences.

In practice, informative source datasets are usually unknown. Various methods with statistical guarantees are proposed to avoid ``negative transfer", see, e.g., model selection aggregation in \cite{li2020transfer} and data-driven transferable source detection in \cite{tian2022transfer}. However, the existing tools are often designed for knowledge transfer between a relatively small number of datasets, and might face computational challenges in some cases, e.g., when there are a large number of datasets distributed in different machines.  Indeed, as will be revealed later, estimating the shared subspace using Grassmannian barycenter is rather robust against mild inclusion of non-informative datasets. In this work, we struggle to pursue for a computationally more efficient dataset selection method. We propose to solve a non-convex rectified version of the manifold optimization problem which has led to the Grassmannian barycenter method. To search for the local maximum of this non-convex problem, one way is by resorting to the well-developed manifold optimization techniques. Optionally, we also suggest using a rectified Grassmannian K-means procedure which inherits the nature of the Grassmannian barycenter method. The idea of rectification has been adopted in various clustering analysis problems \citep{shen2012likelihood, Pan2013Cluster, wu2016new,liu2023cluster}, and our approach could be viewed as clustering datasets on a subspace manner. The proposed method is not only statistically accurate but also computationally friendly in both numerical experiments and real data cases, thanks to its capability of simultaneously selecting the useful sources while fully harnessing the information from these sources.

In summary, in this article, we discuss knowledge transfer problem across multiple principal component analysis studies under a general framework that is able to take many real world cases into account. In contrast to most existing works focusing on a small number of source datasets, we propose knowledge transfer methods that are both statistically accurate and computational feasible even with a large number of source datasets potentially distributed in different machines. Indeed, the number of source datasets is allowed to diverge at a reasonable rate. In addition, our theoretical arguments credit the gain of knowledge transfer across PCA studies to the enlarged eigenvalue gap, which is different from the  supervised transfer learning tasks where sparsity of the contrast vector is the essence. In the end, extensive numerical simulation results and a real data case concerning activity recognition are reported, so as to provide evidence on our claims and also the empirical usefulness of the proposed methods.

\subsection{Organization and notations}
The remainder of this article is organized as follows. In Section \ref{sec:meth}, we first present the knowledge transfer algorithms across principal component analysis studies given the informative datasets.  Then, we further discuss the case that the informative datasets are unknown and needs to be selected.  In Section \ref{sec:theo}, theoretical properties of the proposed methods are discussed under mild conditions. Numerical simulation results are reported in Section \ref{sec:num} to support our theoretical arguments. In the end, a real dataset concerning activity recognition is analyzed in Section \ref{sec:real} to illustrate the practical usefulness of our methods.

We introduce some notations used throughout the paper to end this section. For a real symmetric matrix $A$, let $\{\lambda_i(A)\}$ be its non-increasing eigenvalues and $d_i(A) = \lambda_i(A)-\lambda_{i+1}(A)$ be the $i$-th eigenvalue gap. We write $\|A\|_{2}=\lambda_1(A)$ as the operator norm of the matrix $A$, while for the vector $a$, denote its $\ell_2$ norm as $\|a\|$. For a random variable $X\in \reals$, we define $\|X\|_{\psi_2}=\sup_{p\geq 1}p^{-1/2}(\E|X|^p)^{1/p}$ and $\|X\|_{\psi_1}=\sup_{p\geq 1}p^{-1}(\E|X|^p)^{1/p}$, please refer to \cite{vershynin2018high} for details of the sub-Gaussian and sub-exponential norms. Furthermore, the $o_{p}$ is for convergence to zero in probability and the $O_{p}$ is for stochastic boundedness. We write $x\lesssim y$ if $x\leq Cy$ for some $C>0$, $x\gtrsim y$ if $x\geq cy$ for some $c>0$, and $x\asymp y$ if both $x\lesssim y$ and $x\gtrsim y$ hold.  Note that the constants may not be identical in different lines.

\section{Methodology}\label{sec:meth}
In this section we present our knowledge transfer algorithms given informative source datasets either known or unknown. When the informative sources are known in advance, we call it the \emph{oracle knowledge transfer} case, and it is discussed in Section \ref{sec:okt}. Then in Section \ref{sec:nokt}, we work on the case with unknown informative sources.
\subsection{Oracle knowledge transfer}\label{sec:okt}
We first discuss the oracle knowledge transfer scenario where all informative source datasets $\cI \subseteq [K]$ are known in advance. Recall that the goal is to get a better estimation of the $r_0$-dimensional target subspace with the help of useful source datasets. With a slight abuse of notation we identify the target subspace by the projection matrix $P_0^*=P_0^\s+P_0^\p$. As the private subspace $P_0^\p$ could only be estimated using target data alone in principle, we first seek a better estimation of the shared subspace $P_0^\s$, by making fully use of the information from the target and informative sources.

As discussed in Section \ref{sec:intro}, there is no guarantee that $\span(U_k^{\s})$ is spanned by the eigenvectors corresponding to the leading $r_\s$ eigenvalues of $\Sigma_k^*$ for $k\in\{0\}\cup\cI$, so simply summing the sample covariance matrices and then calculating the leading eigenvectors of the resulting pooled matrix might not be promising, if, e.g., eigenvalues corresponding to some private subspaces are extremely large. On the contrary, we suggest taking only the directional information into account, discarding the potentially misleading eigenvalue information of the sample covariance matrices from the source studies. To be more specific, we first acquire column orthogonal $\tilde{U}_k$ by taking the leading $r_k$ eigenvectors of $\hat{\Sigma}_k$. Let $P_{\tilde{U}_k}=\tilde{U}_k\tilde{U}_k^{\top}$ be the projection matrix of $\span(\tilde{U}_k)$. We abbreviate $P_{\tilde{U}_k}$ as $\tilde{P}_k$ for brevity of notations. Then, the estimator of the shared subspace, $\hat{P}_0^\s$, is constructed by  using the leading $r_\s$ eigenvectors of the (weighted) average projection matrix
\begin{equation}\label{eq:aveproj}
    \hat{\Sigma}^\s = \frac{1}{N_{\cI}}\sum_{k\in\{0\}\cup\cI} n_k\tilde{P}_k,
\end{equation}
where $N_{\cI} = \sum_{k\in\{0\}\cup\cI} n_k$ is the pooled sample size. It is  helpful to introduce the notion of Grassmann manifold $\mathcal{G}(p,r)$, which is defined as the set of $r$-dimensional linear subspaces of $\mathbb{R}^p$, so that the target aim $\span(U_0^*)\in \mathcal{G}(p,r_0)$. Meanwhile, any subspace can be uniquely represented by the orthogonal projector from $\mathbb{R}^p$ onto itself. For instance, $\span(U^*_0)$ is uniquely determined by the projection matrix $P^*_0 =U^*_0(U^*_0)^{\top}$. With a slight abuse of notation, we also identify the Grassmann manifold as follows:
\begin{equation*}
	\mathcal{G}(p,r)=\{P\in \mathbb{R}^{p\times p}\mid P^{\top}=P,\, P^2=P,\, \text{rank} (P)=r\},
\end{equation*}
please refer to \cite{bendokat2020grassmann} for detailed discussions on the Grassmann manifold. Indeed, the leading $r_\s$ eigenvectors of (\ref{eq:aveproj}) can be viewed as the solution of the following optimization problem:
\begin{equation}\label{eq:GBoracle}
\begin{aligned}
        \hat{P}_0^{\s}&=\argmax_{P\in \G(p,r_\s)}\frac{1}{N_{\cI}} \sum_{k\in\{0\}\cup\mathcal{I}}n_k\tr(\tilde{P}_kP) \\
        &= \argmin_{P\in \G(p,r_\s)}\frac{1}{N_{\cI}} \sum_{k\in\{0\}\cup\cI}n_k\left\|\tilde{P}_k-P\right\|^2_F.
\end{aligned}
\end{equation}
Hence intuitively, $\hat{P}_0^{\s}$ is the element on $\G(p,r_\s)$ that minimizes the squared projection metric to the informative subspaces $\tilde{P}_k\in \mathcal{G}(p,r_k)$ for $k\in \{0\}\cup \cI$. It coincides with the physical notion of barycenter, so we name $\hat{P}_0^{\s}$ as the Grassmannian barycenter. It is also worth mentioning that the Grassmannian barycenter can also be viewed as an instance of the extrinsic Fr\'{e}chet mean on manifolds discussed in \cite{bhattacharya2003large,bhattacharya2005large,eltzner2019smeary,hundrieser2020finite}.

After acquiring $\hat{P}_0^{\s}$ as an estimator of the shared subspace, if $r_\s<r_0$, we then need to estimate the private subspace $P_0^\p$ using the target dataset only. To do so, for $(\hat{P}_0^{\s})^{\perp}=I_p-\hat{P}_0^{\s}$, we acquire $\hat{P}_0^\p$ by taking the leading $(r_0-r_{\s})$ eigenvectors of the following projected sample covariance matrix
$$\hat{\Sigma}_0^{\p}=(\hat{P}_0^{\s})^{\perp} \hat{\Sigma}_0 (\hat{P}_0^{\s})^{\perp}.$$
Clearly $\hat{P}_0^{\s}\hat{P}_0^\p=0$, and we acquire the knowledge transfer estimator of $P_0^*$ by $\hat{P}_0= \hat{P}_0^{\s}+ \hat{P}_0^{\p}$. Please refer to Algorithm \ref{alg:ora} below for details.

\begin{algorithm}[ht]
\caption{Oracle knowledge transfer for principal component analysis.}\label{alg:ora}
\begin{algorithmic}[1]
\REQUIRE ~~\\
    $(\hat{\Sigma}_k, n_k,r_k)$ for $k\in\{0\}\cup \cI$; $r_{\s}$;\\
\ENSURE ~~\\
    \STATE individual PCA step: acquire $\tilde{P}_k$ by taking the leading $r_k$ eigenvectors of $\hat{\Sigma}_k$ for $k\in\{0\}\cup \cI$;\\
    \STATE GB step: take the leading $r_{\s}$ eigenvectors of $\hat{\Sigma}^\s=\sum_{k\in\{0\}\cup\cI} n_k\tilde{P}_k/N_{\cI}$ to obtain $\hat{P}_0^{\s}$;\\

    \STATE fine-tuning step: let $(\hat{P}_0^{\s})^{\perp}=I_p-\hat{P}_0^{\s}$, acquire $\hat{P}_0^\p$ by taking the leading $(r_0-r_{\s})$ eigenvectors of $\hat{\Sigma}_0^{\p}=(\hat{P}_0^{\s})^{\perp} \hat{\Sigma}_0 (\hat{P}_0^{\s})^{\perp}$;\\
\RETURN $\hat{P}_0= \hat{P}_0^{\s}+ \hat{P}_0^{\p}$.
\end{algorithmic}
\end{algorithm}

If $r_\s=r_0$, namely the target study has no private subspace, the gain by knowledge transfer is trivial as long as the difference between different studies, as measured by $h$, is sufficiently small. To get some intuition on the gain by knowledge transfer when $r_\s< r_0$ and there exists some private subspace to be estimated, we start by considering the ideal case when $\hat{P}_0^{\text{s}}=P_0^{\text{s}}$, i.e., the shared subspace is exactly recovered with the help of the informative sources. First, let $\{\lambda_i\}_{i=1}^{p}$ be the non-increasing eigenvalues of the target population covariance matrix $\Sigma_0^*$. According to Davis-Kahan theorem, the performance of PCA using the target sample covariance matrix $\hat{\Sigma}_0$ relies heavily on the eigenvalue gap $\delta_0:=d_{r_0}(\Sigma^*_0)=\lambda_{r_0}-\lambda_{r_0+1}$. Meanwhile, the fine-tuning step of Algorithm \ref{alg:ora} seeks the leading $(r_0-r_\s)$ eigenvectors of $(P_0^{\text{s}})^{\perp} \hat{\Sigma}_0 (P_0^{\text{s}})^{\perp} =(P_0^{\text{s}})^{\perp} \Sigma^*_0 (P_0^{\text{s}})^{\perp} + (P_0^{\text{s}})^{\perp} E_0 (P_0^{\text{s}})^{\perp}$. We then take a closer look at the ``signal part" $\Sigma_0^{\p} := (P_0^{\text{s}})^{\perp} \Sigma^*_0 (P_0^{\text{s}})^{\perp}$.  Due to the fact that $(P_0^{\text{s}})^{\perp}=P_0^{\p}+ (P_0^*)^{\perp}$, while $P_0^{\p}\Sigma_0^*(P_0^*)^{\perp}=0$, we have $$\Sigma_0^{\p} = (P_0^{\text{s}})^{\perp} \Sigma^*_0 (P_0^{\text{s}})^{\perp}= P_0^{\p} \Sigma^*_0 P_0^{\p} +(P_0^*)^{\perp} \Sigma^*_0 (P_0^*)^{\perp}= \mathcal{U}^{\p}_0 \Lambda_0^{\p} (\mathcal{U}^{\p}_0)^{\top},$$ where
\begin{equation}\label{eq:eigen}
    \mathcal{U}^{\p}_0 := \left(\underbrace{U_0^{\p}}_{p\times (r_0-r_\s)}\,\middle\vert\, \underbrace{(U_0^*)^{\perp}}_{p\times (p-r_0)}\,\middle\vert\, \underbrace{U_0^{\s}}_{p\times r_\s} \right),
\end{equation}
$$\Lambda_0^{\p} := \diag\left(\underbrace{\lambda_1^{\p},\dots,\lambda_{r_0-r_\s}^{\p}}_{r_0-r_\s}\,\middle\vert\, \underbrace{\lambda_{r_0+1},\dots,\lambda_{p}}_{p-r_0}\,\middle\vert\, \underbrace{0,\dots,0}_{r_\s} \right).$$
Here $\{\lambda_i^{\p}\}_{i=1}^{r_0-r_\s}$ are the non-increasing eigenvalues of the $(r_0-r_\s)\times (r_0-r_\s)$ matrix $(U_0^{\p})^{\top}\Sigma_0^*U_0^{\p}$. By the celebrated Courant–Fischer min-max principle, we have
$$\lambda^{\p}_{r_0-r_\s}\geq \lambda_{r_0},$$ please refer to Theorem 4.3.28 in \cite{horn2012matrix}. That is to say, from (\ref{eq:eigen}) we see that now the task of estimating the private subspace $U_0^{\p}$ has a larger eigenvalue gap $\delta_{\p}:=d_{r_0-r_\s}(\Sigma_0^{\p})=\lambda^{\p}_{r_0-r_\s}-\lambda_{r_0+1}$ rather than $\delta_0$. We credit the gain of knowledge transfer for PCA to the enlarged eigenvalue gap, or in plain words, the target performance can be enhanced by knowledge transfer, when the task of estimating the private subspace $U_0^{\p}$ alone becomes easier as the eigenvalue gap is much larger. It is helpful to imagine the case when $U_0^{\s}$ has relatively weak signal strength, and the signal corresponding to $U_0^{\p}$ is much stronger. While it is difficult to estimate $U_0^{\s}$ using the target data only, thankfully we are able to take relevant information from the informative sources to acquire a better estimation of $U_0^{\s}$. After the shared subspace $U_0^{\s}$ is acquired, we only need to estimate the private subspace $U_0^{\p}$, which is in this case a much easier task to do.

In the end, we briefly remark on our choice of the Grassmannian barycenter method for the shared subspace estimation. First, recall that the Grassmannian barycenter seeks the leading $r_s$ eigenvectors of the average projection matrix (\ref{eq:aveproj}), which only integrates subspace information across the datasets. The leading eigenvectors of (\ref{eq:aveproj}) turn out to be the most shared directions across $\tilde{P}_k$, which is exactly what we are looking for under our knowledge transfer framework. The method of seeking the shared subspace is certainly not unique, one might also resort to, e.g., the leading $r_s$ eigenvectors of the pooled sample covariance matrix $\sum_{k\in\{0\}\cup\cI} n_k\hat{\Sigma}_k/N_{\cI}$. However, from the arguments above we know that the gain by knowledge transfer is largely due to the stronger signal in the private subspaces, which might in turn jeopardize the estimation of the shared subspace if we simply sum the sample covariance matrices.

One might consider the following toy example, let
$\Sigma^*_0 = \diag(5,2,1,1,1,1)$ and $\Sigma^*_1 = \diag(1,2,5,1,1,1)$. We denote the natural basis by $\{e_i\}$, so here $e_2$ corresponds to the shared subspace, while $e_1$ is private for $\Sigma^*_0$ and $e_3$ is private for $\Sigma^*_1$. The target aim is to retrieve $e_1$ and $e_2$. Even without randomness, the sum of the population covariance matrices only gives  $\Sigma^*_0+\Sigma^*_1 = \diag(6,4,6,2,2,2)$. Hence the shared subspace $e_2$ could not stand out in this way if its eigenvalues are relatively weak in the individual covariance matrices. On the other hand, the Grassmannian barycenter only takes the subspace information into account. Take the leading $2$-dimensional projectors of $\Sigma^*_0$ and $\Sigma^*_1$ as
 $P_0^* = \diag(1,1,0,0,0,0)$ and $P_1^* = \diag(0,1,1,0,0,0)$, clearly $P_0^*+ P_1^* = \diag(1,2,1,0,0,0)$ and $e_2$ leads in this case. In summary, the Grassmannian barycenter method is more suitable for capturing shared, but potentially weak, subspace information. In the end, we would like to point out that the Grassmannian barycenter method also shows computational advantage when the multiple source datasets are distributed across different machines, as our method only need to transmit the subspace estimators, rather than the whole datasets, please refer to \cite{fan2019distributed} for details about the distributed PCA.

\subsection{Unknown informative sources}\label{sec:nokt}
In the last subsection, we present Algorithm \ref{alg:ora} when all informative source datasets are known in advance. According to (\ref{eq:GBoracle}), we first solve the optimization problem
\begin{equation}\label{eq:oraGB2}
    \hat{P}_0^{\s}=\argmax_{P\in \G(p,r_\s)}\frac{1}{N_{\cI}} \sum_{k\in\{0\}\cup\mathcal{I}}n_k\tr(\tilde{P}_kP)
\end{equation}
to acquire the oracle shared subspace estimator, then we fine-tune the result by estimating the private subspace using target data only. In practice,  however, we do not know $\cI$ in advance. Therefore, there is chance that one may also include some non-informative sources into the study and this is particularly likely to happen when we are dealing with a large number of candidate sources, leading to the potential ``negative transfer" problem. As a matter of fact, as will be shown by numerical simulation, the Grassmannian barycenter step in Algorithm \ref{alg:ora} is actually rather robust against mild inclusion of useless source datasets. To see this, note that any individual subspace containing little shared subspace information could be viewed as one private subspace alone, and it does limited damage to the first step of Algorithm \ref{alg:ora} as long as the identifiability of the shared subspace as depicted by Assumption \ref{assum:2} below is still guaranteed. With such robustness in mind, we endeavor to pursue for an algorithm which is not only statistically accurate but also computationally efficient, even when the number of sources are relatively large.

Speaking concisely, the goal of this subsection is to give a reasonable approximation of the oracle shared subspace estimator $\hat{P}_0^{\s}$ when $\cI\subseteq [K]$ is unknown. To do so, first recall from (\ref{eq:inflvl}) that the informative level of the sources could be measured by the following quantity with simple calculation:
\begin{equation}\label{eq:dk}
    d_k:= r_{\s}-\tr(P_k^*P_0^{\s}),\quad \text{such that}\quad 0\leq d_k\lesssim h \quad \text{for}\quad k\in \cI.
\end{equation}
Equivalently, for those informative sources $k\in \cI$, we tend to have large $\tr(P_k^*P_0^{\s})$, which motivates us to consider the following rectified problem
\begin{equation}\label{major solution-sample}
\hat{P}^{\s}_{0,\tau}=\argmax_{P\in \G(p,r_\s)}\frac{1}{N_{[K]}} \left(n_0\tr(\tilde{P}_0 P)+ \sum_{k\in [K]}n_k\max\{\tr(\tilde{P}_kP),\tau\}\right),
\end{equation}
for some $\tau\in [0,r_{\s}]$ and $N_{[K]}:=\sum_{k\in\{0\}\cup[K]}n_k$. If $\tau=0$, then solving (\ref{major solution-sample}) is equivalent to blindly pooling all $k\in \{0\}\cup [K]$ to perform the Grassmannian barycenter step in Algorithm \ref{alg:ora}. If $\tau=r_{\s}$, then the solution of (\ref{major solution-sample}) is only related to the individual PCA estimator $\tilde{P}_0$ using the target dataset. If $\tau\in (0,r_\s)$, then the problem (\ref{major solution-sample}) is a non-convex optimization problem and the quantity $\tau$ controls the strength of dataset selection. Indeed, if $\tr(P_k^*P_0^{\s})$ is large enough for $k\in \cI$ while small enough for all $k\in \cI^c$, we would expect some properly chosen $\tau$ to precisely separate the informative datasets from the non-informative ones with high probability, such that the oracle subspace estimator from (\ref{eq:oraGB2}) could be a local maximum of (\ref{major solution-sample}).

In the remainder of this subsection, we briefly discuss how to numerically search for the local maximum of optimization problem in (\ref{major solution-sample}). First, one could naturally resort to the well-developed manifold optimization toolbox \citep{huper2004newton,helmke2007newton}. For instance, local quadratic convergence for Newton's method on Grassmann manifolds is assured in \cite{helmke2007newton}. However, Newton's method is notoriously sensitive towards initialization, thus a few Grassmannian gradient descent steps are suggested for a warm start.

Given some proper initialization $(P^{\s}_{0,\tau})^{(0)}$, for $t\geq 1$, let $(P^{\s}_{0,\tau})^{(t-1)}$ be the estimator from the $(t-1)$-th step of iteration. For brevity of notations, we omit $(0, \tau,\s)$ and denote $P_t:=(P^{\s}_{0,\tau})^{(t)}$. Given $P_{t-1}$, by observing (\ref{major solution-sample}), we start by selecting the informative datasets in the $t$-th step according to the following criterion:
\begin{equation}\label{eq:itercri}
    \cI_t=\left\{k\in[K]\mid \tr[P_{t-1}\tilde{P}_k]\geq \tau \right\}.
\end{equation}
With (\ref{eq:itercri}) in hand, one can update $P_{t-1}$ to $P_{t}$ by taking either one Grassmannian gradient descent step or one Grassmannian Newton's step \citep{huper2004newton,helmke2007newton}. The details of the Grassmannian optimization are presented in the supplementary material for saving space here. However, while the aforementioned manifold optimization techniques guarantee stable convergence to some local maximum of (\ref{major solution-sample}), with the intuition of the Grassmannian barycenter in mind, one could not help to accelerate the optimization process at the sacrifice of certain stability, and dive into the following rectified Grassmannian K-means procedure. Given $P_{t-1}$ from the $(t-1)$-th step of iteration, after acquiring $
\cI_t$ from (\ref{eq:itercri}), now instead of performing one single Grassmannian gradient descent step or one single Grassmannian Newton's step, it is also appealing to acquire $P_{t}$ directly via the Grassmannian barycenter method using $k\in\{0\}\cup \cI_t$. After that, we could continue using $P_t$ to obtain $\cI_{t+1}$, and iterate this procedure until convergence. This algorithm is clearly in the spirit of the celebrated K-means algorithm, so it is presented in this work in parallel with the manifold optimization method due to its intuitive and efficient nature. As seen in the numerical experiments, the rectified Grassmannian K-means method is able to achieve similar performance as the manifold optimization methods with much fewer iteration steps. In the end, please refer to Algorithm \ref{alg:nora} for details of the knowledge transfer procedures when the informative sources are unknown in advance.

\begin{algorithm}[ht]
\caption{Non-oracle knowledge transfer for principal component analysis.}\label{alg:nora}
\begin{algorithmic}[1]
\REQUIRE ~~\\
  $(\hat{\Sigma}_k, n_k,r_k)$ for $k\in\{0\}\cup [K]$; $r_{\s}$, $\tau$; initialization $P_{0}$;\\
\ENSURE ~~\\
    \STATE individual PCA step: acquire $\tilde{P}_k$ by taking the leading $r_k$ eigenvectors of $\hat{\Sigma}_k$ for $k\in\{0\}\cup [K]$;\\
    \STATE rectified GB step:  given $P_{t-1}$, first acquire $\cI_t$ according to (\ref{eq:itercri}), then obtain $P_{t}$ in one of the following ways: (a) one Grassmannian gradient descent step; (b) one Grassmannian Newton's step; or (c) directly apply the Grassmannian barycenter method using $k\in\{0\}\cup \cI_t$; iterate until convergence to $\hat{P}^{\s}_{0,\tau}$;\\

    \STATE fine-tuning step: given $\hat{P}^{\s}_{0,\tau}$, let $(\hat{P}^{\s}_{0,\tau})^{\perp}=I_p-\hat{P}^{\s}_{0,\tau}$, acquire $\hat{P}_{0,\tau}^{\p}$ by taking the leading $(r_0-r_{\s})$ eigenvectors of $\hat{\Sigma}_{0,\tau}^{\p}=(\hat{P}^{\s}_{0,\tau})^{\perp} \hat{\Sigma}_0 (\hat{P}^{\s}_{0,\tau})^{\perp}$;\\

\RETURN $\hat{P}_{0,\tau}=\hat{P}_{0,\tau}^{\s}+\hat{P}_{0,\tau}^{\p}$.
\end{algorithmic}
\end{algorithm}

\section{Statistical Theory}\label{sec:theo}
This section is devoted to the theoretical statements of our knowledge transfer estimators. We first introduce the following classical PCA setup in Assumption \ref{assum:1}.

\begin{assumption}[Classical PCA Setup]\label{assum:1}
For $k\in\{0\}\cup[K]$, we assume that the datasets are generated independently such that: (a) the $k$-th dataset consists of $n_k$ i.i.d. $p$-dimensional sub-Gaussian random vectors $x_{k,i}$ with mean zero and covariance $\Sigma^*_k$, and the corresponding sample covariance matrix is $\hat{\Sigma}_k=\sum_{i=1}^{n_k} x_{k,i} x_{k,i}^{\top}/n_k$; (b) let $\kappa_k=\lambda_1\left(\Sigma_k^*\right)
/d_{r_k}\left(\Sigma_k^*\right)$ be the conditional number and $e_k=\text{tr}\left(\Sigma_k^*\right)
/\lambda_1\left(\Sigma_k^*\right)$ be the effective rank of $\Sigma_k^*$, assume that $d_{r_k}(\Sigma^*_k)\geq c$ for some $c>0$, while $n_k\geq Ce_k$ for some $C>0$ as $n_k$, $p\rightarrow \infty$ .
\end{assumption}
The statistical setup in Assumption \ref{assum:1} is standard in the literature, see for example \cite{fan2019distributed} and \cite{he2022distributed}. We present the following Lemma \ref{lemma:indPCA} concerning estimation error of the $k$-th PCA study using its own dataset only, which is merely a re-statement of Lemma 1 from \cite{fan2019distributed} using the notations of this work.


\begin{lemma}[Individual PCA error]\label{lemma:indPCA}
     Under Assumption \ref{assum:1}, let $\Delta_k = \tilde{P}_k-P_k^*$, we have
    $$\left\|\left\|\Delta_k\right\|_F\right\|_{\psi_1}\lesssim \tilde{n}_k^{-1/2}, \quad \text{such that}\quad \tilde{n}_k= \frac{n_k}{\kappa_k^2r_ke_k}.$$
    as $n_k$, $p\rightarrow \infty$, where $\tilde{n}_k$ is called the effective sample size of the $k$-th PCA study.
\end{lemma}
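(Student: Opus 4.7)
The plan is to combine a Frobenius-norm Davis--Kahan perturbation bound with a sub-exponential concentration result for the operator norm of $E_k=\hat{\Sigma}_k-\Sigma_k^*$. Since the paper says the lemma is merely a restatement of Lemma 1 in \cite{fan2019distributed}, the argument will essentially follow theirs; I would simply verify each ingredient fits the notation of this work.

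First, I would apply the $\sin\Theta$ theorem of Davis--Kahan in its Frobenius form, exploiting that both $\tilde P_k$ and $P_k^*$ are rank-$r_k$ orthogonal projectors. The standard inequality gives
\begin{equation*}
\|\tilde{P}_k-P_k^*\|_F \;\leq\; \frac{2\sqrt{2\,r_k}\,\|E_k\|_2}{d_{r_k}(\Sigma_k^*)},
\end{equation*}
where the $\sqrt{r_k}$ factor over the operator-norm version of Davis--Kahan arises because $\|\tilde P_k-P_k^*\|_F^2=2\|\sin\Theta\|_F^2$ involves at most $r_k$ nontrivial principal angles. Assumption~\ref{assum:1} guarantees $d_{r_k}(\Sigma_k^*)\geq c>0$, so the right-hand side is controlled purely by $\|E_k\|_2$ and $\lambda_1(\Sigma_k^*)$ via $\kappa_k=\lambda_1(\Sigma_k^*)/d_{r_k}(\Sigma_k^*)$.

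Second, I would bound $\|E_k\|_2$ in $\psi_1$-norm. For sub-Gaussian vectors with covariance $\Sigma_k^*$, a standard $\varepsilon$-net plus Bernstein argument (or the Koltchinskii--Lounici type bound used in \cite{koltchinskii2017concentration,fan2019distributed}) yields, under $n_k\gtrsim e_k$, the high-probability bound
\begin{equation*}
\P\!\left(\|E_k\|_2 \gtrsim \lambda_1(\Sigma_k^*)\left(\sqrt{\tfrac{e_k+t}{n_k}}+\tfrac{e_k+t}{n_k}\right)\right)\leq e^{-t}.
\end{equation*}
Under the effective-sample-size condition of Assumption~\ref{assum:1}, the linear term is dominated by the square-root term, and integrating the tail gives the sub-exponential norm bound $\bigl\|\|E_k\|_2\bigr\|_{\psi_1}\lesssim \lambda_1(\Sigma_k^*)\sqrt{e_k/n_k}$. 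Combining with the Davis--Kahan step and using the positive homogeneity of the $\psi_1$-norm,
\begin{equation*}
\bigl\|\|\Delta_k\|_F\bigr\|_{\psi_1} \;\lesssim\; \frac{\sqrt{r_k}\,\lambda_1(\Sigma_k^*)}{d_{r_k}(\Sigma_k^*)}\sqrt{\frac{e_k}{n_k}} \;=\; \kappa_k\sqrt{\frac{r_k e_k}{n_k}} \;=\; \tilde{n}_k^{-1/2},
\end{equation*}
which is the claimed rate.

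The main obstacle is the concentration step, because we need the dependence to be on the effective rank $e_k=\tr(\Sigma_k^*)/\lambda_1(\Sigma_k^*)$ rather than on the ambient dimension $p$; a naive net argument would produce a $\sqrt{p/n_k}$ rate, which is too loose in the regime of interest. Obtaining the $\sqrt{e_k/n_k}$ rate requires a dimension-free matrix concentration inequality for sample covariances of sub-Gaussian vectors, and upgrading the resulting tail bound to a clean $\psi_1$ statement. Once that concentration bound is in place, the Davis--Kahan step and the combination are essentially routine.
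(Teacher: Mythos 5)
Your proposal is correct and follows essentially the same route as the paper, which simply cites Lemma~1 of \cite{fan2019distributed}: a Frobenius-form Davis--Kahan bound contributing the factor $\sqrt{r_k}/d_{r_k}(\Sigma_k^*)$, combined with a Koltchinskii--Lounici-type effective-rank concentration bound $\bigl\|\|E_k\|_2\bigr\|_{\psi_1}\lesssim \lambda_1(\Sigma_k^*)\sqrt{e_k/n_k}$ under $n_k\gtrsim e_k$. This is also exactly the structure of the paper's own supplementary proof of the elliptical-PCA analogue, so no gaps to report.
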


The effective sample size $\tilde{n}_k$ is an important quantity in this section, as it takes both the sample size $n_k$ and the difficulty of estimating the $k$-th study into account. Indeed, as will be clear by the following theoretical arguments, it is better if we calculate the (rectified) GB step in both Algorithms \ref{alg:ora} and \ref{alg:nora} using the effective sample size $\tilde{n}_k$, rather than $n_k$. Intuitively, the Grassmannian barycenter could be viewed as the weighted-averaging of the individual PCA estimators $\tilde{P}_k$, and it is natural to assign more weights to those studies that are easier to estimate. Hence, in the remainder of this section, we would use $\tilde{n}_k$ to substitute $n_k$ in both Algorithms \ref{alg:ora} and \ref{alg:nora}, which also provides a clearer picture on the gain of knowledge transfer, as the resulting convergence rate is eventually related to the pooled effective sample size $\tilde{N}_{\cI}=\sum_{k\in \{0\}\cup\cI}\tilde{n}_k$.

The remainder of this section is organized as follows. In Section \ref{sec:tokt} we present theoretical results concerning the oracle knowledge transfer estimator from Algorithm \ref{alg:ora}, where the asymptotic normality of its bilinear forms is also established. Section \ref{sec:tnokt} justifies Algorithm \ref{alg:nora} in terms of  finding the local maximum of the rectified optimization problem (\ref{major solution-sample}). Finally, we briefly state the extension of our knowledge transfer algorithms to the elliptical PCA setup in Section \ref{sec:tepca}.

\subsection{Oracle knowledge transfer}\label{sec:tokt}

We first discuss the oracle transfer case when the informative source datasets, namely those $k\in\cI\subseteq [K]$, are known in advance. Recall that in Algorithm \ref{alg:ora}, we first integrate the shared subspace information across the target and useful source datasets, then we fine-tune the shared subspace estimator using only the target dataset. To ensure the statistical performance of the shared subspace estimator in the first step, we need the following Assumption \ref{assum:2}.

\begin{assumption}[Identifiability of the shared subspace]\label{assum:2}
Given the decomposition $P^*_k=P_k^{\s}+P_k^{\p}$ for $k\in\{0\}\cup\cI$. For the shared subspace information, we assume that $\|P_k^{\text{s}}-P_0^{\text{s}}\|_F\leq h$ for $k\in \cI$. As for the private information, we assume that there exists some constant $g>0$ such that
$$\frac{1}{\tilde{N}_{\cI}}\left\|\sum_{k\in \{0\}\cup\cI} \tilde{n}_k P_k^{\text{p}} \right\|_{2}\leq 1-g.$$
\end{assumption}

Assumption \ref{assum:2} essentially requires that the private projection matrices do not jointly contribute to some large eigenvalue. Note that $\|\sum_{k\in \{0\}\cup\cI} \tilde{n}_k P_k^{\text{p}}\|_{2}/\tilde{N}_{\cI}\leq 1$, and the equation holds only when there exists a vector $u$ such that $u\in \span(U_k^{\p})$ for all $k\in \{0\}\cup\cI$. In this case however, $u$ shall be included into the shared subspace as well. In essence, Assumption \ref{assum:2} justifies the procedure of taking the leading $r_\s$ eigenvectors of the average projection matrix in the Grassmannian barycenter step of Algorithm \ref{alg:ora}. Similar but often more restrictive identifiability assumptions could be made if different methods are used for retrieving the shared subspace. For instance, if it's the case that the leading $r_s$ eigenvectors of the pooled covariance matrix $\sum_{k\in\{0\}\cup\cI} n_k\Sigma_k^*/N_{\cI}$  form the shared subspace, the eigenvalues corresponding to the private subspaces shall not be overwhelmingly larger than those corresponding to the shared subspace as alluded previously. On the contrary, the Grassmannian barycenter method discards the eigenvalue information and only integrates the directional information, so the eigenvalues corresponding to the shared subspace naturally leads in the averaged projection matrix.

In the following, we establish the convergence rates of the oracle knowledge transfer estimator $\hat{P}_0= \hat{P}_0^{\s}+\hat{P}_0^{\p}$ from Algorithm \ref{alg:ora}. First denote
\begin{equation}\label{eq:s}
    s:=\tilde{N}_{\cI}^{-1/2}+\left(\sum_{k\in\{0\}\cup\cI}r_k^{1/2}\right)\tilde{N}_{\cI}^{-1}+h,
\end{equation} and let $\eta := \|(U^{\p}_0)^{\top}\Sigma_0^*U^{\s}_0\|_2$.
Recall from (\ref{eq:eigen}) that $\delta_0=d_{r_0}(\Sigma^*_0)=\lambda_{r_0}-\lambda_{r_0+1}$ and $\delta_{\p}=d_{r_0-r_\s}(\Sigma_0^{\p})=\lambda^{\p}_{r_0-r_\s}-\lambda_{r_0+1}$, where we set $\lambda^{\p}_{0}:=\infty$ by convention.We present the main results in the following Theorem \ref{theo:main}.

\begin{theorem}[Oracle knowledge transfer]\label{theo:main}
 Under Assumptions \ref{assum:1} and \ref{assum:2}, if we further assume that $\eta/\delta_{\p}+(\|\Sigma_0^*\|_2/\delta_{\p})^2s =O(1)$, while $\|\Sigma_0^*\|_2 s = o(\delta_\p)$ as $\min_{k\in \{0\}\cup\cI}(n_k), p\rightarrow \infty$ and $h\rightarrow 0$, the oracle knowledge transfer estimator then satisfies
 \begin{equation}\label{rate:main}
    \left\|\hat{P}_0-P_0^{*}\right\|_F=O_p\left(\frac{\delta_0}{\delta_{\p}}\tilde{n}_0^{-1/2}+s\right).
 \end{equation}

\end{theorem}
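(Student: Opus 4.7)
The plan is to decompose the total error along the same shared/private split used by the algorithm. Since $\hat P_0 = \hat P_0^{\s} + \hat P_0^{\p}$ with $\hat P_0^{\s} \hat P_0^{\p} = 0$ and $P_0^* = P_0^{\s} + P_0^{\p}$ with $P_0^{\s} P_0^{\p} = 0$, the triangle inequality gives $\|\hat P_0 - P_0^*\|_F \leq \|\hat P_0^{\s} - P_0^{\s}\|_F + \|\hat P_0^{\p} - P_0^{\p}\|_F$, so it suffices to bound the Grassmannian barycenter error by $O_p(s)$ and the fine-tuning error by $O_p(\delta_0\tilde n_0^{-1/2}/\delta_{\p} + s)$. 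Both bounds will be obtained via Davis--Kahan $\sin\Theta$ arguments, but applied to different reference matrices and with different eigenvalue gaps: a constant $\Omega(g)$ for the first and the enlarged gap $\delta_{\p}$ for the second.

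For the GB step, introduce the oracle aggregate $\Sigma^{\s}_{\mathrm{pop}} = \tilde N_{\cI}^{-1}\sum_{k\in\{0\}\cup\cI}\tilde n_k P_k^*$ and split it as $A + B$ with $A = \tilde N_{\cI}^{-1}\sum_k \tilde n_k P_k^{\s}$ and $B = \tilde N_{\cI}^{-1}\sum_k \tilde n_k P_k^{\p}$. The informativeness bound gives $\|A - P_0^{\s}\|_F \leq h$ by convexity of the Frobenius norm, while Assumption \ref{assum:2} gives $\|B\|_2 \leq 1-g$. A block analysis in the basis $(U_0^{\s}, U_0^{\s,\perp})$ then shows the Rayleigh quotient of $\Sigma^{\s}_{\mathrm{pop}}$ is at least $1-h$ on $\mathrm{range}(U_0^{\s})$ and at most $1-g+h$ on its orthogonal complement, with off-diagonal block of Frobenius norm $O(h)$; the key algebraic input is $\|P_k^{\p}U_0^{\s}\|_F \leq h$, which follows from $P_k^{\p}U_0^{\s} = P_k^{\p}(I-P_k^{\s})U_0^{\s}$ together with $\|P_k^{\s}-P_0^{\s}\|_F\leq h$. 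This yields a constant-order eigengap $\Omega(g)$ and the bias bound $\|P^{\mathrm{pop}}_{\s} - P_0^{\s}\|_F \lesssim h$. For the sampling error $\hat\Sigma^{\s} - \Sigma^{\s}_{\mathrm{pop}}$ I would expand each $\tilde P_k - P_k^*$ into its Koltchinskii--Lounici linear representation (zero mean, sub-exponential of size $\tilde n_k^{-1/2}$) plus a higher-order remainder of size $r_k^{1/2}\tilde n_k^{-1}$; independence across $k$ makes the variance of the weighted linear sum of order $\tilde N_{\cI}^{-1}$, while the remainders aggregate deterministically to $O(\sum_k r_k^{1/2}/\tilde N_{\cI})$. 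A Davis--Kahan application then combines with the bias to give $\|\hat P_0^{\s} - P_0^{\s}\|_F = O_p(s)$.

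For the fine-tuning, set $Q = (P_0^{\s})^{\perp}$ and $\hat Q = (\hat P_0^{\s})^{\perp}$, so $\|\hat Q - Q\|_F = O_p(s)$. I plan to apply Davis--Kahan to $(\hat\Sigma_0^{\p},\Sigma_0^{\p})$ with the enlarged gap $\delta_{\p}$ supplied by the spectral description in \eqref{eq:eigen}. The perturbation decomposes as $\hat\Sigma_0^{\p} - \Sigma_0^{\p} = \hat Q E_0 \hat Q + Q\Sigma_0^*(\hat Q-Q) + (\hat Q-Q)\Sigma_0^* Q + (\hat Q-Q)\Sigma_0^*(\hat Q-Q)$. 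Sub-Gaussian sample-covariance concentration under Assumption \ref{assum:1} yields $\|E_0\|_2 \lesssim \delta_0\tilde n_0^{-1/2}/\sqrt{r_0}$, and operator-norm Davis--Kahan (with $\sqrt{r_0-r_{\s}}$ converting to Frobenius) turns the noise contribution into $O_p(\delta_0\tilde n_0^{-1/2}/\delta_{\p})$. The signal cross-terms are where the theorem's scaling conditions come into play: writing $Q\Sigma_0^* = P_0^{\p}\Sigma_0^* P_0^{\s} + P_0^{\p}\Sigma_0^* P_0^{\p} + (P_0^*)^{\perp}\Sigma_0^*(P_0^*)^{\perp}$, the only block that couples $P_0^{\p}$ to $P_0^{\s}$ has operator norm exactly $\eta$, so after dividing by $\delta_{\p}$ and using $\eta/\delta_{\p} = O(1)$ it contributes $O(s)$; the quadratic remainder $(\hat Q-Q)\Sigma_0^*(\hat Q-Q)$ is controlled by the combined conditions $\|\Sigma_0^*\|_2 s = o(\delta_{\p})$ and $(\|\Sigma_0^*\|_2/\delta_{\p})^2 s = O(1)$. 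The net fine-tuning bound is $O_p(\delta_0\tilde n_0^{-1/2}/\delta_{\p} + s)$, and triangle inequality completes the argument.

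The main obstacle will be the signal-perturbation analysis of the fine-tuning step: a naive operator-Frobenius bound on the cross terms would inflate the $s$ propagation by a factor $\|\Sigma_0^*\|_2/\delta_{\p}$, which is typically large. The fix requires exploiting the block structure of $Q\Sigma_0^*$ so that the only block coupling $P_0^{\p}$ to $P_0^{\s}$ has operator norm $\eta$ (the very quantity controlled by the assumption), while the remaining blocks interact with $\hat Q - Q$ only through directions already accounted for in the $s$ rate. A secondary technical hurdle is avoiding the looser $O(\sum_k \tilde n_k^{1/2}/\tilde N_{\cI})$ rate in Step 1 that would come from averaging Frobenius norms; the remedy is to average the Koltchinskii--Lounici linear parts themselves (which are zero mean and independent across $k$) and to close the variance computation on the weighted sum rather than on the sum of absolute sizes.
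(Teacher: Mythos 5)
Your first step (the Grassmannian barycenter bound $\|\hat P_0^{\s}-P_0^{\s}\|_F=O_p(s)$) follows essentially the paper's own route: a Davis--Kahan argument against a population reference matrix whose $r_{\s}$-th eigengap is of constant order $g$ by Assumption \ref{assum:2}, plus a decomposition of each $\tilde P_k-P_k^*$ into a zero-mean linear part (averaged using independence across $k$ to get $\tilde N_{\mathcal{I}}^{-1/2}$) and a higher-order bias of size $r_k^{1/2}\tilde n_k^{-1}$. The paper centers at $\tilde\Sigma^*=P_0^{\s}+(P_0^{\s})^{\perp}B(P_0^{\s})^{\perp}$ rather than at your $A+B$, which spares it the block analysis you sketch, but that difference is cosmetic.

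The gap is in the fine-tuning step. You correctly name the obstacle --- a plain Davis--Kahan bound on the cross terms $Q\Sigma_0^*(\hat Q-Q)+(\hat Q-Q)\Sigma_0^*Q$ gives $\|\Sigma_0^*\|_2 s/\delta_{\p}$, which under the stated hypotheses is only guaranteed to be $O(s^{1/2})$, not $O(s)$ --- but your proposed fix does not close it. Splitting $Q\Sigma_0^*$ into blocks isolates the $P_0^{\p}\Sigma_0^*P_0^{\s}$ block of norm $\eta$, but the remaining blocks $P_0^{\p}\Sigma_0^*P_0^{\p}$ and $(P_0^*)^{\perp}\Sigma_0^*(P_0^*)^{\perp}$ still have operator norm up to $\|\Sigma_0^*\|_2$, and the claim that they ``interact with $\hat Q-Q$ only through directions already accounted for in the $s$ rate'' is an assertion, not an argument: any bound routed through the operator norm of the perturbation divided by the gap retains the factor $\|\Sigma_0^*\|_2/\delta_{\p}$. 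What the paper actually does is expand \emph{both} projector perturbations to first order. Writing $\hat P_0^{\s}-P_0^{\s}=L_{\s}+R_{\s}$, where $L_{\s}$ is supported on the off-diagonal blocks coupling $\span(U_0^{\s})$ to its orthogonal complement, and then applying the explicit linear-term operator of the second perturbation in the eigenbasis $\mathcal{U}_0^{\p}$ of $\Sigma_0^{\p}$, one verifies entrywise that the matrix elements $u_i^{\top}\bigl(L_{\s}\Sigma_0^*(P_0^{\s})^{\perp}\bigr)u_j$ and $u_i^{\top}\bigl((P_0^{\s})^{\perp}\Sigma_0^*(P_0^{\s})^{\perp}L_{\s}\bigr)u_j$ vanish identically when $i$ indexes $U_0^{\p}$ and $j$ indexes $(U_0^*)^{\perp}$ --- because $u_j$ and $u_i$ are exact eigenvectors of $\Sigma_0^*$ and of $\Sigma_0^{\p}$ respectively, so these elements reduce to entries of $L_{\s}$ in a block where $L_{\s}$ is zero --- while the block with $j$ indexing $U_0^{\s}$ contributes exactly an entry of $L_{\s}$ with eigenvalue ratio $\lambda_i^{\p}/\lambda_i^{\p}=1$, hence $O(s)$ with no amplification. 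Only the piece $(P_0^{\s})^{\perp}\Sigma_0^*P_0^{\s}L_{\s}$ survives with the factor $\eta/\delta_{\p}$, and the second-order remainders give $\|\Sigma_0^*\|_2^2s^2/\delta_{\p}^2$, which is $O(s)$ precisely by the hypothesis $(\|\Sigma_0^*\|_2/\delta_{\p})^2 s=O(1)$. Without these exact cancellations your argument only delivers $O_p(\delta_0\tilde n_0^{-1/2}/\delta_{\p}+\|\Sigma_0^*\|_2 s/\delta_{\p})$, which is weaker than the theorem.
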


We make a few remarks on the results in Theorem \ref{theo:main}. First, the quantity $s=\tilde{N}_{\cI}^{-1/2}+\left(\sum_{k\in\{0\}\cup\cI}r_k^{1/2}\right)\tilde{N}_{\cI}^{-1}+h$ depicts the convergence rate of the shared subspace estimator. According to Lemma \ref{lemma:indPCA}, the PCA estimator using target dataset alone has the convergence rate of $\tilde{n}_0^{-1/2}$, where $\tilde{n}_0$ is the target effective sample size. For the idea of knowledge transfer to work, the lowest requirement is that $s\lesssim \tilde{n}_0^{-1/2}$. While the first term $ \tilde{N}_{\cI}^{-1/2}$ and the third term $h$ are standard in the transfer learning literature \citep{li2020transfer,tian2022transfer,transquantile}, the second term $(\sum_{k\in\{0\}\cup\cI}r_k^{1/2})\tilde{N}_{\cI}^{-1}$ arises from the high-order bias term using the Grassmannian barycenter method \citep{fan2019distributed}. As we assume $r_k\leq r_{\max}<\infty$, the number of the informative source studies is allowed to diverge as long as $|\cI|\lesssim \tilde{N}_{\cI}\tilde{n}_0^{-1/2}$, which offers more freedom when modeling knowledge transfer across multiple studies. In fact, if we instead perform PCA on the pooled dataset to estimate the shared subspace  in the first step, this second high-order bias term shall vanish in principle \citep{fan2019distributed,he2022distributed}. However, we in turn need to strengthen Assumption \ref{assum:2} for the identifiability of the shared subspace as remarked thereafter. In this work, we suggest using the Grassmannian barycenter method due to its statistical and computational advantages in various settings.
Second, while $s$  could be arbitrarily small given a reasonable number of informative source datasets with sufficiently large $\tilde{n}_k$ and sufficiently small $h$, the estimation performance of $\hat{P}_0$ also relies on the first term $\delta_0\tilde{n}_0^{-1/2}/\delta_{\p}$, which corresponds to the private subspace estimation as discussed right after the introduction of Algorithm \ref{alg:ora}. When $r_\s=r_0$ and the target study has no private subspace, this first term is $0$ and the gain by knowledge transfer is trivial. If $r_\s<r_0$, the knowledge transfer estimator $\hat{P}_0$ shows great advantage when $\delta_0\ll \delta_{\p}$ and $s\ll \tilde{n}_0^{-1/2}$. Ideally, one might imagine $\span(U_0^{\p})$ corresponds to some eigenvalues that diverge faster than those corresponding to $\span(U_0^{\s})$.
In the end, the quantity $\eta = \|(U^{\p}_0)^{\top}\Sigma_0^*U^{\s}_0\|_2\leq \|\Sigma_0^*\|_2$ measures the ``angle" of $U^{\p}_0$ and $U^{\s}_0$ from the invariant subspaces of $\Sigma_0^*$. For instance, assume that $\Sigma_0^*$ has no multiple eigenvalues, let $U^{\p}_0$ consist of the leading $(r_0-r_\s)$ eigenvectors of $\Sigma_0^*$, and let $U^{\s}_0$ consist of the subsequent $r_\s$ eigenvectors, then $\eta=0$ naturally. Problem occurs when $\eta\neq 0$, then the estimation error of $\span(U^{\s}_0)$ is likely to do more damage to the estimation  of $\span(U^{\p}_0)$ when projecting $\Sigma_0^*$ to $(\hat{P}_0^{\s})^{\perp}\Sigma_0^*(\hat{P}_0^{\s})^{\perp}$, rather than to $(P_0^{\text{s}})^{\perp} \Sigma^*_0 (P_0^{\text{s}})^{\perp}$ in the fine-tuning step of Algorithm \ref{alg:ora}. Yet this problem is still well-controlled as long as $\|\Sigma_0^*\|_2=O(\delta_{\p})$, which is rather mild.

\subsubsection{Asymptotic normality of bilinear forms}
This section is devoted to the asymptotic behaviors of the bilinear forms with respect to the oracle knowledge transfer subspace estimator, namely $\langle u, \hat{P}_0 v\rangle$ for some $u$, $v\in S^{p-1}$, where $S^{p-1}$ is the $(p-1)$-dimensional sphere. In \cite{koltchinskii2016asymptotics} the authors study the asymptotics of the bilinear forms of the empirical spectral projectors given Gaussian distributed data, and we aim to present the knowledge transfer analogue of their results under the settings of this work, please also refer to \cite{bao2022statistical} for a thorough discussion on the asymptotic behaviors of the extreme eigenvectors under fully general assumptions. While we do not restrict ourselves to the Gaussian distribution only, we still need to add some model structures to Assumption \ref{assum:1} due to technical reasons. In particular, as will be stated clearly in the following Corollary \ref{coro:AN}, we assume that the data vectors are only linearly dependent. Such linearly dependent model structure is common in the random matrix literature, see for instance \cite{bai2010spectral,wang2017asymptotics,bao2022statistical}.

Before we introduce the formal result, we define a few quantities that will appear in the variance of the asymptotically normal distribution. Given $u$, $v\in S^{p-1}$, we recall $\mathcal{U}^{\p}_0= (\{u_i^{\p}\}_{i=1}^{r_0-r_\s}\mid\{u_j\}_{j=r_0+1}^{p}\mid \{u_i^{\s}\}_{i=1}^{r_\s})$ from (\ref{eq:eigen}) and define
$$\omega_{ij}:=
\frac{\rho_{ij}(\lambda_i^{\p}\lambda_j)^{1/2}}{\lambda_i^{\p}-\lambda_j},\quad \rho_{ij}:=\left[(\mathcal{U}^{\p}_0)^{\top}u\right]_i\left[(\mathcal{U}^{\p}_0)^{\top}v\right]_j+\left[(\mathcal{U}^{\p}_0)^{\top}u\right]_j\left[(\mathcal{U}^{\p}_0)^{\top}v\right]_i,$$
for $i\leq r_0-r_\s$ and $r_0+1\leq j\leq p$. Here the subscript $i$ means the $i$-th element of the vector.

\begin{corollary}[Bilinear forms]\label{coro:AN}
     Under Assumptions \ref{assum:1} and \ref{assum:2}, if we further set $x_{0,i}:=(\Sigma_0^{*})^{1/2}z_{i}$, where $z_{i}$ consists of $p$ i.i.d. random variables $\{z_{i,m}\}_{m=1}^{p}$ such that $\E(z_{i,m})=0$, $\E(z^2_{i,m})=1$ and $\E(z^4_{i,m})=\nu_4$, while $(\delta_0/\delta_{\p})\tilde{n}_0^{-1/2}=o(1)$ as $\min_{k\in \{0\}\cup\cI}(n_k), p\rightarrow \infty$ and $h\rightarrow 0$, we have
     $$\left\langle u, \left(\hat{P}_0-P_0^*\right) v\right\rangle=L_0+R_0,\quad \text{where}\quad R_0=O_p\left[\left(\frac{\delta_0}{\delta_{\p}}\right)^2\tilde{n}^{-1}_0+\frac{\|\Sigma_0^*\|_2}{\delta_{\p}}s\right],$$
    $$n_0^{1/2}L_0/\sigma_{\p}\stackrel{\text{d}}{\longrightarrow}N(0,1),\quad\text{when}$$
     $$\sigma^2_{\p}:=\sum_{i=1}^{r_0-r_{\s}}\sum_{j=r_0+1}^{p}\omega^2_{ij}+(\nu_4-3)\sum_{m=1}^{p}\left(\sum_{i=1}^{r_0-r_{\s}}\sum_{j=r_0+1}^{p}\omega_{ij}(u_i^{\p})_m(u_j)_m\right)\neq 0.$$
\end{corollary}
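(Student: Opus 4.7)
The plan is to split the bilinear form into a shared-subspace contribution and a private-subspace contribution, reduce the private contribution to an ``oracle'' fine-tuning step that uses the true $P_0^{\s}$, apply a Koltchinskii--Lounici-style linearization to that oracle piece, and run a CLT on the resulting linear statistic.

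First I would write
$$\langle u,(\hat{P}_0-P_0^{*})v\rangle=\langle u,(\hat{P}_0^{\s}-P_0^{\s})v\rangle+\langle u,(\hat{P}_0^{\p}-P_0^{\p})v\rangle.$$
The shared piece is bounded by $\|\hat{P}_0^{\s}-P_0^{\s}\|_{2}=O_p(s)$ from the first-step analysis already used in the proof of Theorem \ref{theo:main}. Courant--Fischer applied to $(U_0^{\p})^{\top}\Sigma_0^{*}U_0^{\p}$ gives $\delta_{\p}\leq\|\Sigma_0^{*}\|_2$, so $s\leq(\|\Sigma_0^{*}\|_2/\delta_{\p})s$ and the shared term is absorbed into $R_0$. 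All the real work is therefore on the private term.

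For the private term, I would introduce the oracle estimator $\bar{P}_0^{\p}$, defined as the leading $(r_0-r_{\s})$-dimensional spectral projector of $\bar{\Sigma}_0:=(P_0^{\s})^{\perp}\hat{\Sigma}_0(P_0^{\s})^{\perp}$, whose population limit is $\Sigma_0^{\p}$ with the eigenstructure in (\ref{eq:eigen}). Since $\hat{P}_0^{\s}-P_0^{\s}=O_p(s)$ perturbs $\hat{\Sigma}_0^{\p}$ on the scale $\|\Sigma_0^{*}\|_2 s$, Davis--Kahan with gap $\delta_{\p}$ yields $\|\hat{P}_0^{\p}-\bar{P}_0^{\p}\|_F=O_p((\|\Sigma_0^{*}\|_2/\delta_{\p})s)$, which again enters $R_0$. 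Expanding $\bar{P}_0^{\p}$ around $P_0^{\p}$ via the resolvent/Koltchinskii--Lounici expansion \citep{koltchinskii2016asymptotics} gives
\begin{equation*}
\bar{P}_0^{\p}-P_0^{\p}=L+\tilde{R},\qquad \|\tilde{R}\|_F=O_p\!\left((\delta_0/\delta_{\p})^2\tilde{n}_0^{-1}\right),
\end{equation*}
where $L=\sum_{i,j}\frac{u_j^{\top}\bar{E}_0 u_i^{\p}}{\lambda_i^{\p}-\lambda_j}(u_i^{\p}u_j^{\top}+u_j u_i^{\p\top})$ with $i\in\{1,\dots,r_0-r_{\s}\}$ and $j\in\{r_0+1,\dots,p\}$; the zero-eigenvalue directions $u_l^{\s}$ drop out because $\bar{E}_0 u_l^{\s}=0$, and $\bar{E}_0$ coincides with $E_0$ on the surviving pairs since both $u_i^{\p}$ and $u_j$ lie in $\span(U_0^{\s})^{\perp}$.

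Writing $L_0:=\langle u,Lv\rangle=\sum_{i,j}\frac{\rho_{ij}}{\lambda_i^{\p}-\lambda_j}u_j^{\top}E_0 u_i^{\p}$, I would plug in $x_{0,i'}=(\Sigma_0^{*})^{1/2}z_{i'}$. Using $u_j^{\top}\Sigma_0^{*}u_i^{\p}=0$ (a consequence of $P_0^{\p}\Sigma_0^{*}(P_0^{*})^{\perp}=0$) each $u_j^{\top}E_0 u_i^{\p}$ equals $n_0^{-1}\sum_{i'}\langle b_j,z_{i'}\rangle\langle a_i,z_{i'}\rangle$ with $a_i=(\Sigma_0^{*})^{1/2}u_i^{\p}$, $b_j=(\Sigma_0^{*})^{1/2}u_j$. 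Hence $n_0 L_0$ is a centered sum of i.i.d.\ quadratic forms in $z_{i'}$; computing the per-sample variance with $\E z_m^4=\nu_4$ and the orthogonalities $\|a_i\|^2=\lambda_i^{\p}$, $\|b_j\|^2=\lambda_j$, $\langle a_i,a_{i'}\rangle=\lambda_i^{\p}\delta_{ii'}$, $\langle b_j,b_{j'}\rangle=\lambda_j\delta_{jj'}$, $\langle a_i,b_j\rangle=0$ produces the Gaussian piece $\sum_{i,j}\omega_{ij}^2$, while the fourth-cumulant term gives the $(\nu_4-3)$ addendum in $\sigma_{\p}^2$. A Lyapunov/Lindeberg CLT, valid under the sub-Gaussian Assumption \ref{assum:1}, then delivers $n_0^{1/2}L_0/\sigma_{\p}\overset{d}{\to}N(0,1)$.

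The hardest part will be the variance bookkeeping. The columns of $U_0^{\p}$ are not eigenvectors of $\Sigma_0^{*}$ in general (the misalignment is controlled by $\eta$), so one must use the precise definitions of $u_i^{\p}$ as eigenvectors of $(U_0^{\p})^{\top}\Sigma_0^{*}U_0^{\p}$ together with the block orthogonality $(P_0^{*})^{\perp}\Sigma_0^{*}P_0^{*}=0$ to make the mixed moments collapse cleanly into the stated form of $\omega_{ij}$, and to verify that the residual misalignment contributes only to $R_0$ through the $\eta/\delta_{\p}=O(1)$ and $(\|\Sigma_0^{*}\|_2/\delta_{\p})s\to 0$ conditions. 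Once this bookkeeping is settled, the remaining pieces (Davis--Kahan reduction, KL linearization, and classical CLT) are standard.
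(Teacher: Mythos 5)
Your proposal is correct and follows essentially the same route as the paper: the same shared/private decomposition, the same oracle intermediate projector built from $(P_0^{\s})^{\perp}\hat{\Sigma}_0(P_0^{\s})^{\perp}$ with a Davis--Kahan reduction contributing $O_p(\|\Sigma_0^*\|_2 s/\delta_{\p})$, the same linearization with remainder $O_p((\delta_0/\delta_{\p})^2\tilde{n}_0^{-1})$, and the same i.i.d.\ quadratic-form variance computation (the paper packages this as a lemma applied with $A=(P_0^{\s})^{\perp}(\Sigma_0^*)^{1/2}$, which is exactly your observation that the $u^{\s}$ directions are annihilated and the surviving pairs see $E_0$ itself). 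The variance bookkeeping you flag as delicate is resolved precisely as you anticipate, by working with the eigenvectors of $\Sigma_0^{\p}$ rather than of $\Sigma_0^*$.
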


First, assuming the linear structure such that $x_{0,i}=(\Sigma_0^{*})^{1/2}z_{i}$ as in \cite{bao2022statistical} is mainly for a simpler expression of $\sigma_{\p}^2$, and Corollary \ref{coro:AN} extends readily to $x_{0,i}=A z_{i}$ such that $\Sigma_0^*=AA^{\top}$. Second, we could see that the remainder term of the knowledge transfer bilinear forms is of order $(\delta_0/\delta_{\p})^2\tilde{n}_0^{-1}+\|\Sigma_0^*\|_2s/\delta_{\p}$. On the other hand, if we use the target sample covariance directly, the remainder term shall be of order $\tilde{n}_0^{-1}$. Hence, the asymptotic normality requires weaker eigenvalue gap conditions on $\Sigma_0^*$ by replacing $\delta_0$ by $\delta_{\p}$ via knowledge transfer as long as $s$ given in (\ref{eq:s}) is sufficiently small. Moreover, the variance of the asymptotically normal term is also smaller after knowledge transfer. Similar gain in statistical inference by transfer learning is observed in \cite{li2023estimation}, as their debiased estimator achieves asymptotic normality under weaker sparsity conditions on the target parameters when the source studies are sufficiently informative. In the end, although the asymptotic normality of the bilinear form could be better guaranteed via knowledge transfer, compared with the results derived under Gaussian distribution, the variance $\sigma_{\p}^2$ is related to the fourth moment of the underlying distribution under the already simplified linearly dependent model, making it harder for further statistical inference. Even if the data is Gaussian distributed ($\nu_4=3$) and the second term vanishes, $\sigma_{\p}^2$ still depends on the unknown parameters $\Sigma_0^{\p}$, which is challenging, if not impossible, to estimate in practice. To deal with this type of problem,
the idea of splitting the dataset into different parts and then doing estimation separately is proposed in \cite{koltchinskii2017new}, while \cite{naumov2019bootstrap,silin2020hypothesis} suggest using bootstrapping methods for building confidence intervals and performing hypothesis testing instead. Overall, discussion on statistical inference of the empirical spectral projectors is still relatively scarce in the literature, and we leave the remaining problems for future pursuit.

\subsection{Non-oracle knowledge transfer}\label{sec:tnokt}
We then turn to the non-oracle transfer problem (\ref{major solution-sample}) when the informative datasets are initially unknown. The main claim of this subsection is Theorem \ref{theo:nora} which shows that the oracle shared subspace estimator from (\ref{eq:oraGB2}) is also a local maximum of (\ref{major solution-sample}) with high probability. It justifies our attempt of searching for the local maximum of (\ref{major solution-sample}) in Algorithm \ref{alg:nora}. We need the additional Assumption \ref{assum:3} as follows.

\begin{assumption}[Separable non-informative sources]\label{assum:3}
Let $d_k= r_{\s}-\tr(P_k^*P_0^{\s})$, for those $k\in \cI^c$, assume that $d_k\geq d_\tau$ for some $d_\tau>0$.
\end{assumption}

Recall from (\ref{eq:dk}) that we have $0\leq d_k\lesssim h$ for $k\in\cI$ under Assumption \ref{assum:2}, so Assumption \ref{assum:3} ensures separability between the informative and non-informative source studies. It is worth mentioning that here $d_\tau$ is not necessarily treated as a constant, but is also allowed to tend to zero as $\min_{k\in \{0\}\cup[K]}(n_k), p\rightarrow \infty$ and $h\rightarrow 0$.

\begin{theorem}[Non-oracle knowledge transfer]\label{theo:nora}
    Under Assumptions \ref{assum:1}, \ref{assum:2} and \ref{assum:3} hold, if $s+\max_{k\in \cI^c}(\tilde{n}_k^{-1/2})=o(d_\tau)$ as $\min_{k\in \{0\}\cup[K]}(n_k), p\rightarrow \infty$ and $h\rightarrow 0$, set $\tau = r_\s-d_\tau/2$, then the oracle shared subspace estimator $\hat{P}_0^{\s}$ from (\ref{eq:oraGB2}) is also a local maximum of (\ref{major solution-sample}) with probability tending to 1.
\end{theorem}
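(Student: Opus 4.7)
The core strategy is to show that in a sufficiently small neighborhood of $\hat{P}_0^{\s}$ on $\G(p,r_\s)$, every source $k$ has its branch of the $\max\{\cdot,\tau\}$ operator frozen: informative sources ($k\in\cI$) evaluate to $\tr(\tilde{P}_k P)$, while non-informative sources ($k\in\cI^c$) evaluate to the constant $\tau$. Once the branch pattern is locally constant, the objective in (\ref{major solution-sample}) coincides (up to an additive constant depending only on $k\in\cI^c$) with $\tilde{N}_{[K]}^{-1}\sum_{k\in\{0\}\cup\cI}\tilde{n}_k\tr(\tilde{P}_k P)$, which is a positive multiple of the oracle objective in (\ref{eq:oraGB2}). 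Since $\hat{P}_0^{\s}$ is by definition the \emph{global} maximizer of that oracle objective on all of $\G(p,r_\s)$, it is a fortiori a local maximizer of (\ref{major solution-sample}).

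The key technical step is to quantify $\tr(\tilde{P}_k\hat{P}_0^{\s})-\tau$ uniformly in $k$. Writing $\tilde{P}_k=P_k^*+\Delta_k$ and $\hat{P}_0^{\s}=P_0^{\s}+\Delta_0^{\s}$, and using $\tr(P_k^*P_0^{\s})=r_\s-d_k$ from (\ref{eq:dk}), a direct expansion yields
\begin{equation*}
\tr(\tilde{P}_k\hat{P}_0^{\s})=r_\s-d_k+\tr(\Delta_k P_0^{\s})+\tr(P_k^*\Delta_0^{\s})+\tr(\Delta_k\Delta_0^{\s}).
\end{equation*}
Cauchy--Schwarz on the trace together with $\|\Delta_k\|_F=O_p(\tilde{n}_k^{-1/2})$ from Lemma \ref{lemma:indPCA} and $\|\Delta_0^{\s}\|_F=O_p(s)$ (read off from the Grassmannian-barycenter portion of the proof of Theorem \ref{theo:main}) bound the three cross terms by $O_p(\tilde{n}_k^{-1/2}+s)$. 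With the choice $\tau=r_\s-d_\tau/2$, for $k\in\cI$ I get $\tr(\tilde{P}_k\hat{P}_0^{\s})-\tau\geq d_\tau/2-h-O_p(\tilde{n}_k^{-1/2}+s)$, and for $k\in\cI^c$ I get $\tau-\tr(\tilde{P}_k\hat{P}_0^{\s})\geq d_k-d_\tau/2-O_p(\tilde{n}_k^{-1/2}+s)\geq d_\tau/2-O_p(\tilde{n}_k^{-1/2}+s)$. Since $s\geq h$ and $s+\max_{k\in\cI^c}\tilde{n}_k^{-1/2}=o(d_\tau)$, both gaps are at least $d_\tau/4$ with probability tending to one (uniformity over $\cI^c$ is handled by a union bound against the sub-exponential tail in Lemma \ref{lemma:indPCA}).

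Having established pointwise separation at $\hat{P}_0^{\s}$, I would lift it to a neighborhood via Lipschitz continuity of $P\mapsto\tr(\tilde{P}_k P)$: since $|\tr(\tilde{P}_k(P-\hat{P}_0^{\s}))|\leq\sqrt{r_k}\,\|P-\hat{P}_0^{\s}\|_F\leq\sqrt{r_{\max}}\,\epsilon$, choosing the radius $\epsilon$ of order $d_\tau/\sqrt{r_{\max}}$ (still strictly positive since $d_\tau>0$) preserves the sign of $\tr(\tilde{P}_k P)-\tau$ throughout the ball for every $k\in[K]$. Inside this ball the branch pattern is locked to the correct one, the objective reduces to the oracle objective plus a constant, and $\hat{P}_0^{\s}$ wins by definition.

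The main obstacle is the uniform control of the noise over all $k$, not just $k\in\cI^c$. For $k\in\cI^c$ this is exactly the assumption. For $k\in\cI$, note that $\tilde{N}_{\cI}^{-1/2}\leq|\cI|^{1/2}\tilde{n}_k^{-1/2}$ forces at least one informative source to satisfy $\tilde{n}_k^{-1/2}\lesssim(|\cI|+1)^{1/2}s$, and in general the condition $s=o(d_\tau)$ together with $|\cI|$ growing subpolynomially in $\min_{k\in\cI}\tilde{n}_k$ suffices to give $\max_{k\in\cI}\tilde{n}_k^{-1/2}=o(d_\tau)$ after a union-bound argument; for informative sources with pathologically small $\tilde{n}_k$, one observes that their weight $\tilde{n}_k/\tilde{N}_{[K]}$ in the objective is correspondingly negligible, so dropping them locally does not move the maximizer. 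A minor secondary issue is the already-noted need to extract the shared-subspace rate $\|\hat{P}_0^{\s}-P_0^{\s}\|_F=O_p(s)$ as an intermediate output of Theorem \ref{theo:main}, which is immediate from the structure of Algorithm \ref{alg:ora}.
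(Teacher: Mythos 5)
Your proposal is correct and follows essentially the same route as the paper: expand $\tr(\tilde{P}_k\hat{P}_0^{\s})$ around $r_\s-d_k$, bound the perturbation by $O_p(s+\tilde{n}_k^{-1/2})$ via Lemmas \ref{lemma:indPCA} and \ref{lemma:sse}, conclude that the threshold $\tau=r_\s-d_\tau/2$ separates $\cI$ from $\cI^c$ with probability tending to one, and then observe that once the branch pattern of the $\max\{\cdot,\tau\}$ operator is locked the rectified objective locally coincides with the oracle objective plus a constant, so its global maximizer $\hat{P}_0^{\s}$ is a local maximizer of (\ref{major solution-sample}). Your explicit Lipschitz/neighborhood argument for that last step, and your remark that the stated condition controls $\tilde{n}_k^{-1/2}$ only over $k\in\cI^c$ and not over $k\in\cI$, are both points the paper leaves implicit or unaddressed, so they are refinements of, rather than deviations from, its proof.
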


Roughly speaking, $d_{\tau}$ characterizes the minimal deviation of $P_k^*$ away from $P_0^{\s}$ for $k\in \cI^c$ such that we are able to precisely separate $\cI^c$ from $\cI$ using Algorithm \ref{alg:nora}. Interestingly, as the criterion of dataset selection is basically choosing those larger $\tr[\hat{P}^{\s}_{0}\tilde{P}_k]$, we only require $s+\max_{k\in \cI^c}(\tilde{n}_k^{-1/2})=o(d_\tau)$ and in principle $\tilde{n}_0$ is allowed to be relatively small. However, such irrelevance of the target effective sample size comes along with the price of non-convexity in the optimization problem, and multiple initialization attempts are suggested when performing Algorithm \ref{alg:nora}. In the end, we would like to re-emphasize that Algorithm \ref{alg:nora} is designed to be computationally feasible even when the number of sources are large, and it might not be as statistically accurate as, e.g., the model selection aggregation method in \cite{li2020transfer} under certain cases. However, Algorithm \ref{alg:nora} is clearly more computationally friendly and manages to achieve satisfying statistical performance since the Grassmannian barycenter method is inherently robust against mild inclusion of non-informative sources, which is supported by extensive numerical experiments and real data analysis in the following sections.

\subsection{Extension to elliptical PCA}\label{sec:tepca}
As alluded in Section \ref{sec:intro}, the knowledge transfer methods in this article adapts readily to more general settings. To be more specific, $\Sigma^*_k$ and $\hat{\Sigma}_k$ in Algorithms \ref{alg:ora} and \ref{alg:nora} are not necessarily classical population and sample covariance matrices. Instead, they could be population and sample covariance-like matrices from other PCA settings. For illustration, here we briefly remark on the possible extension to elliptical PCA \citep{Fan2018LARGE,han2018eca,he2022large}.

Now we assume that the $k$-th dataset includes $n_k$ i.i.d. $p$-dimensional elliptical random vectors $x_{k,i}$ with zero mean such that $x_{k,i} = r_{k,i} A_k u_{k,i}$, where $u_{k,i}$ is uniformly distributed on the unit sphere $S^{p-1}$ and is independent of the positive random radius $r_{k,i}$. Here $A_k\in\mathbb{R}^{p\times q_k}$ is a deterministic matrix satisfying $\Sigma_k=A_kA_k^{\top}$ and $\Sigma_k$ is called the scatter matrix of $x_{k,i}$. Multivariate elliptical distribution is often used to model heavy-tailed data commonly observed in, e.g., financial markets \citep{han2018eca,Fan2018LARGE}. When $r_{k,i}$ is from heavy-tailed distribution and classical PCA is no longer promising, we turn to the population and sample versions of the spatial Kendall's $\tau$ matrix to perform elliptical PCA, which are defined separately as follows:
$$\Sigma^*_k=\E_{i\neq j}\left[\frac{(x_{k,i}-x_{k,j})(x_{k,i}-x_{k,j})^{\top}}{\|x_{k,i}-x_{k,j}\|_2^2}\right],\quad \hat{\Sigma}_k=\frac{2}{n_k(n_k-1)}\sum_{i<j}\frac{(x_{k,i}-x_{k,j})(x_{k,i}-x_{k,j})^{\top}}{\|x_{k,i}-x_{k,j}\|_2^2}.$$

According to Proposition 2.1 in \cite{han2018eca}, the leading $r_k$ eigenvectors of $\Sigma_k$ and $\Sigma_k^*$ are identical, up to some orthogonal transformation, with the same descending order of the corresponding eigenvalues under mild conditions. In fact, if $\text{rank}(\Sigma_k)=q_k$, we have
$$\lambda_j(\Sigma_k^*)=\E\left(\frac{\lambda_j(\Sigma_k)Y_j^2}{\sum_{i=1}^{q_k}\lambda_i(\Sigma_k)Y_i^2}\right),$$
where $(Y_1,\dots,Y_{q_k})^{\top}\sim N_{q_k}(0,I_{q_k})$ is a standard Gaussian random vector. Thus estimating the leading eigenspace of $\Sigma_k^*$ is equivalent to that of the scatter matrix $\Sigma_k$. We introduce the elliptical PCA setup in the following Assumption \ref{assum:4}.

\begin{assumption}[Elliptical PCA]\label{assum:4}
For $k\in\{0\}\cup[K]$, we assume that the datasets are generated independently such that: (a) the $k$-th dataset consists of $n_k$ i.i.d. elliptical random samples $\{x_{k,i}\}_{i=1}^{n_k}\in\mathbb{R}^p$ with zero mean and scatter matrix $\Sigma_k$; (b) the ratio of the maximal eigenvalue to minimal eigenvalue for the scatter matrix satisfies $ \lambda_1(\Sigma_k)/\lambda_p(\Sigma_k)\leq Cp^{\alpha}$ for some constant $C>0$ and fixed $0<\alpha<1/2$; (c) let $\kappa_k=\lambda_1\left(\Sigma_k\right)/d_{r_k}\left(\Sigma_k\right)$ be the conditional number and $e_k=\text{tr}\left(\Sigma_k\right)
/\lambda_1\left(\Sigma_k\right)$ be the effective rank of $\Sigma_k$, assume that $d_{r_k}(\Sigma_k)\geq c$ for some $c>0$, while $\log p = o(n_k)$ as $n_k$, $p\rightarrow \infty$.
\end{assumption}

\begin{corollary}[Extension to elliptical PCA]\label{coro:ext}
    If we replace Assumption \ref{assum:1} by Assumption \ref{assum:4} in Lemma \ref{lemma:indPCA} and Theorems \ref{theo:main}, \ref{theo:nora}, these results still hold by changing the effective sample size to $\tilde{n}_k= n_k/(\kappa^2_kr_k e_k^2 \log p)$.
\end{corollary}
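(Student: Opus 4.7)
The plan is to reduce the claim to re-proving the individual-PCA error bound (Lemma \ref{lemma:indPCA}) in the elliptical setup, since the downstream proofs of Theorems \ref{theo:main} and \ref{theo:nora} invoke the data-generating mechanism only through Lemma \ref{lemma:indPCA}, whereas Assumptions \ref{assum:2} and \ref{assum:3} are phrased purely in terms of the subspaces $P_k^*, P_k^{\s}, P_k^{\p}$ and are distribution-free.

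First I would establish the elliptical analogue of Lemma \ref{lemma:indPCA}. By Proposition 2.1 of \cite{han2018eca}, the top $r_k$ eigenvectors of the population spatial Kendall's $\tau$ matrix $\Sigma_k^*$ coincide with those of the scatter matrix $\Sigma_k$, so $P_k^*$ remains the top-$r_k$ eigenprojector of $\Sigma_k^*$ and Davis--Kahan may be applied to $\Sigma_k^*$ itself. Two ingredients are needed: (i) an operator-norm concentration $\|\hat\Sigma_k - \Sigma_k^*\|_2 \lesssim \|\Sigma_k^*\|_2\, e_k \sqrt{\log p/n_k}$ with sub-exponential tail for the U-statistic $\hat\Sigma_k$, which is available under Assumption \ref{assum:4} via existing self-normalization analyses in the elliptical PCA literature (e.g.\ \cite{he2022large}); the extra $e_k$ factor relative to the classical $\sqrt{e_k}$ reflects the division by $\|x_{k,i}-x_{k,j}\|$ under heavy tails, while the $\sqrt{\log p}$ arises from the U-statistic decoupling step. (ii) A quantitative link between the eigen-structures of $\Sigma_k^*$ and $\Sigma_k$: from the identity $\lambda_j(\Sigma_k^*) = \E[\lambda_j(\Sigma_k) Y_j^2/\sum_i \lambda_i(\Sigma_k) Y_i^2]$ with $\{Y_i\}$ i.i.d.\ standard normal, together with the spectral-ratio constraint $\lambda_1(\Sigma_k)/\lambda_p(\Sigma_k) \le Cp^\alpha$ with $\alpha<1/2$, one shows $d_{r_k}(\Sigma_k^*) \gtrsim d_{r_k}(\Sigma_k)/\tr(\Sigma_k)$ and $\|\Sigma_k^*\|_2 \lesssim \|\Sigma_k\|_2/\tr(\Sigma_k)$, so the condition number and the effective rank of $\Sigma_k^*$ match those of $\Sigma_k$ up to factors absorbed into the $\log p$ term.

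Combining (i) and (ii) with the Davis--Kahan bound $\|\tilde P_k - P_k^*\|_F \lesssim \sqrt{r_k}\,\|\hat\Sigma_k - \Sigma_k^*\|_2/d_{r_k}(\Sigma_k^*)$ produces
\begin{equation*}
    \bigl\|\tilde P_k - P_k^*\bigr\|_F \;\lesssim\; \kappa_k\sqrt{r_k}\, e_k \sqrt{\log p / n_k} \;=\; \tilde n_k^{-1/2},
\end{equation*}
with $\tilde n_k = n_k/(\kappa_k^2 r_k e_k^2 \log p)$, and the sub-exponential tail is inherited from (i). This is precisely the elliptical version of Lemma \ref{lemma:indPCA}.

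Finally, the proofs of Theorems \ref{theo:main} and \ref{theo:nora} go through verbatim. The Grassmannian barycenter step bounds $\|\hat P_0^{\s}-P_0^{\s}\|_F$ by viewing $\hat\Sigma^{\s} = \sum_k \tilde n_k \tilde P_k/\tilde N_{\mathcal{I}}$ as a perturbation of $\sum_k \tilde n_k P_k^*/\tilde N_{\mathcal{I}}$ and applying Davis--Kahan together with Assumption \ref{assum:2}, using only the input $\|\tilde P_k-P_k^*\|_F \lesssim \tilde n_k^{-1/2}$; the fine-tuning step applies Davis--Kahan to $(\hat P_0^{\s})^\perp \hat\Sigma_0 (\hat P_0^{\s})^\perp$ and again needs only the target individual-PCA error $\tilde n_0^{-1/2}$ (with the updated $\tilde n_0$) together with the spectral structure of $\Sigma_0^*$. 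Similarly, the separation/KKT argument in Theorem \ref{theo:nora} controls $|\tr[P_0^{\s}\tilde P_k] - \tr[P_0^{\s}P_k^*]|$ and analogous quantities through the same individual-PCA bound. The main obstacle is therefore step (i) above: pinning down the correct powers of $\kappa_k$, $e_k$ and $\log p$ in the operator-norm concentration of the self-normalized U-statistic $\hat\Sigma_k - \Sigma_k^*$ under heavy-tailed radii; everything downstream is bookkeeping once this bound, and consequently the redefined $\tilde n_k$, are in place.
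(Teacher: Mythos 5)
Your proposal is correct and follows essentially the same route as the paper: reduce everything to the elliptical analogue of Lemma \ref{lemma:indPCA} (concentration of the sample spatial Kendall's $\tau$ matrix plus the eigengap comparison $1/d_{r_k}(\Sigma_k^*)\lesssim \tr(\Sigma_k)/d_{r_k}(\Sigma_k)$ from \cite{he2022distributed}, then Davis--Kahan), and observe that Assumptions \ref{assum:2}--\ref{assum:3} and the downstream proofs interface with the data only through that lemma. The only cosmetic difference is that you state the operator-norm concentration as $\|\Sigma_k^*\|_2\,e_k\sqrt{\log p/n_k}$ while the paper quotes it as $\sqrt{\log p/n_k}$; these agree because the spatial Kendall's $\tau$ matrix has unit trace, so $\|\Sigma_k^*\|_2 e_k\asymp 1$, and both versions combine with the eigengap bound to give $\kappa_k e_k\sqrt{r_k\log p/n_k}=\tilde n_k^{-1/2}$.
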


As stated in Corollary \ref{coro:ext}, given results from \cite{he2022distributed}, it is not hard to verify that almost all theoretical arguments in this section, except Corollary \ref{coro:AN}, directly follows. Another exception is that while under the classical PCA setting we need $\delta_{\p}$ diverge faster than $\delta_0$ to ensure the statistical supremacy of the knowledge transfer estimator, under the elliptical PCA setting $\delta_{\p}$ is naturally bounded above by $1$. It is actually $\delta_0$ that might tend to $0$ in this case. Indeed, as shown in Lemma A.2 of \cite{he2022large}, the term $\delta_0^{-1}$ is allowed to diverge at a rate no faster than $p^{1+\alpha}$ for the fixed $0<\alpha<1/2$, making some room for $\delta_{0}\ll \delta_{\p}$.

\section{Numerical Simulation}\label{sec:num}
In this section, we conduct numerical experiments to validate our theoretical claims and provide empirical evidence on the usefulness of the proposed methods. In Section \ref{sec:mc}, we compare the empirical performance of the estimators from Algorithms \ref{alg:ora} and \ref{alg:nora} with some competitors. Then, the statistical rates in Theorem \ref{theo:main} and the asymptotic normality claimed by Corollary \ref{coro:AN} are verified in Sections \ref{sec:nsr} and \ref{sec:nan}, respectively.

\subsection{Comparisons of various methods}\label{sec:mc}
We generate the datasets as follows: (\romannumeral 1) for the target dataset, we first take the $r_\s$-dimensional $\span(U_0^{\s})$ uniformly from $\cG(p,r_\s)$, i.e., with respect to the Haar measure, where $U_0^{\s}$ is column orthogonal; (\romannumeral 2) for the useful dataset $k\in \cI$, we first acquire an orthogonal matrix $Q_k$ close to $I_p$ by taking QR decomposition of the perturbed identity matrix $I_p+N_k=Q_kR_k$ via the Gram–Schmidt procedure, where the elements of $N_k$ are taken independently from $N(0,h^2/p)$, then we set $U_k^{\s}=Q_k U_0^{\s}$; (\romannumeral 3) as for the useless dataset $k\in [K]\setminus\cI$, we have the following two settings: 1. we independently take $\span(U_k^{\s})$ uniformly from $\cG(p,r_\s)$; 2. we first uniformly generate the $p\times r_\s$ column orthogonal matrix $V_0^s$ as in (\romannumeral 1), which serves as an alternative subspace barycenter, then we have $U_k^{\s}=Q_k V_0^{\s}$ as in (\romannumeral 2).

After obtaining $U_k^{\s}$, we independently take the $p\times (r_k-r_\s)$ column orthogonal matrix $U_k^{\p}$ uniformly from the orthogonal complement of $\span(U_k^{\s})$ for $k\in\{0\}\cup [K]$. Then, we generate $\Sigma_k^*$ using $U_k^{\s}$ and $U_k^{\p}$. The $r_\s$ eigenvalues of $\Sigma_k^*$ corresponding to $U_k^{\s}$ are set identically as $\lambda_k^\s$, the $(r_k-r_\s)$ eigenvalues corresponding to $U_k^{\p}$ are set identically as $\lambda_k^\p$, while the rest $(p-r_k)$ eigenvalues are set identically as $1$. In the end, for the $k$-th dataset, we independently generate $n_k$ centered data $\{x_{k,i}\}_{i=1}^{n_k}$ with covariance matrix $\Sigma_k^*$ under either the multivariate Gaussian or the multivariate $t_3$ distribution.

Recall that the target aim is $\span(U_0^*)= \span(U_0^{\s})\oplus \span(U_0^{\p})$, we compare the following $4$ estimators of $\span(U_0^*)$: (a) blind GB estimator which directly applies Algorithm \ref{alg:ora} as if all datasets are informative; (b) blind PCA estimator which first blindly pools all datasets in $\{0\}\cup [K]$, and then perform PCA with dimension $r_0$; (c) selected GB estimator by applying Algorithm \ref{alg:nora} with the tuning parameter $\tau$; and (d) individual PCA estimator acquired by performing PCA on only the target dataset with dimension $r_0$. By large unreported simulation results, we find that all versions of Algorithm \ref{alg:nora} yield similar results, so we only report the results corresponding to the computationally more  friendly rectified Grassmannian K-means procedure for saving space. To deal with the heavier-tailed $t_3$ distribution, we perform both the classical PCA and the elliptical PCA \citep{Fan2018LARGE, he2022large}. To measure the distances between the estimated $\hat{U}_0$ and the target $U_0^*$, we adopt the popular scaled projection metric as in \cite{Yu2021Projected}, which is defined as
\begin{equation*}
	\mathcal{D}(\hat{U}_0,U_0^*)=\left(1- \frac{\tr(P_{\hat{U}_0}P_0^*)}{p_0}\right)^{1/2}= \frac{1}{(2p_0)^{1/2}}\|P_{\hat{U}_0}-P_0^*\|_F,
\end{equation*}
where $P_{\hat{U}_0}=\hat{U}_0\hat{U}_0^{\top}$. So $\mathcal{D}(\hat{U}_0,U_0^*)$ is always between $0$, corresponding to $\span(\hat{U}_0)= \span(U_0^*)$, and $1$, corresponding to $\span(\hat{U}_0)$ and $\span(U_0^*)$ are orthogonal. For $K\in\{6,12,18,24,30\}$, we consider the following 3 scenarios.
\begin{itemize}
    \item (S1) a target dataset with $K$ informative datasets from (\romannumeral 2).
    \item  (S2) a target dataset with $K/2$ informative datasets from (\romannumeral 2) and $K/2$ useless datasets from (\romannumeral 3).1.
    \item  (S3) a target dataset with $K/2$ informative datasets from (\romannumeral 2) and $K/2$ useless datasets from (\romannumeral 3).2.
\end{itemize}

\begin{figure}[H]
	\centering

    \begin{minipage}{0.28\linewidth}
		\centering
	\includegraphics[width=0.91\linewidth]{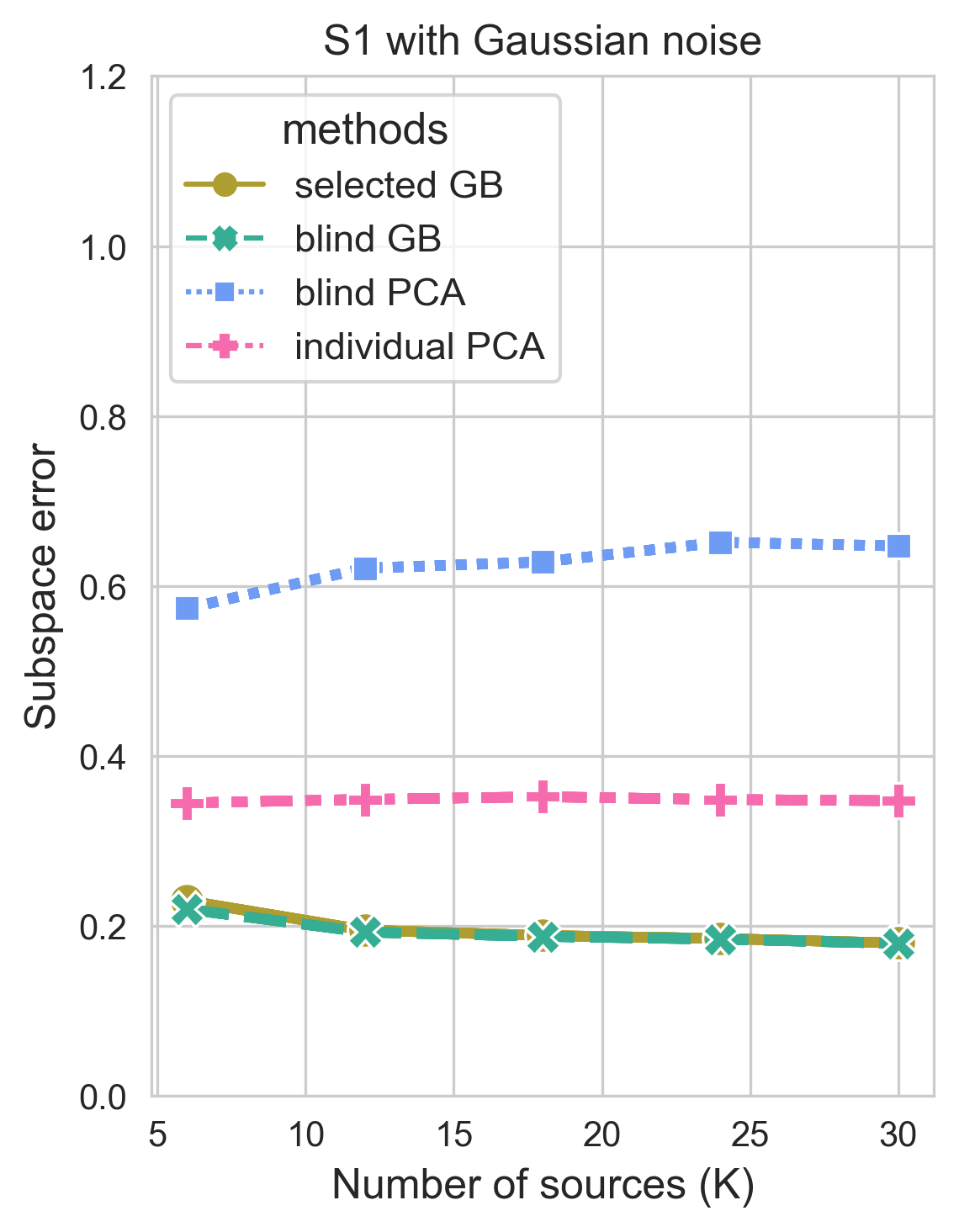}
	\end{minipage}
        \begin{minipage}{0.28\linewidth}
		\centering
	\includegraphics[width=0.91\linewidth]{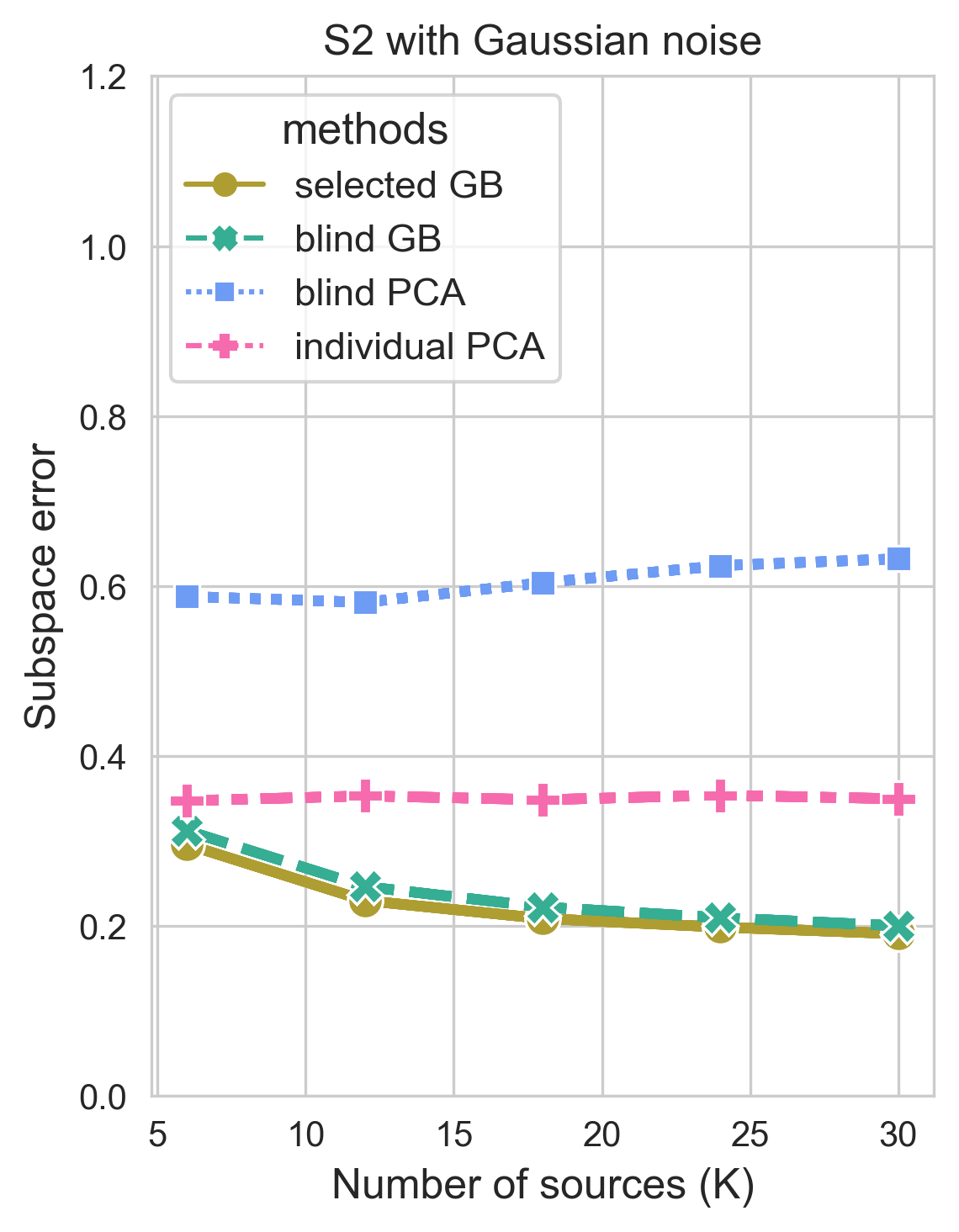}
	\end{minipage}
	\begin{minipage}{0.28\linewidth}
		\centering
	\includegraphics[width=0.91\linewidth]{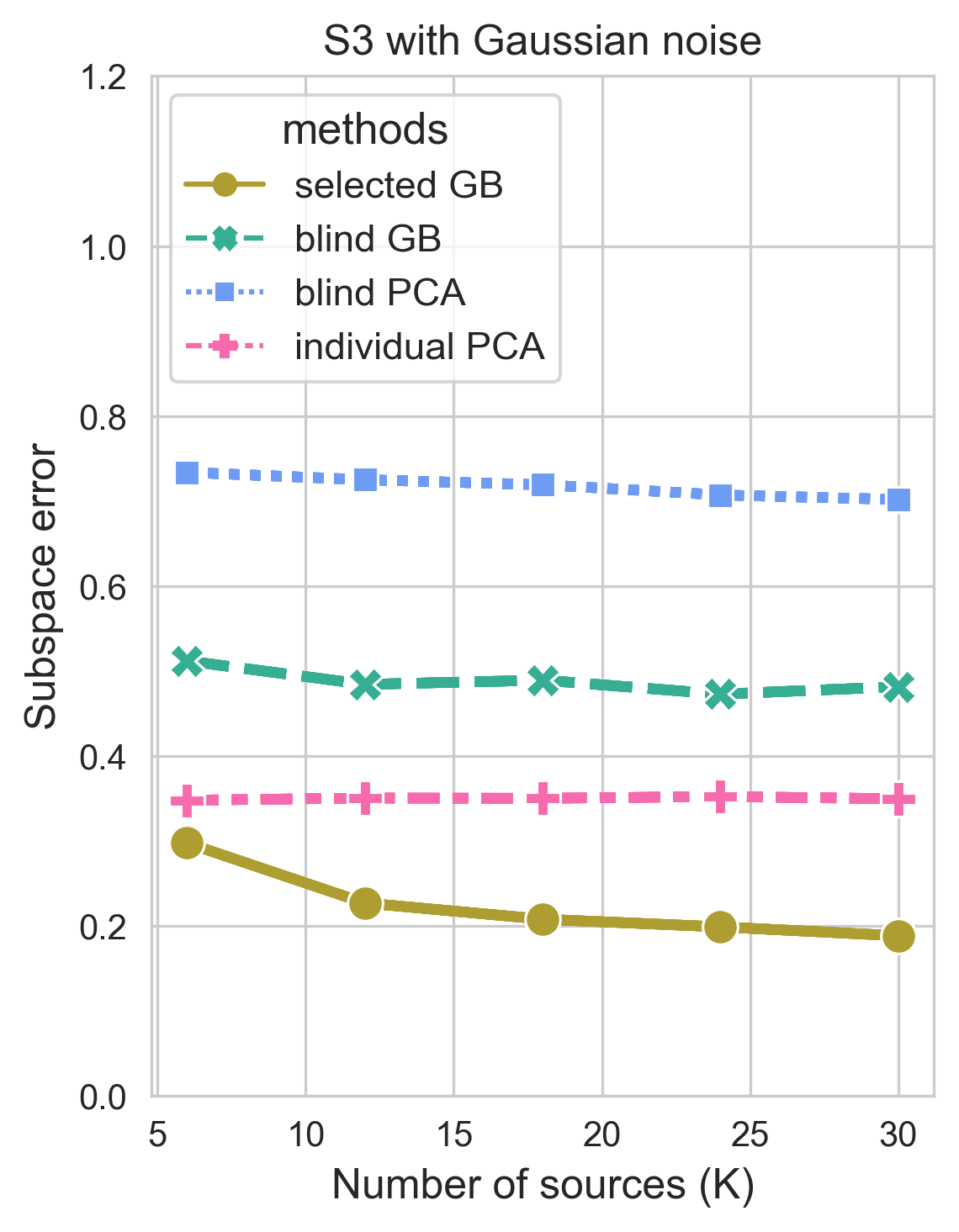}
	\end{minipage}
\caption{\label{Fig-sim-CN} Average subspace estimation error using various methods under Gaussian distribution with classical PCA, based on 100 replications. From left to right we report S1 (no inclusion of useless sources), S2 (mild inclusion of useless sources) and S3 (severe inclusion of useless sources), respectively.
}
\end{figure}

We set $p=50$, $n_k=100$, $r_k=4$, for $k\in \{0\}\cup [K]$, let $r_\s=2$, $h=0.05$, set $\lambda_0^{\p}=10$, while $\lambda_k^{\p}=8$ for $k\in[K]$. In the end, set $\lambda_k^{\s}=4$ for $k\in \{0\}\cup [K]$. The tuning parameter $\tau$ in Algorithm \ref{alg:nora} is set to be $0.5$, and we iterate the rectified Grassmannian K-means procedure $T=10$ times. Figure \ref{Fig-sim-CN} reports the average estimation error of $\span(U_0^*)$ by various subspace estimators using classical PCA under Gaussian distribution, based on $100$ replications. First, in S1 where all source datasets are informative, we witness the gain by knowledge transfer using both Algorithms \ref{alg:ora} and \ref{alg:nora}. However, the blind PCA method which directly pools all data and then perform PCA causes negative-transfer even under this oracle setting. The reason for this is that the target private subspace information easily drowns in the pooling procedure, indicating that the second fine-tuning step is necessary for the sake of successful knowledge transfer. Meanwhile, we impose different types of non-informative datasets in S2 and S3. In S3 the useless datasets have a mutual centered subspace different from the shared subspace that the target study desires, while in S2 there is no such center. We can see that the blind GB estimator which directly applies Algorithm \ref{alg:ora} on all datasets still yields satisfying performance in S2, demonstrating the robustness of the Grassmannian barycenter method against mild inclusion of some useless datasets. On the other hand, including those datasets with another subspace center in S3 causes negative impact to all other estimators except the selected GB estimator by Algorithm \ref{alg:nora}. The comparison between these scenarios further justifies Algorithm \ref{alg:nora}'s attention on computational feasibility rather than the statistical accuracy under non-oracle cases, since Algorithm \ref{alg:ora} is already robust against mild inclusion of useless datasets as in S2, while Algorithm \ref{alg:nora} is fully capable of tackling multi-center cases as in S3 with little extra computational burden.

\begin{figure}[H]
	\centering

    \begin{minipage}{0.28\linewidth}
		\centering
	\includegraphics[width=0.91\linewidth]{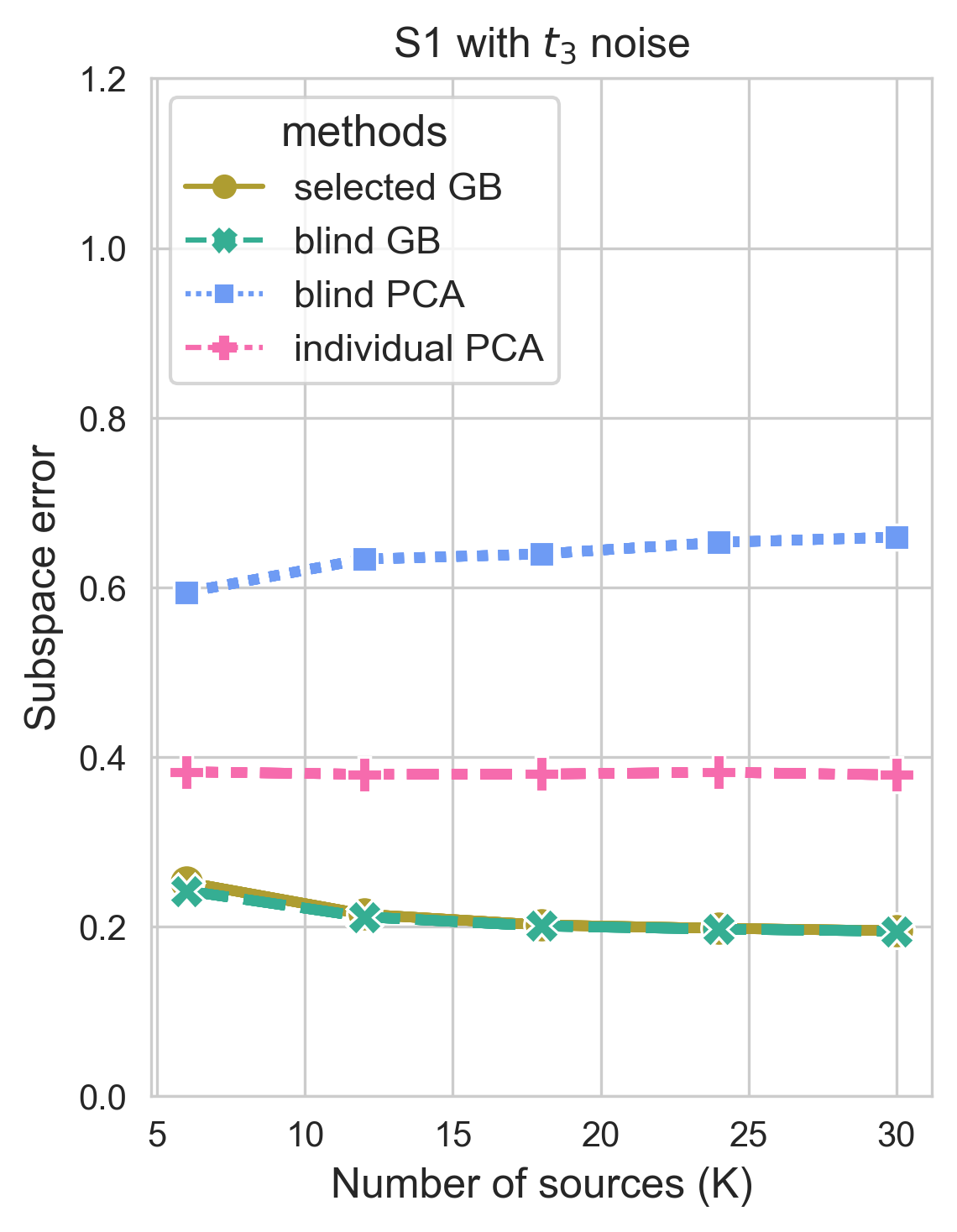}
	\end{minipage}
        \begin{minipage}{0.28\linewidth}
		\centering
	\includegraphics[width=0.91\linewidth]{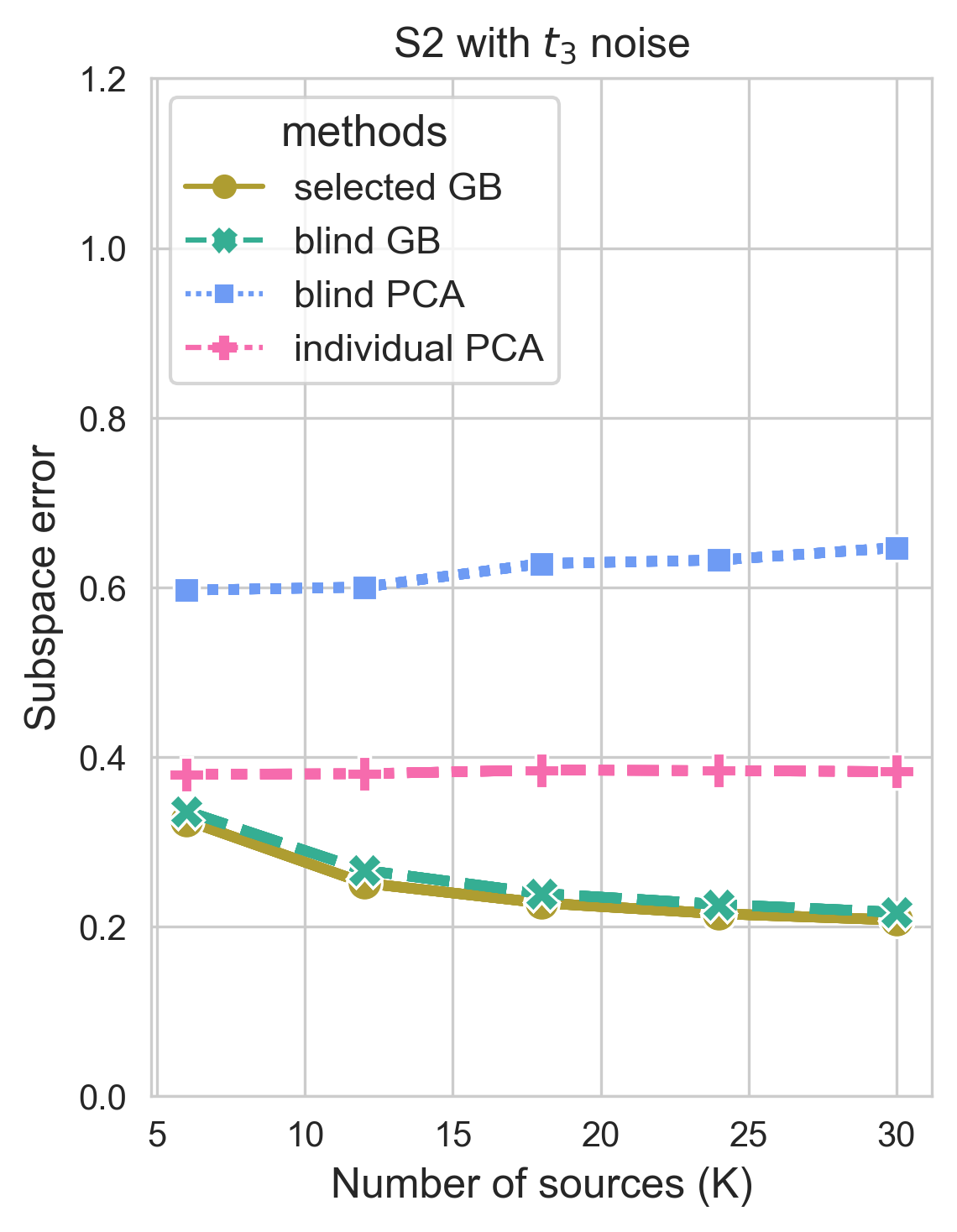}
	\end{minipage}
	\begin{minipage}{0.28\linewidth}
		\centering
	\includegraphics[width=0.91\linewidth]{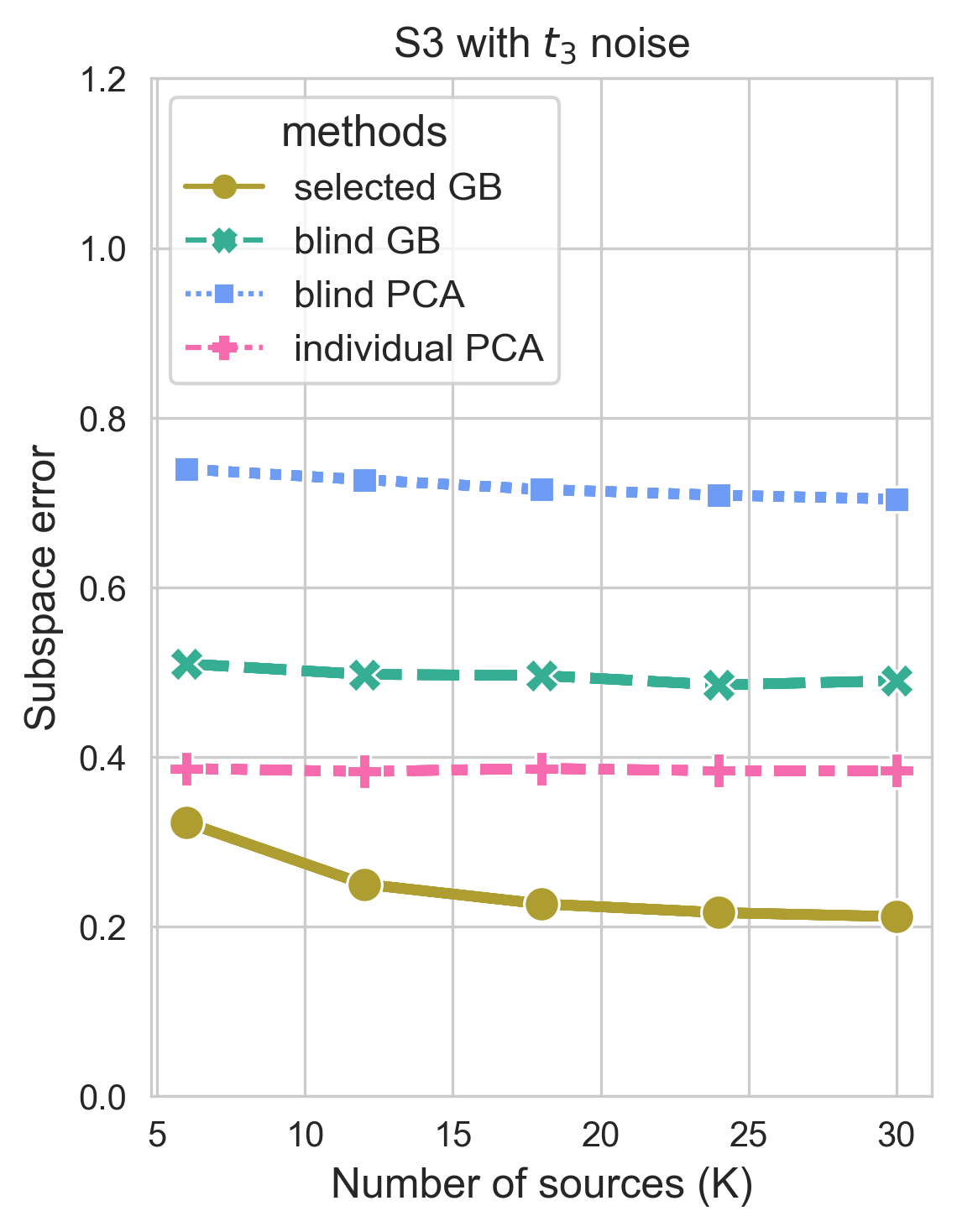}
	\end{minipage}

\caption{\label{Fig-sim-KT} Averaged subspace estimation error using various methods under $t_3$ distribution with elliptical PCA, based on 100 replications. From left to right we report S1 (no inclusion of useless sources), S2 (mild inclusion of useless sources) and S3 (severe inclusion of useless sources), respectively.
}
\end{figure}

The only difference in Figure \ref{Fig-sim-KT} is that we apply all methods with elliptical PCA under $t_3$ distribution instead. We see similar patterns in all three scenarios as in Figure \ref{Fig-sim-CN}. It is also worth mentioning that the computational burden of elliptical PCA grows quickly with the sample size, so our algorithms show great computational advantage over the pooled elliptical PCA method. Additional simulation results, including those of classical PCA under $t_3$ distribution and elliptical PCA under Gaussian distribution, are left to the supplementary material.

\subsection{Statistical rates}\label{sec:nsr}
In this section, we validate the statistical error rate of the oracle transfer estimator $\hat{P}_0$ given in Theorem \ref{theo:main}. While the data generation procedure is identical to that in Section \ref{sec:mc}, we still need to formulate different simulation settings as (\ref{rate:main}) eventually contains four terms. We aim to highlight each main term at a time, so the rest terms are set to be relatively smaller. To be more specific, we let $n_k=n$ for $k\in\{0\}\cup[K]$, and for each term, we generate data as follows:
\begin{itemize}
    \item The private term $\delta_0\tilde{n}_0^{-1/2}/\delta_{\p}$: for $k\in\{0\}\cup[K]$, we set
    $\lambda_k^\p\in\{2,4,6,8,10\}$ with $\lambda_k^\s\in\{5,6,7\}$, while we set $p=50$, $n=100$, $K=20$ and $h=0$.
    \item The variance term $\tilde{N}_{\cI}^{-1/2}$: under the setting of this section, $\tilde{N}_{\cI}\propto nK$, we give two settings testifying the variation tendency, which includes changing $n$ and $K$ respectively.
    \begin{itemize}
        \item Changing $n$: we set
    $n\in\{50,80,110,140,170\}$ with $p\in\{30,40,50\}$. Moreover we set $K=5$, $\lambda_k^\s=4$ while $\lambda_k^\p=100$ for $k\in\{0\}\cup[K]$ and $h=0$.
        \item  Changing $K$: we set
    $K\in\{5,7,9,11,13\}$ with $p\in\{35,40,45\}$. We set $n=150$, $\lambda_k^\s=2$ while $\lambda_k^\p=50$ for each $k\in\{0\}\cup[K]$ and $h=0$.
    \end{itemize}
    \item The high-order bias term $(\sum_{k\in\{0\}\cup\cI}r_k^{1/2})\tilde{N}_{\cI}^{-1}$: under the setting of this section, the high-order bias term is proportional to $n^{-1}$, so we choose $n\in\{35,40,45,50\}$ with $(p,K)\in\{(15,10),(25,20),(35,30)\}$. We set $\lambda_k^\s=2$ while $\lambda_k^\p=1000$ for each $k\in\{0\}\cup[K]$ and $h=0$.
    \item The subspace deviation term $h$:  we choose
    $h\in\{0.1,0.12,0.14,0.16,0.18,0.2\}$ with $(p,n)\in\{(15,150)$, $(20,200),(25,250)\}$. Further we set $K=10$ and $\lambda_k^\s=50$, $\lambda_k^\p=100$ for each $k\in\{0\}\cup[K]$.
\end{itemize}

\begin{figure}[H]
	\centering
\resizebox{0.97\linewidth}{!}{
		\includegraphics[width=0.2\linewidth]{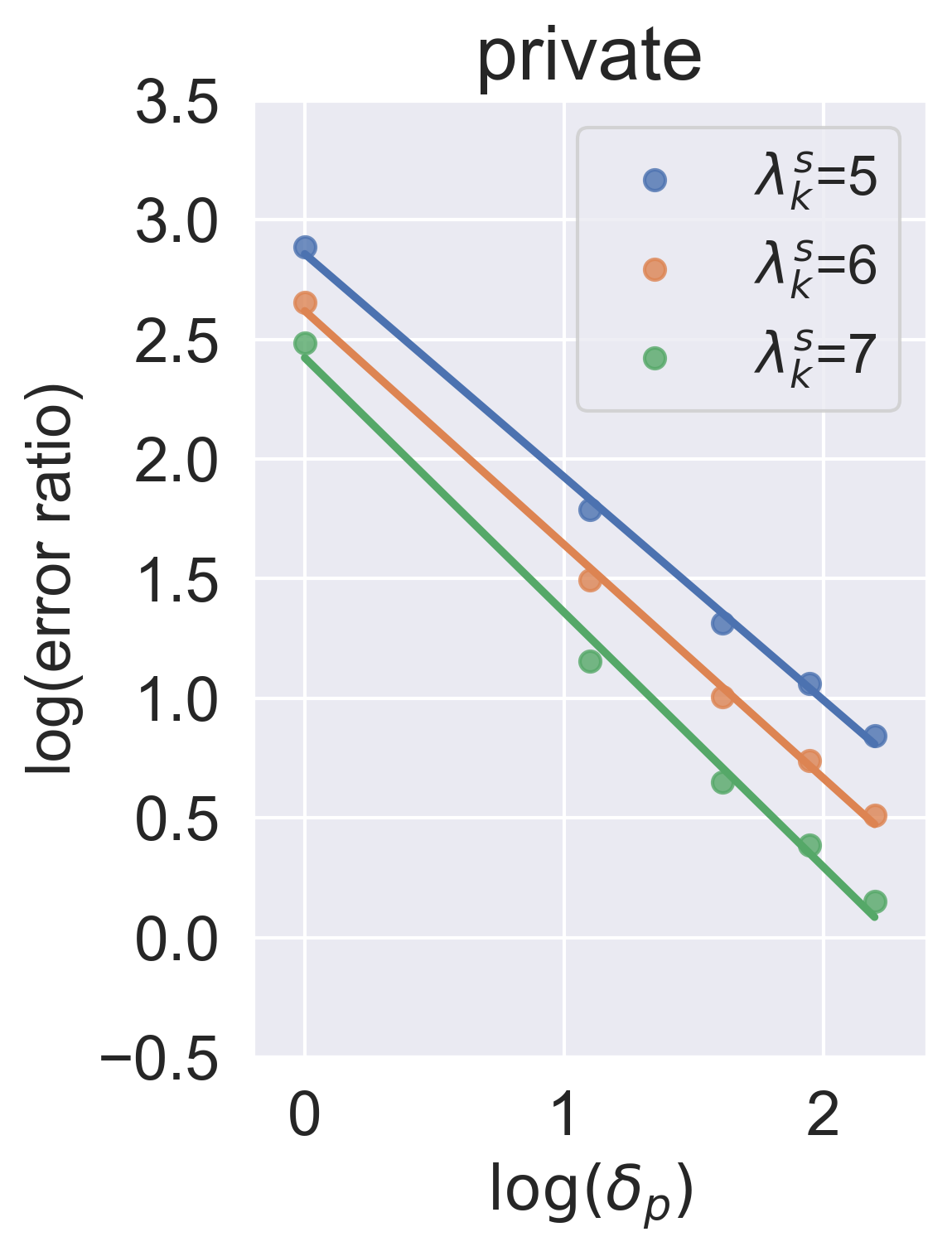}

		\includegraphics[width=0.2\linewidth]{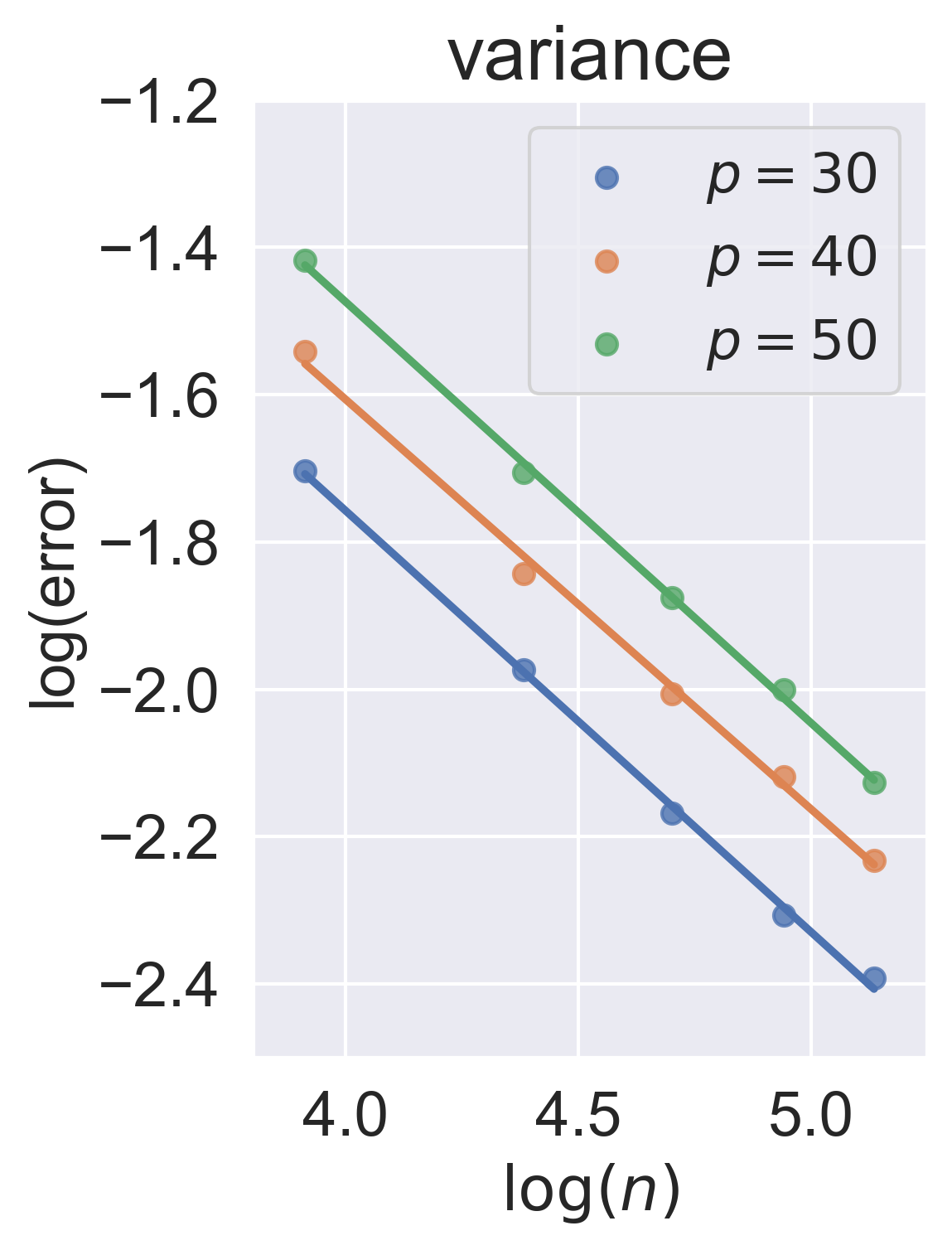}

		\includegraphics[width=0.2\linewidth]{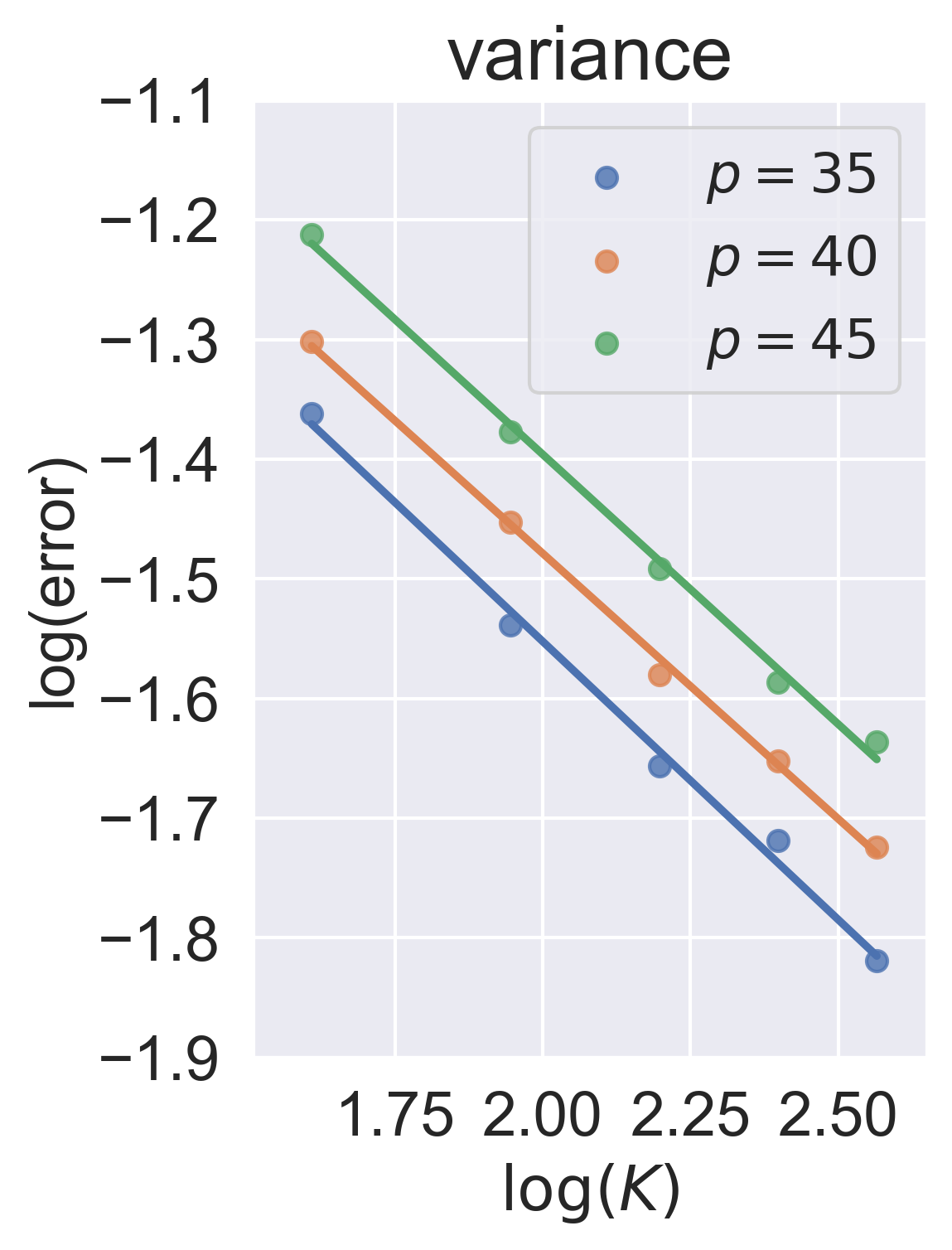}

            \includegraphics[width=0.2\linewidth]{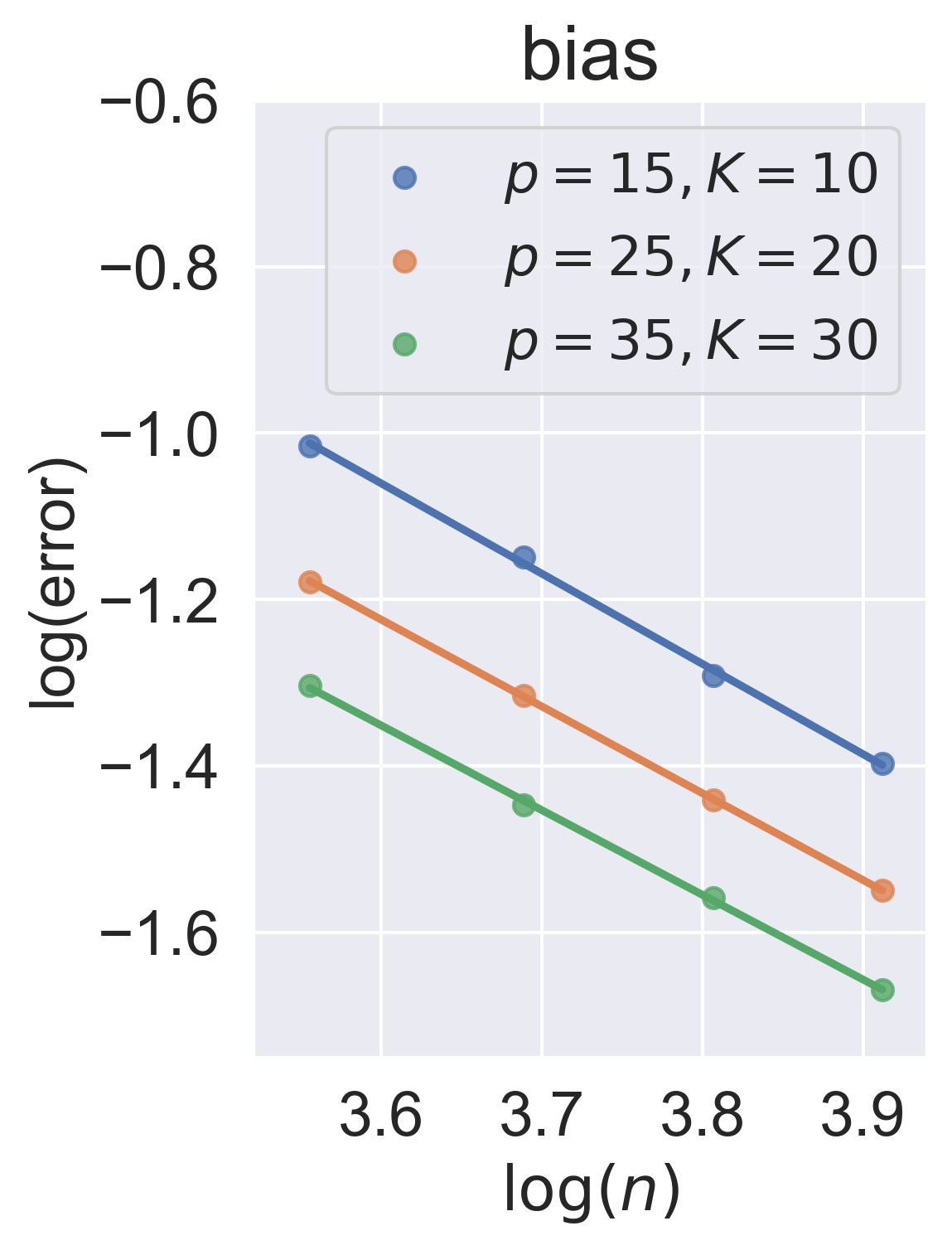}
		\includegraphics[width=0.2\linewidth]{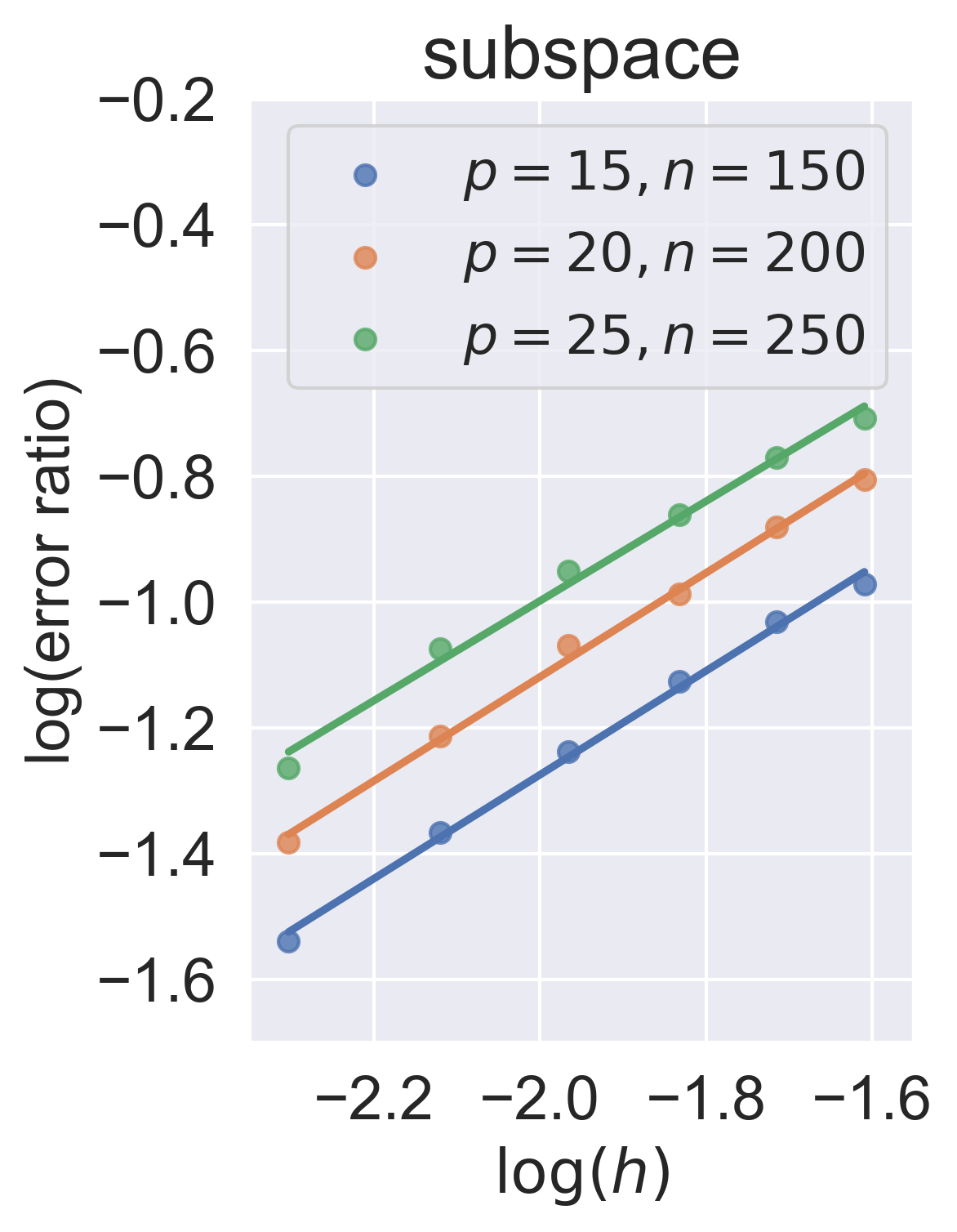}
}
 \caption{\label{ratetest}Validation of statistical rates in Theorem \ref{theo:main}. From left to right we show how the private term changes with $\delta_{\p}$; how the variance term changes with $n$ and $K$; how the bias term changes with $n$ and how the subspace deviation term changes with $h$.}
\end{figure}

Further remarks on the settings above are as follows. For the private term, we report the $\text{error ratio}:=\|\hat{P}_0-P_0^*\|_F/(\|\tilde{P}_0-P_0^*\|_F-\|\hat{P}_0-P_0^*\|_F)$ for $\tilde{P}_0$ the individual PCA estimator acquired directly using the target dataset alone, instead of the $\text{error}:=\|\hat{P}_0-P_0^*\|_F$. The reason we consider such error ratio is the fact that changing $\delta_{\p}$ also changes $\tilde{n}_0$ in the private term. We essentially want to show that by increasing $\delta_{\p}$, the private subspace estimation becomes a statistically easier task compared to the shared subspace estimation, whose difficulty is depicted by the fixed $\delta_0$. In fact, the denominator $(\|\tilde{P}_0-P_0^*\|_F-\|\hat{P}_0-P_0^*\|_F)$ is meant to approximate the unobservable shared subspace estimation error using the target dataset alone. Meanwhile, the bias term is higher-order compared to the second variance term in most practical cases, hence it is hard to observe under regular settings, and we have to work on settings which are somehow too harsh to achieve in practice, e.g., we set $\lambda_k^\p=1000$ for each $k\in\{0\}\cup[K]$.

Figure \ref{ratetest} demonstrates how the average error (ratio) changes with $\delta_p$, $n$, $K$, $h$ respectively based on $50$ replications. The results clearly show that the error (ratio) is proportional to $\delta_p^{-1}$ for the private term, $n^{-1/2}$, $K^{-1/2}$ for the variance term, and $n^{-1}$ for the bias term. However, the slope of $\log(h)$, corresponding to the subspace deviation term, is slightly smaller than 1 (about 0.82). The reason is that the subspace deviation magnitude is only related to $h$ indirectly in the simulation. In summary, the empirical experimental results
are consistent to the statistical rates provided in Theorem \ref{theo:main}.

\subsection{Asymptotic normality}\label{sec:nan}
In the end, we numerically validate the claims made right after Corollary \ref{coro:AN}, concerning the asymptotic normality of bilinear forms with respect to the spectral projectors. As a much larger number of replications would be required, we only compare the performance of the target subspaces given sufficiently reliable shared subspace information, instead of generating all informative source datasets and then estimate the shared subspace. To be more specific, after we generate the target population and sample covariance matrices $\Sigma_0^*$ and $\hat{\Sigma}_0$ in the same way as given in Section \ref{sec:mc}, let $P_0^\s=U_0^{\s}(U_0^{\s})^{\top}$ be the shared spectral projector, we slightly perturb $P_0^\s$ and acquire $\hat{P}_0^{\s}$ by taking the leading $r_{\s}$ eigenvectors of $P_0^\s + s G_0$. Here $G_0$ is taken from $\text{GOE}(p)$, which is defined as the $p\times p$ symmetric random matrix with off-diagonal entries taken independently from $N(0,(2p)^{-1})$ and the diagonal entries taken independently from $N(0,p^{-1})$, and $s$ measures the deviation of $\hat{P}_0^{\s}$ away from $P_0^\s$.

\begin{figure}[H]
	\centering
	\includegraphics[width=0.435\linewidth]{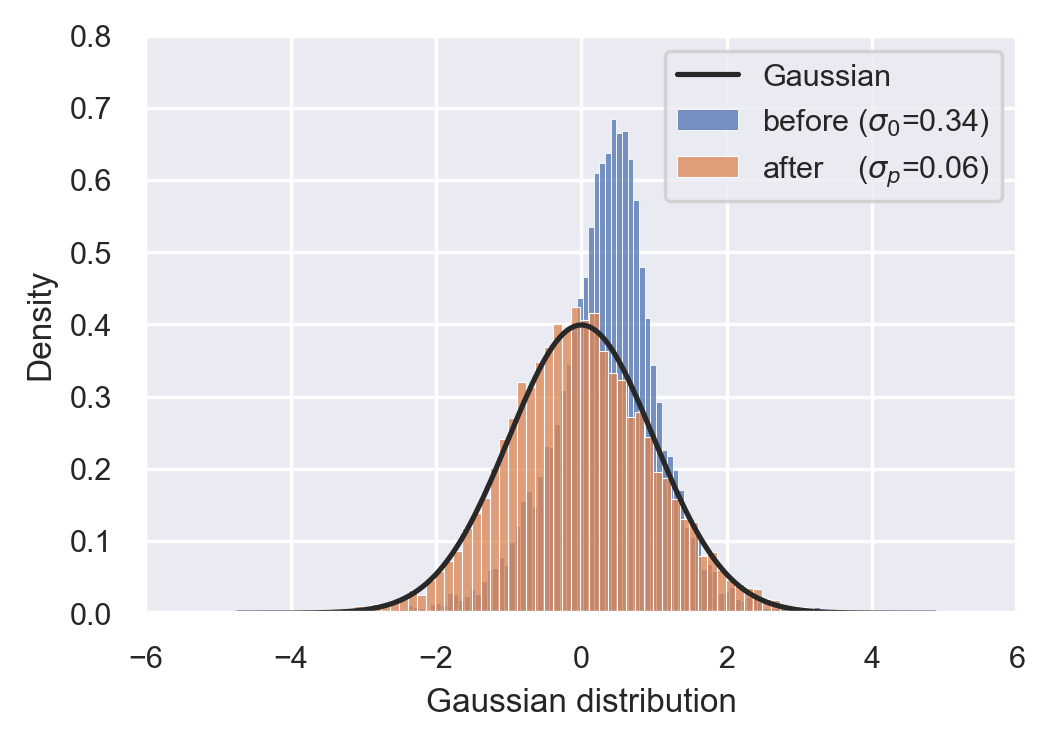}
	\includegraphics[width=0.435\linewidth]{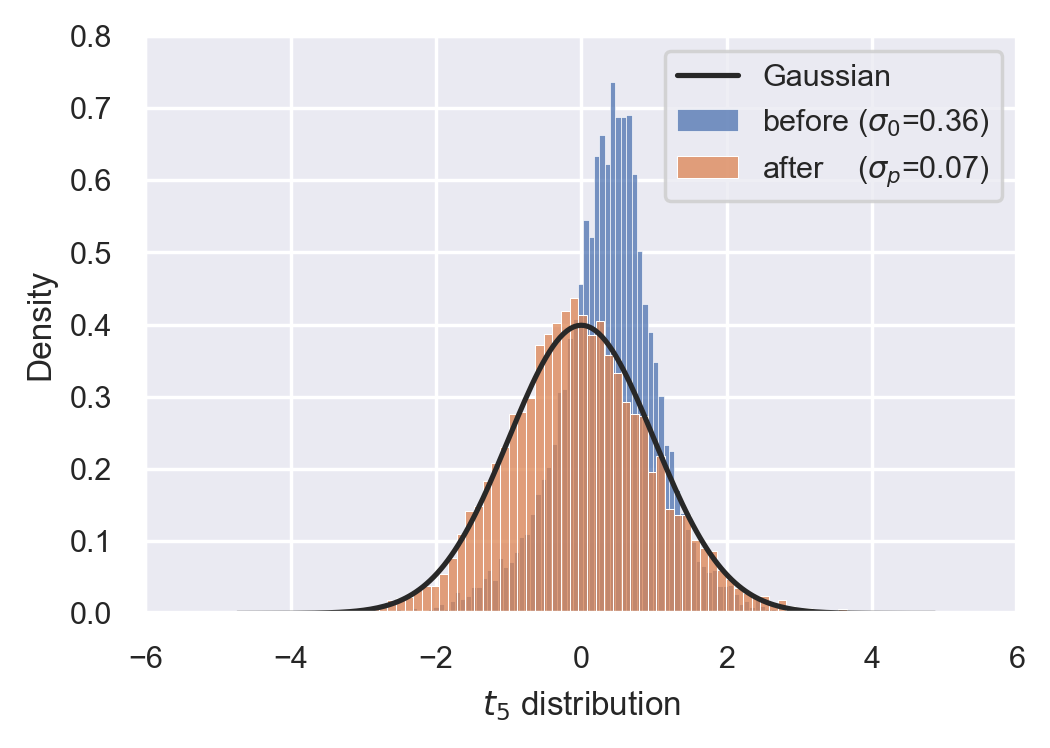}
\caption{\label{Fig-AN}
Histograms of the re-scaled bilinear forms with respect to the empirical spectral projectors acquired before and after knowledge transfer, compared to the standard Gaussian distribution, based on $10000$ replications.}
\end{figure}

We set $n_0=100$, $p=50$, $r_0=4$, $r_\s=2$, $\lambda_0^{\p}=10$, $\lambda_0^{\s}=2$, $s=0.05$, and generate data subject to Gaussian and $t_5$ distributions, respectively. We then report the performance of $\tilde{P}_0$, acquired by directly taking the leading $r_0=4$ eigenvectors of $\hat{\Sigma}_0$, and $\hat{P}_0$, acquired by the fine-tuning step of Algorithm \ref{alg:ora} given the initial shared subspace projector $\hat{P}_0^{\s}$. Figure \ref{Fig-AN} presents the histograms of the re-scaled bilinear forms $\tilde{Z}:=n^{1/2}\langle u, (\tilde{P}_0-P_0^*)v\rangle/\sigma_0$ (before knowledge transfer) and $\hat{Z}:=n^{1/2}\langle u, (\hat{P}_0-P_0^*)v\rangle/\sigma_{\p}$ (after knowledge transfer), based on $10000$ replications. As $\Sigma_0^*$ and $\Sigma_0^{\p} = (P_0^{\text{s}})^{\perp} \Sigma^*_0 (P_0^{\text{s}})^{\perp}$ are known in advance in numerical experiments, we are able to calculate the exact values of $\sigma_0$ and $\sigma_{\p}$ as given explicitly in Corollary \ref{coro:AN}, as $\Sigma^*_0$ could also be viewed as the private covariance matrix $\Sigma_0^{\p}$ with $(P_0^{\text{s}})^{\perp}=I_p$. Notice that $\sigma_0$ and $\sigma_{\p}$ change slightly given different distributions. We compare the re-scaled empirical distributions to the standard Gaussian distribution, and $\hat{Z}$ acquired via transfer learning clearly fits better. Moreover, the variance of the asymptotic normal term also becomes smaller after knowledge transfer. Unfortunately, the parameters $\sigma_0$ and $\sigma_{\p}$ are generally unknown in practice, and further investigation concerning statistical inference of spectral projectors is still very appealing.

\section{Real Data Analysis}\label{sec:real}
In this section, we apply the methods proposed in this work, with respect to the classical PCA, on an activity recognition dataset. In \cite{barshan2014recognizing}, the authors collected features of eight volunteer subjects (four female, four male, ages 20-30, ranging from $S_1$ to $S_8$) when performing each of the 19 activities (including sitting, standing, ascending and descending stairs, rowing, jumping, etc., ranging from $A_1$ to $A_{19}$) for 5 minutes, using inertial and magnetic sensor units carried on the chest, arms and legs. We treat each activity from each subject as an individual principal component analysis task, adding up to $K=152$ studies in total. The dataset is pre-processed in \cite {wang2018stratified} by extracting 27 features from both time and frequency domains for a single sensor. Since there are three sensors, i.e., accelerometer, gyroscope, and magnetometer, on one body part, in total we have $p=27\times 3\times 5=405$ features recorded for the subject $S_i$ when performing the activity $A_j$, while the sample size $n_{ij}=60$. The pre-processed dataset and other detailed information could be found on the website \url{https://www.kaggle.com/datasets/jindongwang92/crossposition-activity-recognition}.

In this real data case, we do not need to specify a target study of interest, and all $152$ studies are treated equally. In each experiment, after standardizing all datasets, we randomly split each study into training $(80\%)$ and testing $(20\%)$ datasets. We then apply the rectified Grassmannian K-means version of Algorithm \ref{alg:nora} on the training datasets for $k\in [K]$ to acquire a $r_\s$-dimensional shared subspace estimator $\hat{P}^{\s}_{0,\tau}$, with $\hat{\cI}=\{k\in[K]\mid \tr[\hat{P}^{\s}_{0,\tau}\tilde{P}_k]\geq \tau \}$ according to (\ref{eq:itercri}). Here $\hat{\cI}$ is the subset of $[K]$ consisting of the studies that provide information on the shared subspace estimation, and we regard the studies in $\hat{\cI}$ potential for knowledge transfer. For each $k \in \hat{\cI}$, we acquire the following four knowledge transfer estimators to estimate $P_k^*$: (a) blind GB estimator which directly applies Grassmannian barycenter on all training datasets in $[K]$ with dimension $r_\s$ and then fine-tunes using the $k$-th training dataset with dimension $r_{\p}$; (b) blind PCA estimator which first pools all training datasets in $[K]$ and then performs PCA with dimension $r=r_\s+r_{\p}$; (c) selected GB estimator by fine-tuning $\hat{P}^{\s}_{0,\tau}$ using the $k$-th training dataset with dimension $r_{\p}$; and (d) selected PCA estimator by performing PCA on the pooled training dataset by $\hat{\cI}$ with dimension $r=r_\s+r_{\p}$. We only consider the classical PCA setup for the real example in this section.

\begin{figure}[h]
  \centering
  \includegraphics[width=5.2in]{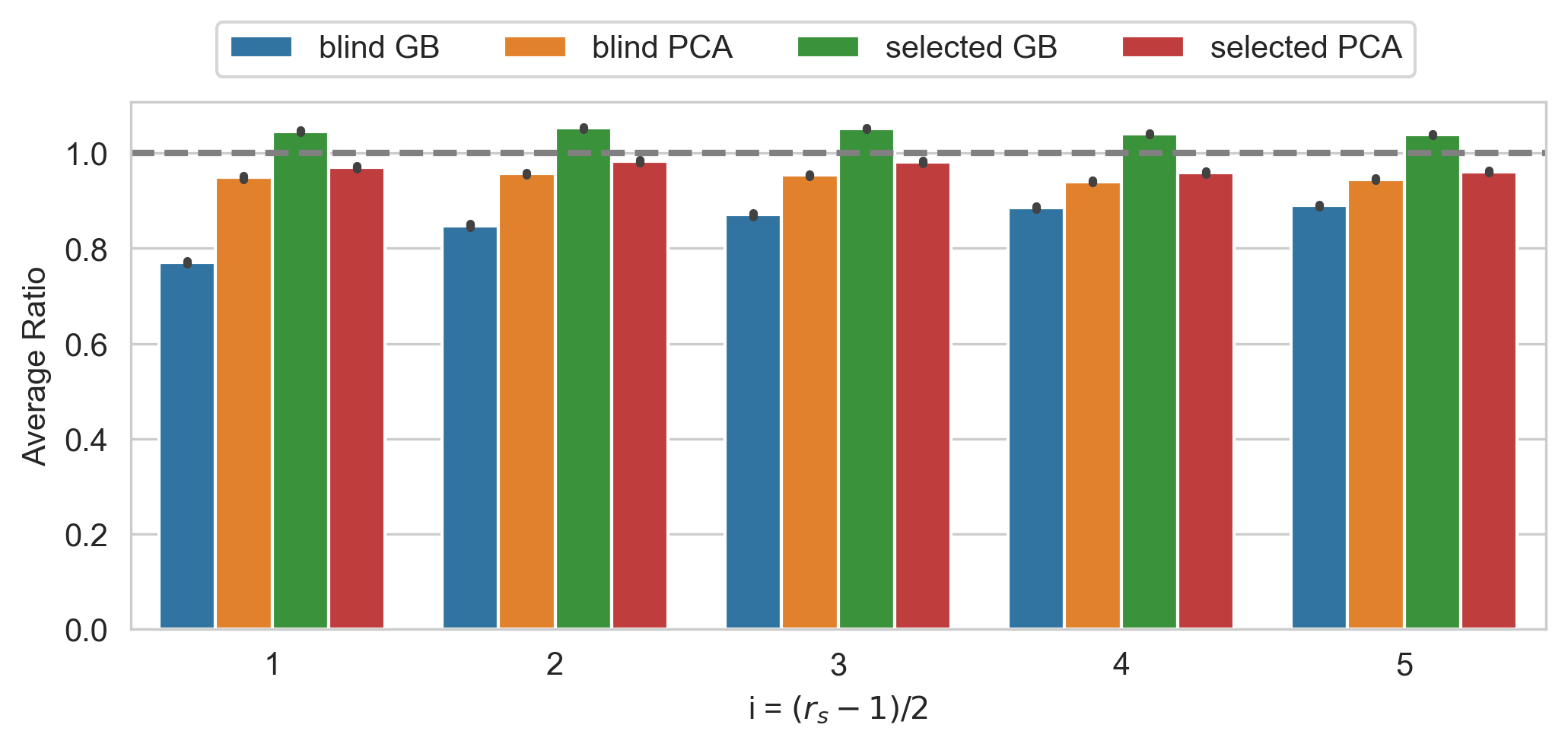}
  \caption{Mean of the Average relative information preservation Ratio (AR) on the testing datasets for different knowledge transfer subspace estimators against the individual PCA estimator $\tilde{P}_k$ with varying $i=(r_\s-1)/2$, based on 100 replications.}\label{fig:activity}
\end{figure}

Given any knowledge transfer subspace estimator $\hat{P}_k$, we use the $k$-th testing dataset to evaluate its relative performance against the individual PCA estimator $\tilde{P}_k$, acquired by performing PCA on the $k$-th training dataset with $r=r_\s+r_{\p}$. To be more specific, given a single observation $y_{k,i}$ in the testing dataset for $i\in [12]$, we compute the relative information preservation ratio as $\|\hat{P}_ky_{k,i}\|^2/\|\tilde{P}_ky_{k,i}\|^2$, where $\|\cdot\|$ is the vector $\ell_2$ norm. We then report the Average relative information preservation Ratio (AR) on the testing datasets across all $k\in \hat{\cI}$, which is defined as
$$\text{AR}=\frac{1}{12|\hat{\cI}|}\sum_{k\in \hat{\cI}}\sum_{i\in [12]}\frac{\|\hat{P}_ky_{k,i}\|^2}{\|\tilde{P}_ky_{k,i}\|^2}.$$

We fix $T=10$ and $r_{\p}=5$, set $r_{\s}=2i+1$ and $\tau = 3i^{0.65}/2$ for $i\in [5]$, and replicate the preceding procedure by 100 times. The average number of datasets included, namely the average of $|\hat{\cI}|$, is respectively $64$, $62$, $62$, $72$ and $79$ when $i$ varies from $1$ to $5$. The resulting mean of AR is reported in Figure \ref{fig:activity}. First, for the studies in the selected dataset $\hat{\cI}$, directly pooling all datasets in $[K]$ and then apply either Algorithm \ref{alg:ora} or PCA leads to unsatisfactory performance on the testing datasets. That is to say, those datasets in $\hat{\cI}^{c}:=[K]\setminus \hat{\cI}$ might not be helpful to the studies in $\hat{\cI}$, and blindly including $\hat{\cI}^{c}$ for knowledge transfer could cause the negative transfer problem. On the other hand, say we only consider transferring knowledge across the selected datasets in $\hat{\cI}$, the selected PCA estimator obtained by performing PCA on the pooled training dataset by $\hat{\cI}$ is still outperformed by the individual PCA estimator $\tilde{P}_k$. This is somehow foreseeable as the private subspace information of the $k$-th study could easily drown in the pooled sample covariance matrix. In the end, the selected GB estimator by Algorithm \ref{alg:nora} shows steady advantage over the individual PCA estimator and other knowledge transfer estimators due to its capability of simultaneously selecting the informative datasets, harnessing the shared subspace information and also taking into account the valuable private subspace information from each individual study.

\section{Discussion}
\label{sec:conc}
In this work, we propose methods to transfer knowledge between multiple principal component analysis studies with statistical guarantee, given the informative source datasets either known or unknown. We leave a few possible extensions for future work. First, the non-oracle knowledge transfer method, namely Algorithm \ref{alg:nora}, is initially designed to be computationally feasible even when there are a large number of source datasets distributed across different machines and might not be statistically optimal. In fact, we believe that there exists more statistically accurate useful dataset selection methods, e.g., by conducting hypothesis tests which harness the limiting eigenvector distributions. Second, we believe the idea of identifying shared and private components for multiple studies is essential in a wider range of applications including surveillance background subtraction, personalized medicine, etc. These applications might have to be considered under an alternative framework, for instance data could be stored in a natural tensor form and further efforts shall be made. Finally, it is also interesting to consider knowledge transfer for other unsupervised learning studies, as principal component analysis in this work is merely a beginning.

\bibliographystyle{apalike}
\bibliography{ref}

\newpage
\setcounter{footnote}{0}
\clearpage
\setcounter{page}{1}
\setcounter{section}{0}
\begin{center}
  \title{\bf \LARGE Supplementary Material for ``Knowledge Transfer across Multiple Principal Component Analysis Studies"}\\
  \author{Zeyu Li\footnotemark[2]\\
    School of Management, Fudan University\\
    Kangxiang Qin\footnotemark[2] \\
    School of Mathematics, Shandong University
    \\
    Yong He\footnotemark[1]\\
    Institute for Financial Studies, Shandong University\\
    Wang Zhou\\
    Department of Statistics and Data Science, National University of Singapore\\
    Xinsheng Zhang\\
    School of Management, Fudan University}
  \maketitle
\end{center}
\footnotetext[2]{The authors contributed equally to this work.}
\footnotetext[1]{Corresponding author, email: heyong@sdu.edu.cn.}
\renewcommand{\thesection}{S\arabic{section}}

The supplementary material contains algorithm details not included in the main article, additional numerical results, and proofs of the theoretical results. In Section \ref{ssec:dgo}, we give details of the manifold optimization procedures to solve for the rectified problem in Section 2.2, which is part of Algorithm 2 for non-oracle knowledge transfer. Then we present the additional numerical results in Section \ref{ssec:anr}. In Section \ref{ssec:ur} we introduce some useful results used in the following proofs, which compose the remainder of the supplementary material.

\section{Details of Grassmannian Optimization}\label{ssec:dgo}
First, we give the recursive formula for the Grassmannian gradient descent steps after obtaining $\cI_t$ from (\ref{eq:itercri}), we set
$$\bar{P}_t = \frac{1}{N_{[K]}}\left(n_0\tilde{P}_0 + \sum_{k\in\cI_t}n_k \tilde{P}_k \right).$$
For any proper defined matrices $A$, $B \in \reals^{n\times n}$, denote the Lie bracket by $[A,B] = AB-BA$. Given $P_{t-1}$ and $\bar{P}_t$, we have
\begin{equation*}\label{eq:grass_grad}
	P_{t}= P_{t-1}+\alpha_t [P_{t-1},[P_{t-1},\bar{P}_t]],
\end{equation*}
where $\alpha_t$ is the pre-determined step size.

Second, as for Newton's method on Grassmann manifolds, given $P_{t-1}$ and $\bar{P}_t$, we first perform the following eigenvalue decomposition
 $$P_{t-1}= (U^{1}_{t-1},U^{2}_{t-1})\left(
 \begin{array}{cc}
 I_{r_{\text{s}}} & 0\\
 0 & 0
 \end{array}\right) (U^{1}_{t-1},U^{2}_{t-1})^{\top},$$
where $U^{1}_{t-1}$ is of shape $p\times r_{\text{s}}$ and $U^{2}_{t-1}$ is of shape $p\times (p-r_{\text{s}})$. Then, let $\bar{P}_t^{ij} = (U^i_{t-1})^{\top} \bar{P}_t (U^j_{t-1})$, for $i, j \in \{1, 2\}$, we solve for $Z_t$ via the Sylvester equation $Z_t \bar{P}_t^{22}-\bar{P}_t^{11}Z_t = \bar{P}_t^{12}$. In the end, perform QR decomposition and acquire
\begin{equation*}\label{eq:grass_new}
    Q_tR_t = \left(\underbrace{U^1_t}_{p\times r_\s},U^2_t\right)R_t=\left(
 \begin{array}{cc}
 I_{r_{\text{s}}} & 0\\
 -Z_t^{\top} & I_{p-r_{\text{s}}}
 \end{array}\right),\quad \text{we have}\quad P_{t}=U^1_t (U^1_t)^{\top}.
\end{equation*}

\section{Additional Numerical Results}\label{ssec:anr}
\begin{figure}[H]
	\centering

    \begin{minipage}{0.28\linewidth}
		\centering
	\includegraphics[width=0.91\linewidth]{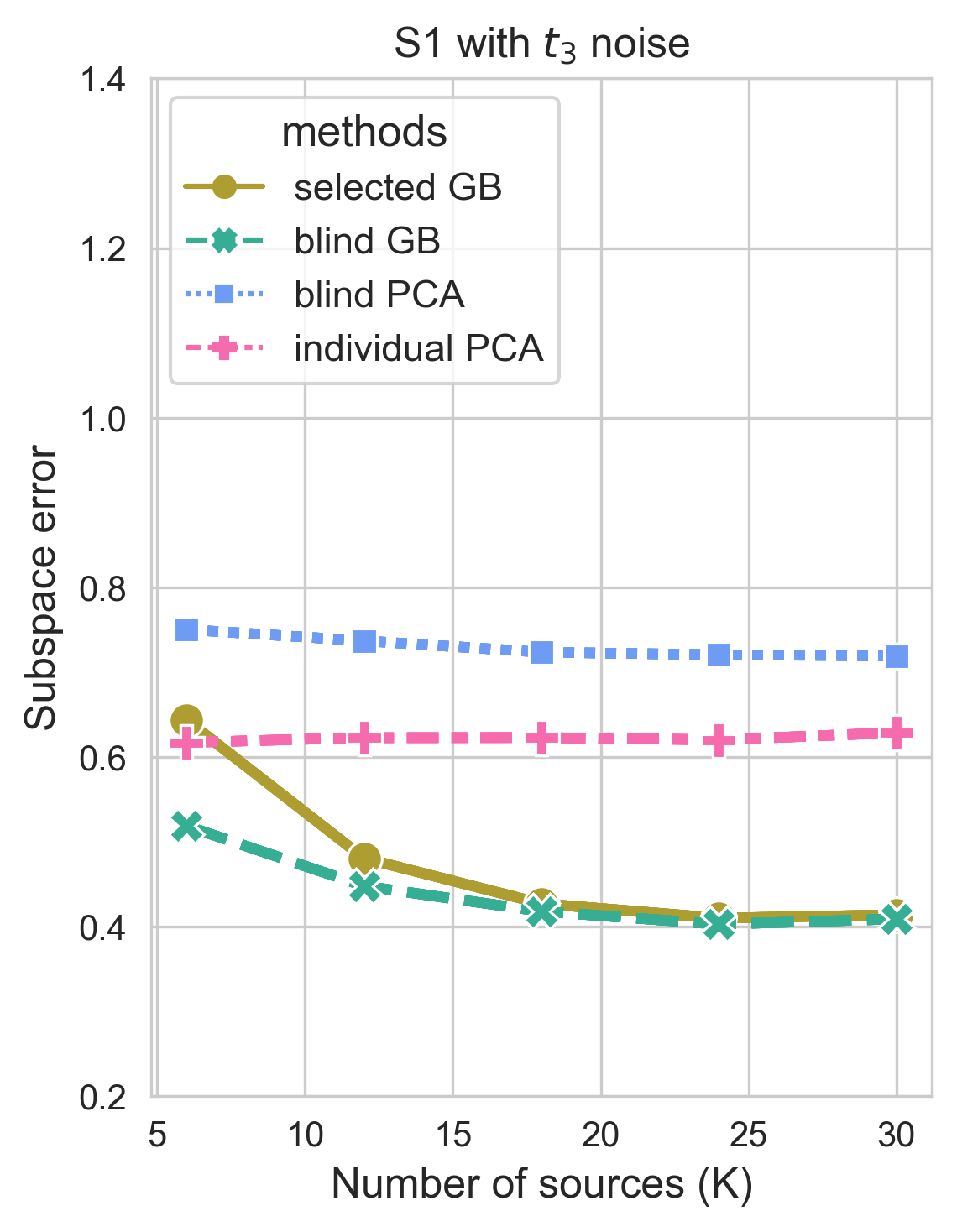}
	\end{minipage}
        \begin{minipage}{0.28\linewidth}
		\centering
	\includegraphics[width=0.91\linewidth]{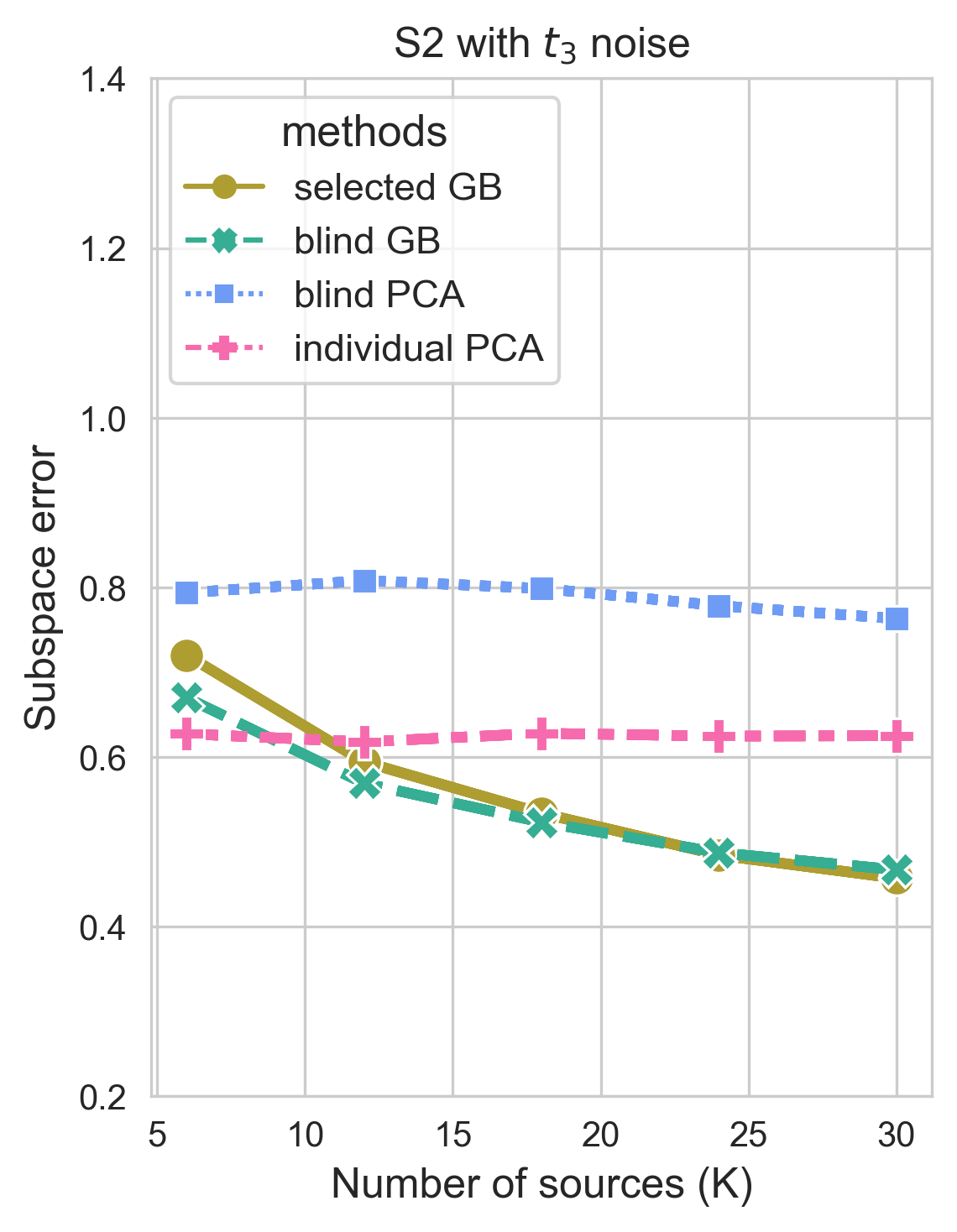}
	\end{minipage}
	\begin{minipage}{0.28\linewidth}
		\centering
	\includegraphics[width=0.91\linewidth]{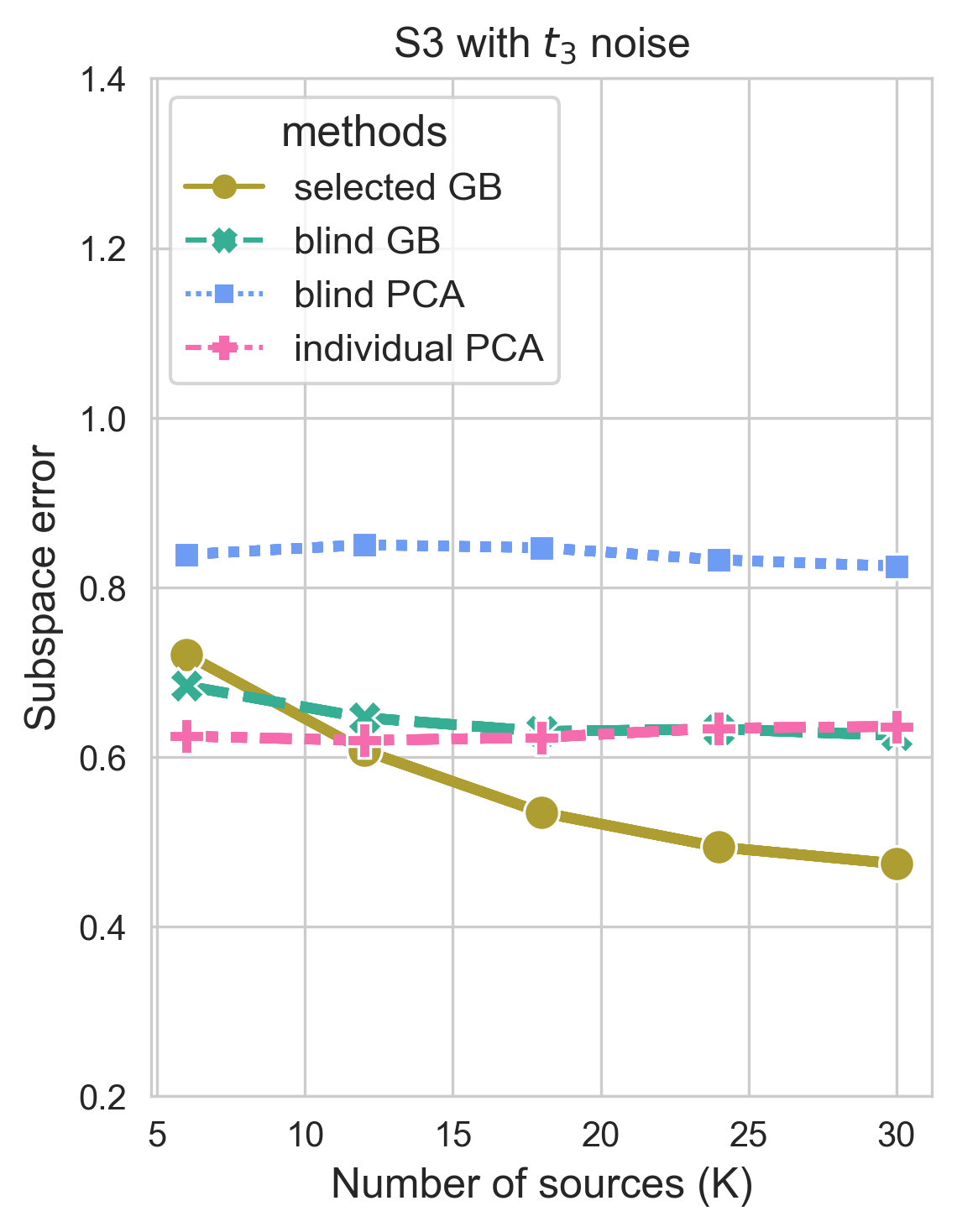}
	\end{minipage}

\caption{\label{Fig-sim-CT} Average subspace estimation error using various methods under $t_3$ distribution with classical PCA, based on 100 replications. From left to right we report S1 (no inclusion of useless sources), S2 (mild inclusion of useless sources) and S3 (severe inclusion of useless sources), respectively.
}
\end{figure}

\begin{figure}[H]
	\centering

    \begin{minipage}{0.28\linewidth}
		\centering
	\includegraphics[width=0.91\linewidth]{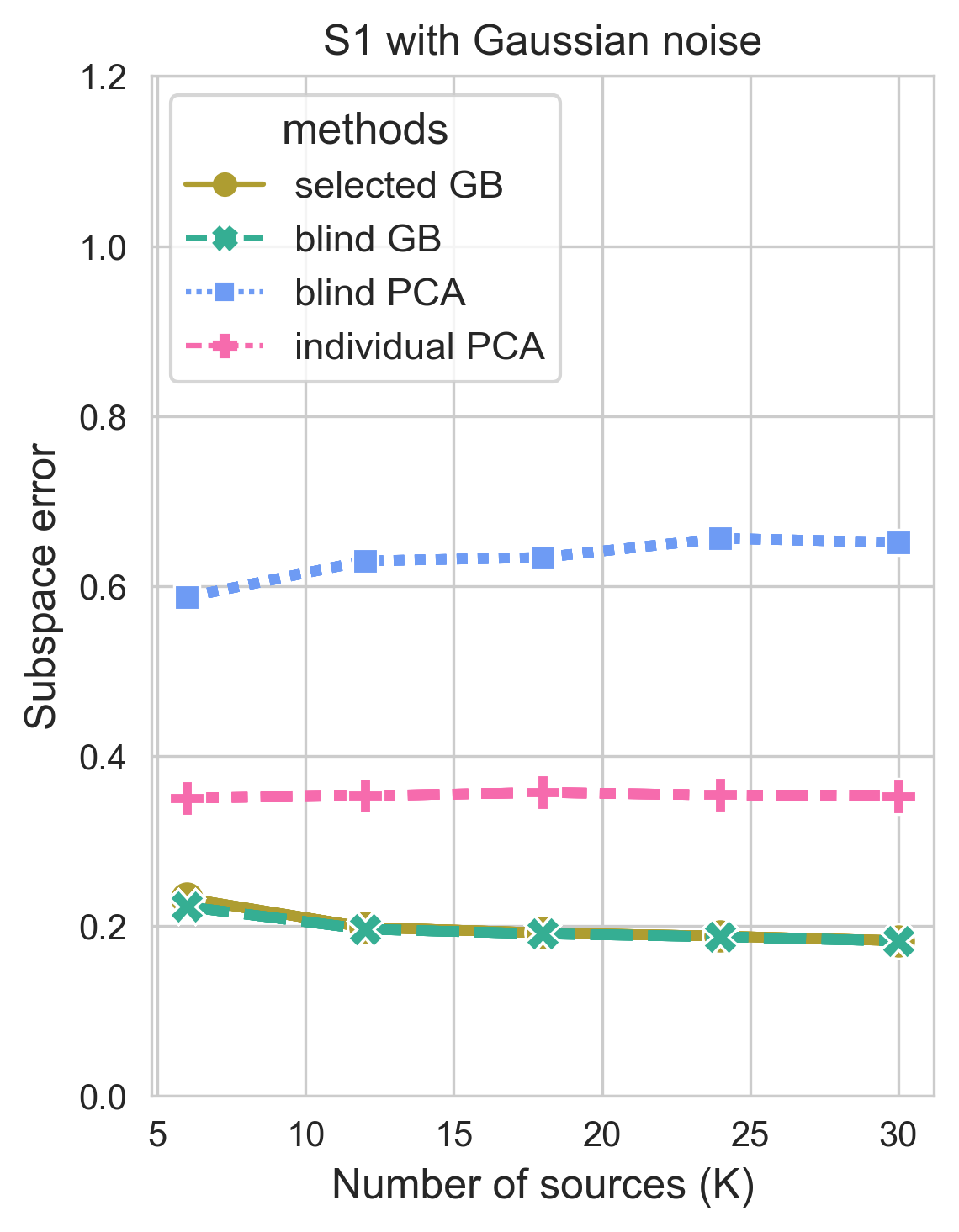}
	\end{minipage}
        \begin{minipage}{0.28\linewidth}
		\centering
	\includegraphics[width=0.91\linewidth]{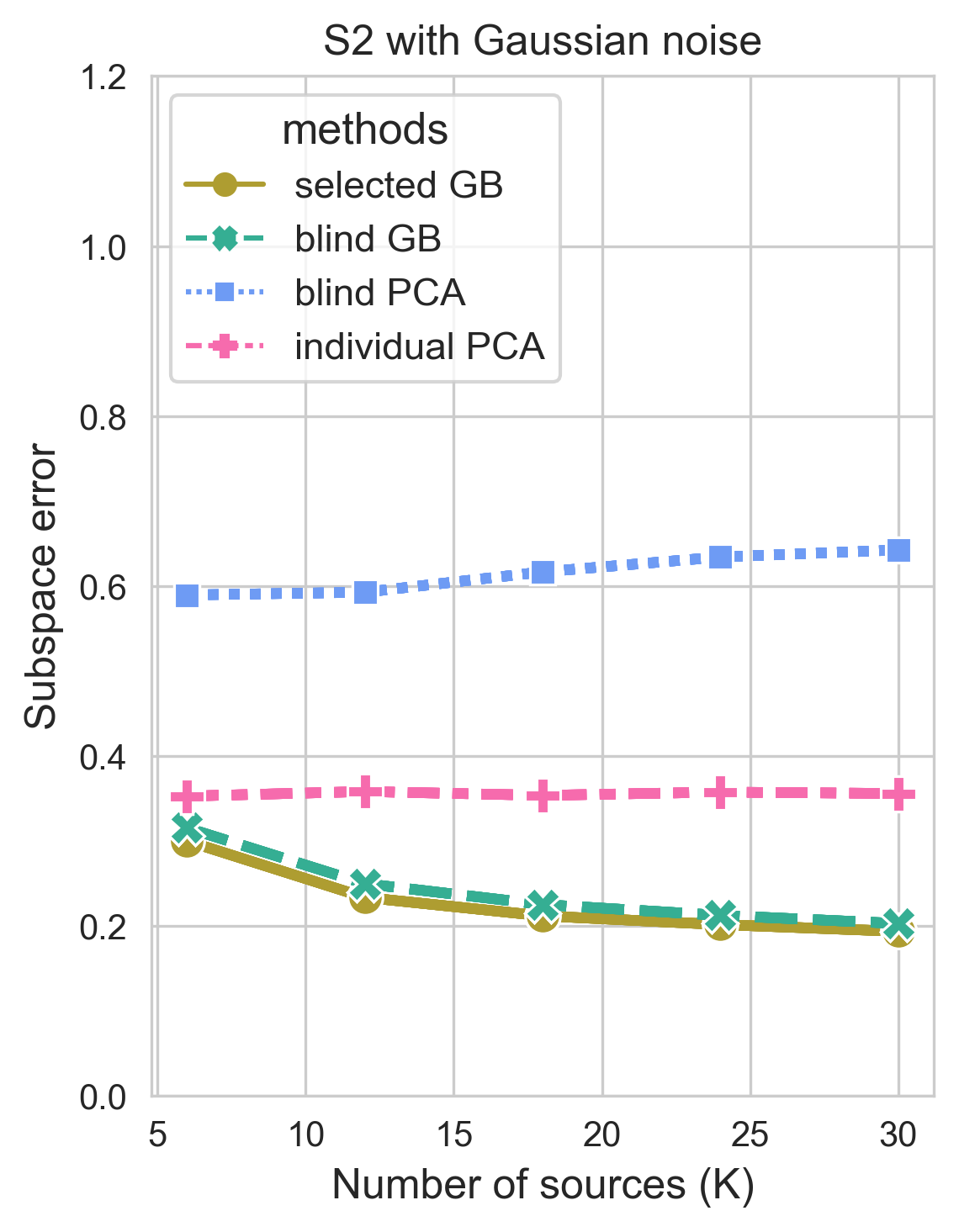}
	\end{minipage}
	\begin{minipage}{0.28\linewidth}
		\centering
	\includegraphics[width=0.91\linewidth]{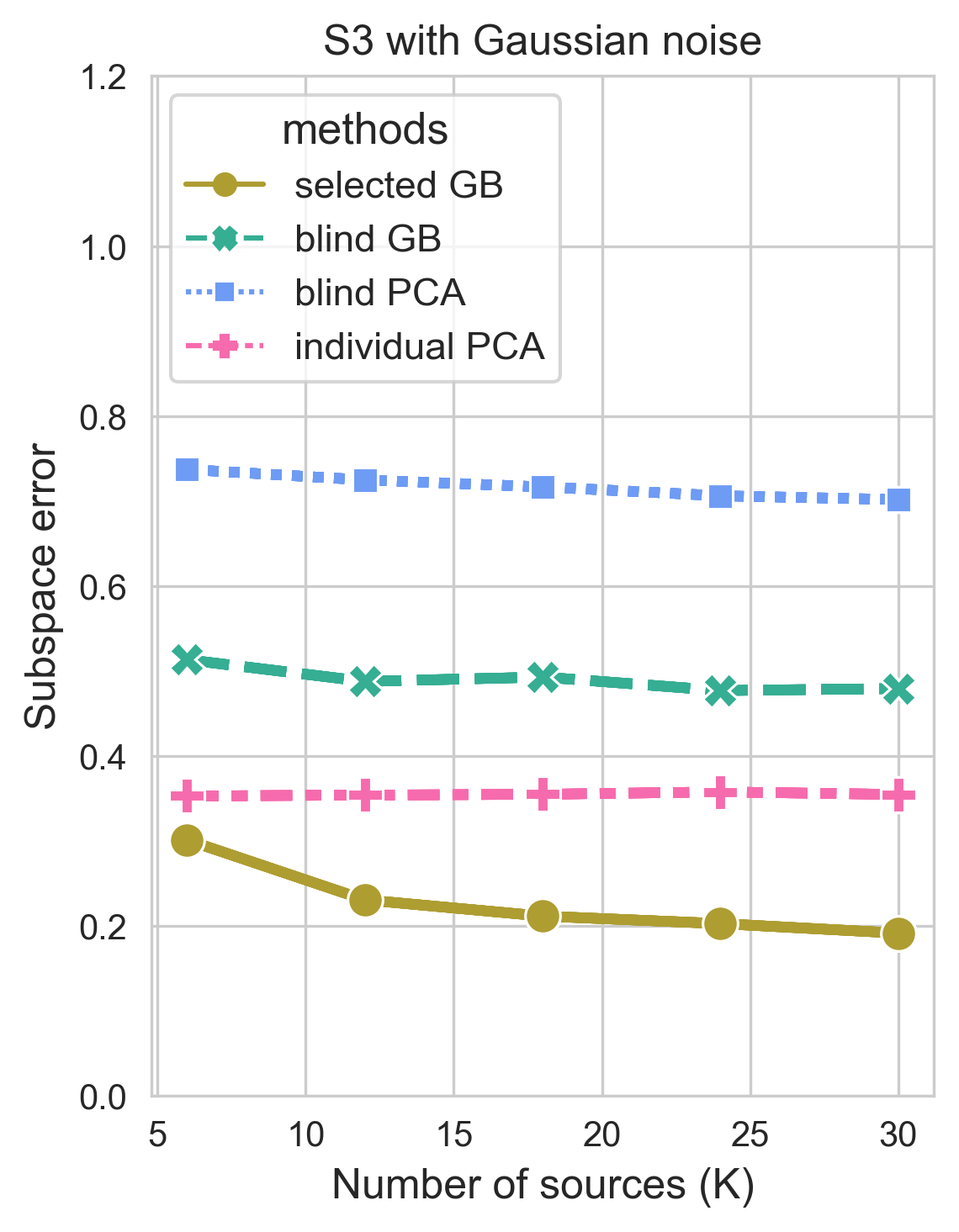}
	\end{minipage}

\caption{\label{Fig-sim-KN} Average subspace estimation error using various methods under Gaussian distribution with elliptical PCA, based on 100 replications. From left to right we report S1 (no inclusion of useless sources), S2 (mild inclusion of useless sources) and S3 (severe inclusion of useless sources), respectively.
}
\end{figure}

\section{Useful Results}\label{ssec:ur}
For convenience, we first introduce some useful results applied in this work.

\begin{proposition}[Lemma 2 from \cite{fan2019distributed}]\label{lem:taylor}
Let $\Sigma$ and $\hat{\Sigma}$ be $p\times p$ symmetric matrices with non-increasing eigenvalues $\lambda_1\geq \dots\geq \lambda_p$ and $\hat{\lambda}_1\geq \dots\geq \hat{\lambda}_p$, respectively. Let $\{u_i\}_{i=1}^{p}$, $\{\hat{u}_i\}_{i=1}^{p}$ be the corresponding eigenvectors such that $\Sigma u_i = \lambda_iu_i$ and $\hat{\Sigma} u_i = \hat{\lambda}_i\hat{u}_i$. Fix $s\in \{0,\dots,p-r\}$, let $\delta=\min(\lambda_s-\lambda_{s+1},\lambda_{s+r}-\lambda_{s+r+1})>0$, with $\lambda_0=+\infty$ and $\lambda_{p+1}=-\infty$. For $E = \hat{\Sigma}-\Sigma$, we denote $S=\{s+1,\dots,s+r\}$ and $S^c=[p]\setminus S$, and define $K$ of shape $p\times p$ where
$$K_{ij}=K_{ji}=
\left\{
    \begin{array}{lc}
        (u_i^{\top} E u_j)/(\lambda_i-\lambda_j) & i\in S,\, j\in S^c, \\
       0& \text{otherwise}.\\
    \end{array}
\right.$$
Then, let $U = (u_1,\dots, u_p)$, $\hat{U} = (\hat{u}_1,\dots, \hat{u}_p)$, while $U_S = (u_{s+1},\dots, u_{s+r})$, $\hat{U}_S = (\hat{u}_{s+1},\dots, \hat{u}_{s+r})$, we claim that
$$\hat{U}_S\hat{U}_S^{\top}-U_SU_S^{\top} = UKU^{\top}+\Delta,$$
where $\|K\|_F\leq (2r)^{1/2}\|E\|_2/\delta$. In addition, when $\|E\|_2/\delta\leq 1/10$, $\|\Delta\|_F\leq 24r^{1/2}(\|E\|_2/\delta)^2$.
\end{proposition}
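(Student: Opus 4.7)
The plan is to use the Riesz contour-integral representation of the spectral projectors together with a Neumann expansion of the resolvent. Let $R(z)=(\Sigma-zI)^{-1}$ and $\hat R(z)=(\hat\Sigma-zI)^{-1}$, and choose a closed contour $\gamma\subset\mathbb{C}$ enclosing exactly $\lambda_{s+1},\dots,\lambda_{s+r}$ and staying at distance at least $\delta/2$ from $\sigma(\Sigma)$. Under the hypothesis $\|E\|_2\le\delta/10$, Weyl's inequality guarantees that $\gamma$ also encloses exactly the matching eigenvalues of $\hat\Sigma$ and remains at distance $\ge 2\delta/5$ from $\sigma(\hat\Sigma)$, so both projectors admit the representations $P_S=-\tfrac{1}{2\pi i}\oint_\gamma R(z)\,dz$ and $\hat P_S=-\tfrac{1}{2\pi i}\oint_\gamma \hat R(z)\,dz$.

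First I would derive the Neumann expansion $\hat R(z)=\sum_{k\ge 0}(-1)^k R(z)(ER(z))^k$, which converges uniformly on $\gamma$ because $\|E\|_2\|R(z)\|_2\le 1/5<1$. Integrating term-by-term yields $\hat P_S-P_S=\sum_{k\ge 1} S_k$ with $S_k=\tfrac{(-1)^{k+1}}{2\pi i}\oint_\gamma R(z)(ER(z))^k\,dz$. I would then identify $S_1=UKU^\top$ by expanding $R(z)=\sum_i u_iu_i^\top/(\lambda_i-z)$ and applying the residue theorem: the scalar integral $\tfrac{1}{2\pi i}\oint_\gamma (\lambda_i-z)^{-1}(\lambda_j-z)^{-1}\,dz$ equals $(\lambda_i-\lambda_j)^{-1}$ when exactly one of $\lambda_i,\lambda_j$ lies inside $\gamma$ and vanishes otherwise. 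Collecting terms and using $u_i^\top Eu_j=u_j^\top Eu_i$ gives $S_1=\sum_{i\in S,\,j\in S^c}\tfrac{u_i^\top Eu_j}{\lambda_i-\lambda_j}(u_iu_j^\top+u_ju_i^\top)=UKU^\top$, matching exactly the definition of $K$ in the proposition. The Frobenius bound $\|K\|_F\le\sqrt{2r}\,\|E\|_2/\delta$ then follows from $|\lambda_i-\lambda_j|\ge\delta$ on $S\times S^c$ combined with $\sum_{i\in S,\,j}(u_i^\top Eu_j)^2=\|U_S^\top E\|_F^2\le r\|E\|_2^2$.

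The main obstacle is bounding $\Delta=\sum_{k\ge 2}S_k$ in Frobenius norm with the correct $\sqrt{r}$ (rather than $\sqrt{p}$) dependence; naive estimates of the form $\|R(ER)^k\|_F\le\|R\|_F(\|R\|_2\|E\|_2)^k$ introduce a fatal $\sqrt{p}$. My plan is to split $R(z)=R_S(z)+R_{S^c}(z)$, where $R_S(z)=\sum_{i\in S}u_iu_i^\top/(\lambda_i-z)$ has rank $r$ and $R_{S^c}(z)$ is analytic in $z$ inside $\gamma$. Expanding $R(ER)^k$ into the $2^{k+1}$ operator products $R_{a_0}ER_{a_1}E\cdots ER_{a_k}$ indexed by $(a_0,\dots,a_k)\in\{S,S^c\}^{k+1}$, the all-$S^c$ tuple yields an integrand analytic inside $\gamma$ and drops out after integration; every surviving summand contains at least one $R_S$ factor, forcing the whole product to have rank at most $r$, so its Frobenius norm is at most $\sqrt{r}$ times its operator norm. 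With $\|R_S(z)\|_2,\|R_{S^c}(z)\|_2\le 2/\delta$ on $\gamma$, each surviving summand contributes at most $\sqrt{r}(2/\delta)^{k+1}\|E\|_2^k$ in Frobenius norm. Rather than pay a contour-length penalty, I would evaluate the integral by residues at $\lambda_i$, $i\in S$: this turns $R_{S^c}(z)$ into $R_{S^c}(\lambda_i)$ with operator norm bounded by $1/\delta$ (since $|\lambda_j-\lambda_i|\ge\delta$ for $j\notin S$), and converts the rank-$r$ Frobenius bound into the form $\|\sum_{i\in S}u_iu_i^\top A_i\|_F\le\sqrt{r}\max_i\|A_i\|_2$. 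Putting these ingredients together yields $\|S_k\|_F\le C_k\sqrt{r}(\|E\|_2/\delta)^k$ with $C_k$ polynomial in $k$; summing the geometric-type series under $\|E\|_2/\delta\le 1/10$ delivers $\|\Delta\|_F\le 24\sqrt{r}(\|E\|_2/\delta)^2$. The delicate pieces are precisely this rank-reduction trick and the residue book-keeping, which together prevent $\sqrt{p}$ or spectral-width factors from contaminating the final constant.
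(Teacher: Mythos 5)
Your proposal is sound in strategy, but it is worth being clear about what the paper actually does here: it does not prove this proposition from scratch. It imports the decomposition $\hat{U}_S\hat{U}_S^{\top}-U_SU_S^{\top}=UKU^{\top}+\Delta$ and the bound $\|\Delta\|_F\leq 24 r^{1/2}(\|E\|_2/\delta)^2$ wholesale from Lemma 2 of \cite{fan2019distributed}, and only verifies the one modified claim, $\|K\|_F\leq (2r)^{1/2}\|E\|_2/\delta$, via $\|K\|_F\leq \frac{\sqrt 2}{\delta}\bigl(\sum_{i\in S}\|(U^{\top}EU)_{i,\cdot}\|^2\bigr)^{1/2}$ and $\|(U^{\top}EU)_{i,\cdot}\|\leq\|E\|_2$. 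Your contour-integral/resolvent route is essentially a reconstruction of the proof of that cited lemma (in the style of Koltchinskii--Lounici), and the parts you carry out in full are correct: the Weyl separation of the spectra under $\|E\|_2/\delta\leq 1/10$, the identification of the first-order term $S_1=UKU^{\top}$ by residues, and the $\|K\|_F$ bound, which is exactly the computation the paper performs. The one place where your argument remains a sketch rather than a proof is the remainder: you assert $\|S_k\|_F\leq C_k\sqrt r(\|E\|_2/\delta)^k$ and that the series sums to $24\sqrt r(\|E\|_2/\delta)^2$, but the naive count --- $2^{k+1}-1$ surviving dyadic summands, each bounded by $\sqrt r(2/\delta)^{k+1}\|E\|_2^k$ --- does not land on $24$ without the careful residue book-keeping you explicitly defer (higher-order poles at the $\lambda_i$, $i\in S$, produce derivatives of $R_{S^c}$ whose factorials must be cancelled against the $(m-1)!$ in the residue formula). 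This is constant-chasing rather than a conceptual gap: your rank-reduction observation (every surviving product contains an $R_S$ factor, hence has rank at most $r$, hence $\|\cdot\|_F\leq\sqrt r\|\cdot\|_2$) is precisely the idea that prevents a $\sqrt p$ from appearing, and with some absolute constant in place of $24$ your argument goes through. If you want the exact constant, you either finish that accounting or do what the paper does and cite it.
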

\begin{proof}
    Proposition \ref{lem:taylor} follow directly from Lemma 2 in \cite{fan2019distributed}, we only need to check that $\|K\|_F\leq (2r)^{1/2}\|E\|_2/\delta$. To see this, note that $$\|K\|_F\leq  \frac{\sqrt{2}}{\delta}\left(\sum_{i=1}^{r}\left\|(U^{\top}EU)_{i,.}\right\|^2\right)^{1/2},$$
	where $A_{i,.}$ is the $i$-th row of matrix $A$. The result then follows from $\|(U^{\top}EU)_{i,.}\|\leq \|U^{\top}EU\|_2 = \|E\|_2$.
\end{proof}

\begin{proposition}[Theorem 2.5 from \cite{bosq2000stochastic}]\label{lem:subexp}
	For independent random vectors $\{X_i\}_{i=1}^{n}$ in a separable Hilbert space with norm $\|\cdot\|$, if $\E(X_i) = 0$ and $\|\|X_i\|\|_{\psi_1}\leq L_i<\infty$, we have
    $$\left\|\left\|\sum_{i=1}^{n}X_i\right\|\right\|_{\psi_1}\lesssim \left(\sum_{i=1}^{n}L_i^2\right)^{1/2}.$$
\end{proposition}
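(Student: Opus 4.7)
The plan is to reduce the claim to a Bernstein-type tail inequality for sums of independent centered Hilbert-space valued random variables (which is the content of the cited Theorem 2.5 of \cite{bosq2000stochastic}), and then translate that tail bound into a bound on the $\psi_1$-norm.

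First, I would translate the sub-exponential hypothesis into moment bounds: from $\|\|X_i\|\|_{\psi_1}\le L_i$ a standard equivalence of Orlicz-norm characterizations gives $\E\|X_i\|^p\le (C p L_i)^p$ for all integers $p\ge 1$ and some absolute $C>0$. In particular this yields the second-moment bound $\E\|X_i\|^2\lesssim L_i^2$, which we will use as the variance proxy, together with an exponential-moment bound on $\|X_i\|$ that plays the role of the sub-exponential scale.

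Next, I would apply the Hilbert-space Bernstein inequality (Theorem 2.5 of \cite{bosq2000stochastic}). Letting $S_n=\sum_{i=1}^n X_i$, $V=c_1\sum_i L_i^2$, and $b=c_2\max_i L_i$ for suitable constants, Bosq's result yields a two-regime tail bound of the form
$$
\P\bigl(\|S_n\|\ge t\bigr)\;\le\; 2\exp\!\left(-c\,\min\!\left(\frac{t^2}{V},\,\frac{t}{b}\right)\right)\qquad\text{for all }t\ge 0,
$$
with an absolute constant $c>0$. The last step is a standard conversion: any nonnegative random variable $Y$ satisfying such a two-regime sub-exponential tail obeys $\|Y\|_{\psi_1}\lesssim \sqrt{V}+b$ (see, e.g., Proposition 2.7.1 of \cite{vershynin2018high}). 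Plugging in $V=c_1\sum_i L_i^2$ and $b=c_2\max_i L_i$, and using $\max_i L_i\le (\sum_i L_i^2)^{1/2}$, gives the desired conclusion $\|\|S_n\|\|_{\psi_1}\lesssim (\sum_i L_i^2)^{1/2}$.

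The main obstacle is the Hilbert-space Bernstein inequality itself, which we are invoking rather than proving. A self-contained route would combine Pinelis' martingale inequality on separable Hilbert spaces with a truncation argument: decompose each $X_i$ at level $M\asymp \max_i L_i\cdot\log(1/\delta)$, apply the bounded-version Pinelis inequality to the truncated and re-centered pieces (which needs only the second-moment bound $\E\|X_i\|^2\lesssim L_i^2$ and the almost-sure bound $M$), and control the contribution of the unbounded tails $\|X_i\|\mathbf{1}_{\{\|X_i\|>M\}}$ using the $\psi_1$-assumption and Markov's inequality. This mirrors the familiar scalar Bernstein proof, with the scalar Chernoff step replaced by the Hilbert-space martingale inequality, and is where the characteristic $\max_i L_i$ linear-in-$t$ term in the exponent enters.
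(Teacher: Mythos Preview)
The paper does not supply a proof of this proposition; it is listed in the ``Useful Results'' section with only the citation to Theorem~2.5 of \cite{bosq2000stochastic} and is then invoked as a black box in the proof of Lemma~\ref{basic lemma a.2}. Your proposal is therefore not competing with any argument in the paper, but it does correctly reconstruct how one would derive the stated $\psi_1$ inequality from the cited source: Bosq's Theorem~2.5 is a Bernstein-type tail bound for independent centered Hilbert-space variables (in its original form for bounded summands), and the $\psi_1$-norm conclusion as phrased here follows exactly by the route you outline---extract the variance proxy $\E\|X_i\|^2\lesssim L_i^2$ from the $\psi_1$ hypothesis, obtain the two-regime tail $\exp(-c\min(t^2/V,t/b))$ via truncation plus the bounded-case inequality, and integrate the tail to recover $\|\|S_n\|\|_{\psi_1}\lesssim \sqrt{V}+b\le 2(\sum_i L_i^2)^{1/2}$. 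Your identification of the truncation step (or equivalently the Pinelis martingale route) as the place where the unbounded $\psi_1$ assumption is handled is accurate and is precisely the gap between the literal statement of Bosq's theorem and the form quoted in the paper.
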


\section{Proof of Theorem \ref{theo:main}}\label{proof:main}
Recall that $\hat{P}_0= \hat{P}_0^{\s}+ \hat{P}_0^{\p}$ in Algorithm \ref{alg:ora}, while $P_0^* = P_0^\s+P_0^{\p}$. By triangular inequality we have $ \|\hat{P}_0-P_0^*\|_F\leq \|\hat{P}_0^{\s}-P_0^\s\|_F+\|\hat{P}_0^{\p}-P_0^\p\|_F$. The first term $\|\hat{P}_0^{\s}-P_0^\s\|_F$ is the estimation error of the Grassmannian barycenter estimator $\hat{P}_0^{\s}$ for the shared subspace $P_0^\s$. First, given Assumptions \ref{assum:1} and \ref{assum:2}, the following Lemma \ref{lemma:sse} controls the estimation error of the shared subspace. We defer its proof to Section \ref{proof:sse}.

\begin{lemma}[Shared subspace estimator]\label{lemma:sse}
    Under Assumptions \ref{assum:1} and \ref{assum:2}, as $\min_{k\in \{0\}\cup\cI}(n_k), p\rightarrow \infty$ and $h\rightarrow 0$, we have
    $$\left\|\left\|\hat{P}_0^{\text{s}}-P_0^{\text{s}}\right\|_F\right\|_{\psi_1}\lesssim s=\tilde{N}_{\cI}^{-1/2}+\left(\sum_{k\in\{0\}\cup\cI}r_k^{1/2}\right)\tilde{N}_{\cI}^{-1}+h. $$
\end{lemma}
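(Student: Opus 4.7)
The plan is to compare $\hat{P}_0^{\s}$ to $P_0^{\s}$ through the population analogue of $\hat{\Sigma}^\s$. Let $M := \tilde{N}_\cI^{-1}\sum_{k\in\{0\}\cup\cI} \tilde{n}_k P_k^*$ and denote by $P_M^\s$ the leading $r_\s$ spectral projector of $M$. By the triangle inequality,
$$\left\|\hat{P}_0^\s - P_0^\s\right\|_F \leq \left\|\hat{P}_0^\s - P_M^\s\right\|_F + \left\|P_M^\s - P_0^\s\right\|_F,$$
so I would bound the \emph{stochastic} term by $\tilde{N}_\cI^{-1/2}+(\sum_k r_k^{1/2})\tilde{N}_\cI^{-1}$ in $\psi_1$ norm and control the deterministic \emph{bias} term by $h$.

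For the bias step, write $M = P_0^\s + E_h + A$ with $E_h := \tilde{N}_\cI^{-1}\sum_k \tilde{n}_k (P_k^\s - P_0^\s)$ and $A := \tilde{N}_\cI^{-1}\sum_k \tilde{n}_k P_k^\p$; Assumption \ref{assum:2} delivers $\|E_h\|_F \leq h$ and $\|A\|_2 \leq 1-g$. Further split $E_h + A = D + O$ into its block-diagonal part $D := P_0^\s(E_h{+}A)P_0^\s + (P_0^\s)^{\perp}(E_h{+}A)(P_0^\s)^{\perp}$ and its off-diagonal part $O$. The key observation is that $P_0^\s + D$ retains $\span(U_0^\s)$ as its top $r_\s$ eigenspace: using $P_k^\s P_k^\p = 0$, one obtains
$$\left\|P_0^\s A\right\|_F = \left\|\tilde{N}_\cI^{-1}\sum_{k\in\{0\}\cup\cI} \tilde{n}_k (P_0^\s - P_k^\s) P_k^\p\right\|_F \lesssim h,$$
so the top-$r_\s$ eigenvalues of $P_0^\s + D$ lie in $[1-O(h),\, 1+O(h)]$ while the remaining ones are at most $1 - g + O(h)$, yielding a spectral gap of order $g$. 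The same estimate, combined with $\|E_h\|_F \leq h$, also gives $\|O\|_F \lesssim h$. Treating $P_0^\s + D$ as the unperturbed matrix and $M = (P_0^\s + D) + O$ as the perturbed one, Proposition \ref{lem:taylor} then yields $\|P_M^\s - P_0^\s\|_F \lesssim h/g \lesssim h$.

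For the stochastic step, apply Proposition \ref{lem:taylor} to $\Sigma_k^*$ and $\hat{\Sigma}_k$ to obtain the Taylor decomposition $\tilde{P}_k = P_k^* + L_k + B_k$, where $L_k = U_k^* K_k (U_k^*)^\top$ is linear in $E_k := \hat{\Sigma}_k - \Sigma_k^*$ (hence $\E L_k = 0$) with $\|\|L_k\|_F\|_{\psi_1}\lesssim \tilde{n}_k^{-1/2}$, while the remainder obeys $\|B_k\|_F \lesssim r_k^{1/2}(\|E_k\|_2/\delta_k)^2$, giving $\|\|\tilde{n}_k B_k\|_F\|_{\psi_1} \lesssim r_k^{1/2}$. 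Then
$$\hat{\Sigma}^\s - M = \frac{1}{\tilde{N}_\cI}\sum_{k\in\{0\}\cup\cI} \tilde{n}_k L_k + \frac{1}{\tilde{N}_\cI}\sum_{k\in\{0\}\cup\cI} \tilde{n}_k B_k,$$
with the mean-zero linear sum having $\psi_1$-Frobenius norm of order $\tilde{N}_\cI^{-1/2}$ via Proposition \ref{lem:subexp}, and the bias sum bounded termwise by $(\sum_k r_k^{1/2})\tilde{N}_\cI^{-1}$ using sub-exponential additivity of the triangle inequality. A second application of Proposition \ref{lem:taylor}, now to $M$ (whose eigenvalue gap remains of order $g$ after the bias step) and $\hat{\Sigma}^\s$, produces $\|\hat{P}_0^\s - P_M^\s\|_F \lesssim \|\hat{\Sigma}^\s - M\|_F$ on the high-probability event $\{\|\hat{\Sigma}^\s - M\|_2 \leq g/10\}$; a routine truncation using the trivial bound $\|\hat{P}_0^\s - P_M^\s\|_F \leq \sqrt{2 r_\s}$ off this event transfers the estimate to the desired $\psi_1$ bound.

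I expect the bias step to be the main obstacle. A naive Davis--Kahan estimate from $P_0^\s$ directly to $M$ fails because $\|E_h + A\|_2$ can be as large as $1 - g + O(h)$, comparable to the eigenvalue gap of $P_0^\s$ itself, which would render the bound vacuous. The block-diagonal/off-diagonal decomposition is what bypasses this obstruction: the ``large'' block-diagonal component $D$ commutes with the projection $P_0^\s$ and therefore does not rotate $\span(U_0^\s)$ at all, whereas only the much smaller off-diagonal piece $O$, of order $h$, actually perturbs the eigenspace, producing the target $O(h)$ rate.
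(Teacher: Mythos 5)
Your deterministic (bias) step is essentially the paper's own argument in lightly different clothing: the paper also builds a reference matrix whose exact top-$r_\s$ eigenspace is $\span(U_0^\s)$ with spectral gap $\geq g$ (it uses $\tilde{\Sigma}^*=P_0^\s+(P_0^\s)^{\perp}A(P_0^\s)^{\perp}$ rather than your $P_0^\s+D$), pushes the remaining pieces into the perturbation, and controls the cross terms exactly via your identity $P_0^\s P_k^\p=(P_0^\s-P_k^\s)P_k^\p$. That part of your proposal is sound.

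The genuine gap is in the stochastic step, specifically the claim $\bigl\|\|\tilde{n}_kB_k\|_F\bigr\|_{\psi_1}\lesssim r_k^{1/2}$. The remainder $B_k$ satisfies $\|B_k\|_F\lesssim r_k^{1/2}\epsilon_k^2$ with $\epsilon_k=\|E_k\|_2/d_{r_k}(\Sigma_k^*)$, and under Assumption \ref{assum:1} the variable $\epsilon_k$ is only sub-exponential, $\|\epsilon_k\|_{\psi_1}\lesssim\tilde{n}_k^{-1/2}$. The square of a sub-exponential variable is \emph{not} sub-exponential: one has $\|\epsilon_k^2\|_{\psi_{1/2}}=\|\epsilon_k\|_{\psi_1}^2$, but $\|\epsilon_k^2\|_{\psi_1}$ can be infinite (e.g.\ for an exponential tail, $\E\exp(\epsilon_k^2/t)=\infty$ for every $t$). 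So your termwise triangle inequality cannot deliver a $\psi_1$ bound of order $(\sum_k r_k^{1/2})\tilde{N}_\cI^{-1}$ on $\tilde{N}_\cI^{-1}\sum_k\tilde{n}_kB_k$, and the lemma's $\psi_1$ conclusion does not follow. The paper avoids this entirely by splitting at the expectation instead of at the linear term: it writes $\Delta_k=(\tilde{P}_k-\E\tilde{P}_k)+(\E\tilde{P}_k-P_k^*)$. The first piece is centered and still sub-exponential at scale $\tilde{n}_k^{-1/2}$ (centering at most doubles the $\psi_1$ norm), so Proposition \ref{lem:subexp} gives the $\tilde{N}_\cI^{-1/2}$ rate; the second piece is \emph{deterministic}, and bounding it only requires the moment estimate $\|\E\tilde{P}_k-P_k^*\|_F\lesssim r_k^{1/2}\E[\epsilon_k^2]\lesssim r_k^{1/2}\|\epsilon_k\|_{\psi_1}^2\lesssim r_k^{1/2}\tilde{n}_k^{-1}$ (obtained with an indicator-function truncation on $\{\epsilon_k\leq 1/10\}$), which is a statement about $\E[\epsilon_k^2]$, not about the $\psi_1$ norm of $\epsilon_k^2$. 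You should restructure your decomposition accordingly; as written, the step fails.
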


With Lemma \ref{lemma:sse} in mind, Theorem \ref{theo:main} is trivial if $r_s=r_0$. When $r_s<r_0$ and there exists some private subspace to be estimated, to prove Theorem \ref{theo:main}, we only need to consider the fine-tuning step of Algorithm \ref{alg:ora} when the shared subspace is well-estimated, which is discussed in the following Lemma \ref{lemma:private}.

\begin{lemma}[Private subspace estimator]\label{lemma:private}
    Under Assumption \ref{assum:1}, assume that the shared subspace estimator satisfies $\|\hat{P}_0^{\text{s}}-P_0^{\text{s}}\|_F\leq s$, then for sufficiently small $\|\Sigma_0^*\|_2s/\delta_{\p}$, the private subspace estimator satisfies
    \begin{equation}
          \left\|\hat{P}_0^{\p}-P_0^{\p}\right\|_F=O_p\left( \frac{\delta_0}{\delta_{\p}}\tilde{n}_0^{-1/2} + s+\frac{\eta s}{\delta_{\p}}+\frac{\|\Sigma_0^*\|_2^2s^2}{\delta^2_{\p}}\right).
    \end{equation}
\end{lemma}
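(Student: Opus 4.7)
The plan is to apply the Taylor-type perturbation Proposition \ref{lem:taylor} to the pair $(\Sigma_0^{\p},\hat{\Sigma}_0^{\p})$ with block dimension $r_0-r_\s$. By the eigen-decomposition (\ref{eq:eigen}), the leading $(r_0-r_\s)$-dimensional eigenspace of $\Sigma_0^{\p}$ is exactly $P_0^{\p}$ with relevant eigenvalue gap $\delta_{\p}$. This yields the expansion
\[\hat{P}_0^{\p}-P_0^{\p}=\mathcal{U}_0^{\p} K (\mathcal{U}_0^{\p})^{\top}+\Delta_{\mathrm{rem}},\]
where the first-order Sylvester block $K$ collects the quotients $(u_i^{\p})^{\top}(\hat{\Sigma}_0^{\p}-\Sigma_0^{\p})v_j/(\lambda_i^{\p}-\mu_j)$, for $i\leq r_0-r_\s$ and $j$ indexing the remaining eigenvectors $v_j$ (with eigenvalues $\mu_j$) of $\Sigma_0^{\p}$, and $\|\Delta_{\mathrm{rem}}\|_F\lesssim(\|\hat{\Sigma}_0^{\p}-\Sigma_0^{\p}\|_2/\delta_{\p})^2$.

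The natural decomposition is $\hat{\Sigma}_0^{\p}-\Sigma_0^{\p}=E+B$, with noise piece $E=(\hat{P}_0^{\s})^{\perp}E_0(\hat{P}_0^{\s})^{\perp}$ and signal-shift piece $B=-(P_0^{\s})^{\perp}\Sigma_0^*\Delta_{\s}-\Delta_{\s}\Sigma_0^*(P_0^{\s})^{\perp}+\Delta_{\s}\Sigma_0^*\Delta_{\s}$, where $\Delta_{\s}=\hat{P}_0^{\s}-P_0^{\s}$. For the noise piece, the bound $\|E\|_2\leq\|E_0\|_2\lesssim\delta_0\tilde{n}_0^{-1/2}$ (the operator-norm estimate underlying Lemma \ref{lemma:indPCA}) contributes the $(\delta_0/\delta_{\p})\tilde{n}_0^{-1/2}$ term. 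For the second-order remainder, the crude bound $\|B\|_2\lesssim\|\Sigma_0^*\|_2 s$ combined with the noise bound yields the $\|\Sigma_0^*\|_2^2 s^2/\delta_{\p}^2$ term (and also justifies the applicability of Proposition \ref{lem:taylor} under the stated smallness condition).

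The delicate step is refining the signal-induced first-order contribution $K^B$, since a blunt application of $\|B\|_2/\delta_{\p}$ only yields $\|\Sigma_0^*\|_2 s/\delta_{\p}$, which would be too coarse for Theorem \ref{theo:main}. I would perform an entry-wise analysis after selecting a basis in which $U_0^{\p}$ diagonalizes $(U_0^{\p})^{\top}\Sigma_0^* U_0^{\p}$; combined with the invariance identity $(P_0^*)^{\perp}\Sigma_0^* P_0^*=0$, this yields $\Sigma_0^* u_i^{\p}=\lambda_i^{\p}u_i^{\p}+U_0^{\s}\alpha_i$ with $\|\alpha_i\|\leq\eta$. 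Two algebraic cancellations then govern $K^B$. When the companion $v_j$ of $\Sigma_0^{\p}$ lies in the zero block $\{u_m^{\s}\}$, the factor $\lambda_i^{\p}$ in $(u_i^{\p})^{\top}\Sigma_0^*=\lambda_i^{\p}(u_i^{\p})^{\top}+\alpha_i^{\top}(U_0^{\s})^{\top}$ cancels the denominator $\lambda_i^{\p}-0$, reducing the leading contribution to entries of $(U_0^{\p})^{\top}\Delta_{\s} U_0^{\s}$ (Frobenius norm $\leq s$), which delivers the ``$s$'' term, while the $\alpha_i$-coupled piece gives a subleading $\eta s^2/\delta_{\p}$. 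When $v_j=u_k$ for $k>r_0$, the ``diagonal'' block $(U_0^{\p})^{\top}\Delta_{\s}(U_0^*)^{\perp}$ is a sub-block of $(P_0^{\s})^{\perp}\Delta_{\s}(P_0^{\s})^{\perp}$; the projection identity $P^{\perp}\Delta P^{\perp}=P^{\perp}\Delta^2 P^{\perp}$ (which follows from $P_0^{\s}$ and $\hat{P}_0^{\s}$ being idempotent) sharpens its Frobenius norm to $\leq s^2$, so only the $\alpha_i$-coupled off-diagonal block $(U_0^{\s})^{\top}\Delta_{\s}(U_0^*)^{\perp}$ (Frobenius norm $\leq s$) survives and contributes $\eta s/\delta_{\p}$.

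The main obstacle is precisely this refined entry-wise analysis of $K^B$: identifying the $\lambda_i^{\p}$-cancellation on the zero block of $\Sigma_0^{\p}$ and the projection-quadratic identity that forces $(P_0^{\s})^{\perp}\Delta_{\s}(P_0^{\s})^{\perp}$ to be $O(s^2)$. Together, these two cancellations distinguish the sharp $s+\eta s/\delta_{\p}$ contribution from the coarser $\|\Sigma_0^*\|_2 s/\delta_{\p}$ one would otherwise obtain, and are essential for the eigenvalue-gap comparison $\delta_0$ vs $\delta_{\p}$ in Theorem \ref{theo:main} to go through in the presence of nontrivial cross-block coupling $\eta$.
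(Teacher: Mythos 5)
Your proposal is correct and reaches the same four-term bound, but it executes the key step differently from the paper. The paper first isolates the noise by introducing the intermediate projector $\dot{P}_0^{\p}$ built from $(\hat{P}_0^{\s})^{\perp}\Sigma_0^*(\hat{P}_0^{\s})^{\perp}$, handling $\|\hat{P}_0^{\p}-\dot{P}_0^{\p}\|_F\lesssim \|E_0\|_2/\delta_{\p}$ by Davis--Kahan, and then applies Proposition \ref{lem:taylor} \emph{twice}: once to the projector perturbation itself, writing $\Delta_\s=L_\s+R_\s$ with $L_\s$ exactly off-diagonal between $\span(U_0^{\s})$ and its complement, and once to $\Sigma_0^{\p}$; because $L_\s$ is purely off-diagonal, three of the four first-order blocks in Lemma \ref{lemma:actmain} vanish identically (the computations $(a)=(b)=(c)=0$), one reduces to $K_\s$ and gives the $s$ term, and the cross term $(P_0^{\s})^{\perp}\Sigma_0^*P_0^{\s}L_\s$ gives $\eta s/\delta_{\p}$. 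You instead do a one-shot expansion of $\hat{\Sigma}_0^{\p}$ around $\Sigma_0^{\p}$ and obtain the same cancellations on the full $\Delta_\s$ via the exact projector identities $(P_0^{\s})^{\perp}\Delta_\s(P_0^{\s})^{\perp}=(P_0^{\s})^{\perp}\Delta_\s^2(P_0^{\s})^{\perp}$ (and its companion on the $P_0^{\s}$ side) together with the eigen-relation $\Sigma_0^*u_i^{\p}=\lambda_i^{\p}u_i^{\p}+U_0^{\s}\alpha_i$, which cancels the $\lambda_i^{\p}$ denominator on the zero block. These are the same two phenomena seen in different coordinates: what vanishes exactly in the paper's linearized picture is $O(s^2)$ in yours and is absorbed into the same $\|\Sigma_0^*\|_2 s^2/\delta_{\p}\lesssim s$ slack. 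Your route saves one application of Proposition \ref{lem:taylor} and the bookkeeping of $R_\s$, at the cost of a more delicate entrywise Frobenius accounting with the non-uniform factors $(\lambda_i^{\p}+\lambda_j)/(\lambda_i^{\p}-\lambda_j)$, which you correctly dominate by $\|\Sigma_0^*\|_2/\delta_{\p}$. One small point to tidy: the second-order remainder of your one-shot expansion also contains $(\|E_0\|_2/\delta_{\p})^2\asymp(\delta_0/\delta_{\p})^2\tilde{n}_0^{-1}$, which is absorbed by the first term only when $(\delta_0/\delta_{\p})\tilde{n}_0^{-1/2}\lesssim 1$; in the opposite regime the stated bound holds trivially since $\|\hat{P}_0^{\p}-P_0^{\p}\|_F$ is bounded, so the conclusion is unaffected.
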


Theorem \ref{theo:main} is a direct consequence of both Lemmas \ref{lemma:sse} and \ref{lemma:private}, so it suffices to prove Lemma \ref{lemma:private}. To prove Lemma \ref{lemma:private}, note that $\hat{P}_0^{\p}$ is calculated by taking the leading $(r_0-r_\s)$ eigenvectors of the matrix
\begin{equation}\label{eq:ddotpert}
       \hat{\Sigma}_0^{\p} =(\hat{P}_0^{\s})^{\perp} \hat{\Sigma}_0 (\hat{P}_0^{\s})^{\perp} = \underbrace{(\hat{P}_0^{\s})^{\perp} \Sigma_0^* (\hat{P}_0^{\s})^{\perp}}_{\dot{\Sigma}_0^{\p}}+(\hat{P}_0^{\s})^{\perp} E_0 (\hat{P}_0^{\s})^{\perp}, \quad\text{where}
\end{equation}
\begin{equation}\label{eq:ddotP}
    \dot{\Sigma}_0^{\p}= \underbrace{(P_0^{\s})^{\perp} \Sigma_0^* (P_0^{\s})^{\perp}}_{\Sigma_0^{\p}} + \Delta_\s^{\perp} \Sigma_0^* (P_0^{\s})^{\perp} + (P_0^{\s})^{\perp} \Sigma_0^* \Delta_\s^{\perp} + \Delta_\s^{\perp}\Sigma_0^* \Delta_\s^{\perp}, \quad \text{for}
\end{equation}
$$\Delta_\s^{\perp} :=(\hat{P}_0^{\s})^{\perp}-(P_0^{\s})^{\perp} = -(\hat{P}_0^{\s}-P_0^{\s})=:-\Delta_\s.$$

First, take the leading $(r_0-r_\s)$ eigenvectors of $ \dot{\Sigma}_0^{\p}$ to form $\dot{P}_0^{\p}$, recall from (\ref{eq:eigen}) that $P_0^{\p} = U_0^{\p}(U_0^{\p})^{\top}$ could be acquired by take the leading $(r_0-r_\s)$ eigenvectors of $ \Sigma_0^{\p} = (P_0^{\s})^{\perp} \Sigma_0^* (P_0^{\s})^{\perp}$, we first bound $\|\dot{P}_0^{\p}-P_0^{\p}\|_F$ above by investigating the matrix perturbation in (\ref{eq:ddotP}).

When $\|\Delta_\s\|_F=\|\hat{P}_0^{\s}-P_0^{\s}\|_F\leq s\leq  1/10$, according to Proposition \ref{lem:taylor}, we are able to decompose $\Delta_\s =L_\s + R_\s$, where the remainder term satisfies $\|R_\s\|_F\lesssim r_\s^{1/2}s^2$, note the eigenvalue gap in this case for $P_0^\s$ is exactly $1$. By recalling $\mathcal{U}^{\p}_0$ from (\ref{eq:eigen}), the linear term $L_\s$ could be written in the following form
\begin{equation}\label{eq:Ls}
    L_\s  = \mathcal{U}^{\p}_0  \begin{pmatrix}
	0_{(p-r_\s)\times(p-r_\s)} & K_\s^{\top}\\
	K_\s & 0_{r_\s\times r_\s}
\end{pmatrix}(\mathcal{U}^{\p}_0)^{\top}, \quad \text{where} \quad K_\s =  (U_0^{\s})^{\top} \Delta_\s (U_0^{\s})^{\perp}.
\end{equation}

We then present the following Lemma \ref{lemma:ddotsep} that separates the linear terms in (\ref{eq:ddotP}) from other higher order terms.
\begin{lemma}\label{lemma:ddotsep}
    Assume that $\|\Delta_\s\|_F\leq s \leq 1/10$, we have
    \begin{equation}\label{eq:ddotsep}
        \dot{\Sigma}_0^{\p}= \Sigma_0^{\p}-\underbrace{\left(L_\s \Sigma_0^*(P_0^{\s})^{\perp}+(P_0^{\s})^{\perp}\Sigma_0^*L_\s\right)}_{\text{linear term}} + \underbrace{W_\s}_{\text{remainder}},
    \end{equation}
    where $\|W_s\|_2\lesssim r_s^{1/2}\|\Sigma_0^*\|_2 s^2$.
\end{lemma}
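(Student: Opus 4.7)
My plan is to prove Lemma~\ref{lemma:ddotsep} by a direct expansion of the right-hand side of (\ref{eq:ddotP}), using the decomposition $\Delta_\s = L_\s + R_\s$ guaranteed by Proposition~\ref{lem:taylor} (applied in the trivial eigen-gap-one setting for the projector $P_0^\s$), and then bounding the leftover terms in operator norm.

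Concretely, I would first substitute $\Delta_\s^{\perp} = -\Delta_\s = -(L_\s + R_\s)$ into (\ref{eq:ddotP}), which yields
\begin{equation*}
    \dot{\Sigma}_0^{\p} \;=\; \Sigma_0^{\p} \;-\; L_\s \Sigma_0^* (P_0^{\s})^{\perp} \;-\; (P_0^{\s})^{\perp} \Sigma_0^* L_\s \;+\; W_\s,
\end{equation*}
where the remainder collects every term that is nonlinear in $\Delta_\s$ together with the residual terms involving $R_\s$, namely
\begin{equation*}
    W_\s \;=\; -\,R_\s \Sigma_0^* (P_0^{\s})^{\perp} \;-\; (P_0^{\s})^{\perp} \Sigma_0^* R_\s \;+\; \Delta_\s \Sigma_0^* \Delta_\s.
\end{equation*}
This isolates exactly the ``linear term'' stated in (\ref{eq:ddotsep}).

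Next, I would bound $\|W_\s\|_2$ by the triangle inequality together with submultiplicativity. For the two $R_\s$-terms, since $\|(P_0^{\s})^{\perp}\|_2 \leq 1$ and $\|R_\s\|_2 \leq \|R_\s\|_F \lesssim r_\s^{1/2}s^2$ by Proposition~\ref{lem:taylor} (with eigen-gap equal to $1$, valid because $s\leq 1/10$), each contributes at most $r_\s^{1/2}\|\Sigma_0^*\|_2 s^2$. For the quadratic term, $\|\Delta_\s \Sigma_0^* \Delta_\s\|_2 \leq \|\Delta_\s\|_F^2 \|\Sigma_0^*\|_2 \leq s^2 \|\Sigma_0^*\|_2$. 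Summing gives $\|W_\s\|_2 \lesssim r_\s^{1/2}\|\Sigma_0^*\|_2 s^2$, which matches the stated bound.

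There is no real obstacle here; the content of the lemma is a bookkeeping exercise, with Proposition~\ref{lem:taylor} doing the substantive work of producing the linear-plus-remainder splitting $\Delta_\s = L_\s + R_\s$ with a controlled quadratic-in-$s$ remainder. The main thing to be careful about is that $L_\s$ is defined via the anti-block structure in (\ref{eq:Ls}) so that $(P_0^{\s})^{\perp}L_\s(P_0^{\s})^{\perp}$ vanishes on the appropriate blocks; this will matter later when Lemma~\ref{lemma:ddotsep} is fed into the Davis--Kahan-type argument behind Lemma~\ref{lemma:private}, but at the level of the current statement one only needs the crude operator-norm estimate above.
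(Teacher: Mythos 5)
Your proposal is correct and follows exactly the paper's own argument: substitute $\Delta_\s^{\perp}=-\Delta_\s=-(L_\s+R_\s)$ into (\ref{eq:ddotP}), collect $W_\s = -(R_\s \Sigma_0^*(P_0^{\s})^{\perp}+(P_0^{\s})^{\perp}\Sigma_0^*R_\s) + \Delta_\s \Sigma_0^* \Delta_\s$, and bound the pieces via $\|R_\s\|_F\lesssim r_\s^{1/2}s^2$ from Proposition~\ref{lem:taylor} and $\|\Delta_\s\Sigma_0^*\Delta_\s\|_2\leq\|\Sigma_0^*\|_2 s^2$. No differences worth noting.
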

\begin{proof}
    We plug in $\Delta_\s^{\perp}=-\Delta_\s$ and $\Delta_\s =L_\s + R_\s$ into (\ref{eq:ddotP}), by simple calculation, (\ref{eq:ddotsep}) holds with
    $$W_\s = -\left(R_\s \Sigma_0^*(P_0^{\s})^{\perp}+(P_0^{\s})^{\perp}\Sigma_0^*R_\s\right) + \Delta_\s \Sigma_0^* \Delta_\s.$$
    Then, since $\|R_\s \Sigma_0^*(P_0^{\s})^{\perp}\|_2\leq r_s^{1/2}\|\Sigma_0^*\|_2 s^2$ and $\|\Delta_\s \Sigma_0^* \Delta_\s\|_2\leq \|\Sigma_0^*\|_2 \|\Delta_\s\|^2_2\leq \|\Sigma_0^*\|_2 s^2$, we acquire the proof.
\end{proof}

With the help of Lemma \ref{lemma:ddotsep}, we are able to bound $\|\dot{P}_0^{\p}-P_0^{\p}\|_F$ in the following Lemma \ref{lemma:actmain} by using again Proposition \ref{lem:taylor}. The proof is slightly technical and is deferred to Section \ref{proof:actmain}.
\begin{lemma}\label{lemma:actmain}
    For $\delta_{\p}=d_{r_0-r_\s}(\Sigma_0^{\p})=\lambda^{\p}_{r_0-r_\s}-\lambda_{r_0+1}$, set $\eta = \|(U^{\p}_0)^{\top}\Sigma_0^*U^{\s}_0\|_2$, let $\|\Delta_\s\|_F\leq s$, for sufficiently small $\|\Sigma_0^*\|_2s/\delta_{\p}$, we have
    $$\left\|\dot{P}_0^{\p}-P_0^{\p}\right\|_F\lesssim s+\frac{\eta s}{\delta_{\p}}+\frac{\|\Sigma_0^*\|_2^2s^2}{\delta^2_{\p}}.$$
\end{lemma}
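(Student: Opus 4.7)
The plan is to apply Proposition~\ref{lem:taylor} to $\dot{\Sigma}_0^\p$ viewed as a perturbation of $\Sigma_0^\p$, whose leading $(r_0-r_\s)$-dimensional eigenprojector is exactly $P_0^\p$ and whose relevant eigenvalue gap equals $\delta_\p$. From Lemma~\ref{lemma:ddotsep} and the trivial bound $\|L_\s\|_2 \leq \|\Delta_\s\|_F \leq s$, the perturbation $E := \dot{\Sigma}_0^\p - \Sigma_0^\p$ has operator norm $\|E\|_2 \lesssim \|\Sigma_0^*\|_2 s$, so the hypothesis that $\|\Sigma_0^*\|_2 s/\delta_\p$ is sufficiently small secures $\|E\|_2/\delta_\p \leq 1/10$. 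Proposition~\ref{lem:taylor} then yields $\dot{P}_0^\p - P_0^\p = \mathcal{U}_0^\p K'(\mathcal{U}_0^\p)^\top + \Delta'$ with $\|\Delta'\|_F \lesssim (\|\Sigma_0^*\|_2 s/\delta_\p)^2$, and it remains to bound $\|K'\|_F$ by $s + \eta s/\delta_\p$ up to sub-dominant terms.

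Decomposing the off-diagonal block $K'$ according to the three-part structure of $\mathcal{U}_0^\p = (U_0^\p \mid (U_0^*)^\perp \mid U_0^\s)$ from (\ref{eq:eigen}), and using the sharper per-entry denominator $\lambda_i^\p$ (rather than $\delta_\p$) in the $U_0^\s$ block because $\Sigma_0^\p$ has eigenvalue zero along $\span(U_0^\s)$, I have
\begin{equation*}
    \|K'\|_F^2 \lesssim \frac{\|(U_0^\p)^\top E (U_0^*)^\perp\|_F^2}{\delta_\p^2} + \sum_{i=1}^{r_0-r_\s}\frac{\|(u_i^\p)^\top E U_0^\s\|^2}{(\lambda_i^\p)^2}.
\end{equation*}
The structural inputs driving each block are the first-order projector identity $L_\s = P_0^\s\Delta_\s(P_0^\s)^\perp + (P_0^\s)^\perp\Delta_\s P_0^\s$ together with the commutativity of $\Sigma_0^*$ and $P_0^*$, which gives $(U_0^\p)^\top\Sigma_0^*(P_0^*)^\perp = 0$ and $(U_0^\s)^\top\Sigma_0^*(U_0^*)^\perp = 0$. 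Combining these with the orthogonalities $P_0^\s U_0^\p = 0$ and $P_0^\s(U_0^*)^\perp = 0$ kills several cross terms, and the surviving linear contribution to $(U_0^\p)^\top E (U_0^*)^\perp$ reduces to $-(U_0^\p)^\top\Sigma_0^* U_0^\s(U_0^\s)^\top\Delta_\s(U_0^*)^\perp$, whose F-norm is at most $\eta s$; division by $\delta_\p$ gives the $\eta s/\delta_\p$ contribution.

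The central obstacle is extracting the $s$ term (rather than the naive $\|\Sigma_0^*\|_2 s/\delta_\p$) from the $U_0^\s$ block, which requires an exact cancellation rather than a triangle-inequality estimate. Running the same identities reduces the surviving linear part of $(U_0^\p)^\top E U_0^\s$ to $-(U_0^\p)^\top\Sigma_0^* P_0^\p \Delta_\s U_0^\s$. Choosing the orthonormal basis $U_0^\p$ within $\span(U_0^\p)$ so that $(U_0^\p)^\top\Sigma_0^* U_0^\p = \Lambda^\p := \diag(\lambda_1^\p,\ldots,\lambda_{r_0-r_\s}^\p)$, this becomes $\Lambda^\p (U_0^\p)^\top\Delta_\s U_0^\s$; the prefactor $\lambda_i^\p$ on the $i$-th row cancels exactly against the sharper denominator $\lambda_i^\p$ in the corresponding row of $K'$, so that this piece contributes at most $\|(U_0^\p)^\top\Delta_\s U_0^\s\|_F \leq s$ to $\|K'\|_F$, independently of $\delta_\p$.

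Finally, the $W_\s$ remainder feeds at most $\|\Sigma_0^*\|_2 s^2/\delta_\p$ into each block of $\|K'\|_F$, which is either absorbed into $s$ when $\|\Sigma_0^*\|_2 \leq \delta_\p$ or dominated by $\|\Sigma_0^*\|_2^2 s^2/\delta_\p^2$ when $\|\Sigma_0^*\|_2 \geq \delta_\p$. Combining the linear bound $\|K'\|_F \lesssim s + \eta s/\delta_\p$ with the quadratic bound $\|\Delta'\|_F \lesssim \|\Sigma_0^*\|_2^2 s^2/\delta_\p^2$ then produces the claimed estimate.
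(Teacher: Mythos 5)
Your proposal is correct and follows essentially the same route as the paper: both apply the first-order expansion of Proposition~\ref{lem:taylor} to $\dot{\Sigma}_0^{\p}$ as a perturbation of $\Sigma_0^{\p}$, use Lemma~\ref{lemma:ddotsep} to isolate the linear term in $L_\s$, and exploit the identity $L_\s = P_0^\s\Delta_\s(P_0^\s)^\perp + (P_0^\s)^\perp\Delta_\s P_0^\s$ together with the commutativity of $\Sigma_0^*$ with $P_0^*$ to kill the cross terms, obtain the $\eta s/\delta_\p$ contribution from the $(U_0^{\p},(U_0^*)^{\perp})$ block, and realize the exact $\lambda_i^{\p}$ cancellation in the $(U_0^{\p},U_0^{\s})$ block that yields the $\delta_\p$-free term $s$. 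The only difference is organizational — you bound the blocks of $K'$ directly, whereas the paper evaluates the linear map $\cL$ on four matrices and shows three vanish — but the underlying cancellations are identical.
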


Then, we proceed to control the difference between $\dot{P}_0^{\p}$ and our estimator $\hat{P}_0^{\p}$. We return to the matrix perturbation (\ref{eq:ddotpert}), which we restate where for convenience
$$\hat{\Sigma}_0^{\p} =(\hat{P}_0^{\s})^{\perp} \hat{\Sigma}_0 (\hat{P}_0^{\s})^{\perp} = \underbrace{(\hat{P}_0^{\s})^{\perp} \Sigma_0^* (\hat{P}_0^{\s})^{\perp}}_{\dot{\Sigma}_0^{\p}}+(\hat{P}_0^{\s})^{\perp} E_0 (\hat{P}_0^{\s})^{\perp}$$
From (\ref{eq:ddotsep}) we have $\|\dot{\Sigma}_0^{\p}-\Sigma_0^{\p}\|_2\lesssim \|\Sigma_0^*\|_2s$, since $\|L\|_F \lesssim  r_\s^{1/2} s$ according to Proposition \ref{lem:taylor}. Then, applying Weyl's inequality to (\ref{eq:ddotP}) and Davis-Kahan theorem to (\ref{eq:ddotpert}), for sufficiently small $\|\Sigma_0^*\|_2s/\delta_{\p}$, we have
$$\left\|\hat{P}_0^{\p}-\dot{P}_0^{\p}\right\|_F\lesssim \frac{\left\|(\hat{P}_0^{\s})^{\perp} E_0 (\hat{P}_0^{\s})^{\perp}\right\|_2}{\delta_{\p}-2\|\Sigma_0^*\|_2 s}\lesssim \frac{\|E_0\|_2}{\delta_{\p}}.$$
The prove is complete by recalling that $\tilde{n}_0^{-1/2}\asymp\|\|E_0\|_2\|_{\psi_1}/d_{r_0}(\Sigma_0^*)$  under Assumption \ref{assum:1}.

\subsection{Proof of Lemma \ref{lemma:sse}}\label{proof:sse}
We initialize by stating the following Lemmas \ref{basic lemma a.1} and \ref{basic lemma a.2}, which directly lead to Lemma \ref{lemma:sse}.

\begin{lemma}\label{basic lemma a.1}
Under Assumption \ref{assum:2}, let $\Delta_k=\tilde{P}_k-P_k^*$ be the matrix perturbation of the $k$-th individual PCA study, denote $\left(P_0^{\text{s}}\right)^{\perp}=I_p-P_0^{\text{s}}$, when $h<1$ is sufficiently small, take
 $$\tilde{\Sigma}=\frac{1}{\tilde{N}_{\cI}}\sum_{k\in\{0\}\cup \cI} \tilde{n}_k\tilde{P}_k ,\quad \tilde{\Sigma}^*=P_0^{\text{s}}+ \left(P_0^{\text{s}}\right)^{\perp} \left[\frac{1}{\tilde{N}_{\cI}} \left(\sum_{k\in\{0\}\cup \cI}\tilde{n}_kP_k^{\text{p}}\right)\right]\left(P_0^{\text{s}}\right)^{\perp},\quad \text{we have}$$
\begin{equation*}\label{basic decomposition for hatP_s^0}
\|\hat{P}_0^{\text{s}}-P_0^{\text{s}}\|_F\leq \frac{\|\tilde{\Sigma}-\tilde{\Sigma}^*\|_F}{d_{r_{\text{s}}}(\tilde{\Sigma}^*)}
\lesssim \frac{1}{\tilde{N}_{\cI}}\left\|\sum_{k\in\{0\}\cup \cI}\tilde{n}_k\Delta_k\right\|_F + h.
\end{equation*}
\end{lemma}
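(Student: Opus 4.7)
\textbf{Plan for proving Lemma \ref{basic lemma a.1}.}

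The first inequality is a direct Davis--Kahan-type statement: since $\hat{P}_0^{\s}$ is the spectral projector onto the top $r_{\s}$ eigenvectors of $\tilde{\Sigma}$, while $P_0^{\s}$ will be shown to be the spectral projector onto the top $r_{\s}$ eigenvectors of $\tilde{\Sigma}^{*}$, the sin-$\Theta$ theorem in Frobenius norm yields $\|\hat{P}_0^{\s}-P_0^{\s}\|_F \lesssim \|\tilde{\Sigma}-\tilde{\Sigma}^{*}\|_F / d_{r_{\s}}(\tilde{\Sigma}^{*})$. So the plan has two substantive parts: (i) verify the eigen-structure of $\tilde{\Sigma}^{*}$ and lower-bound $d_{r_{\s}}(\tilde{\Sigma}^{*})$ by an absolute constant; (ii) upper-bound $\|\tilde{\Sigma}-\tilde{\Sigma}^{*}\|_F$ by $\tilde{N}_{\cI}^{-1}\|\sum \tilde{n}_k\Delta_k\|_F+O(h)$.

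For (i), observe that $\tilde{\Sigma}^{*}$ decomposes as the orthogonal sum of $P_0^{\s}$ (acting on $\operatorname{span}(U_0^{\s})$) and $M_{\perp}:=(P_0^{\s})^{\perp}M(P_0^{\s})^{\perp}$ (acting on its orthogonal complement), where $M=\tilde{N}_{\cI}^{-1}\sum_{k\in\{0\}\cup\cI}\tilde{n}_kP_k^{\p}$. So the non-trivial eigenvalues of $\tilde{\Sigma}^{*}$ are exactly $1$ (with multiplicity $r_{\s}$) and the eigenvalues of $M_{\perp}$. Since projecting by $(P_0^{\s})^{\perp}$ cannot increase the operator norm, $\|M_{\perp}\|_2\leq\|M\|_2\leq 1-g$ by Assumption \ref{assum:2}. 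Consequently $d_{r_{\s}}(\tilde{\Sigma}^{*})\geq 1-(1-g)=g$, which is bounded away from zero.

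For (ii), write $\tilde{P}_k=P_k^{\s}+P_k^{\p}+\Delta_k$ and split
\[
\tilde{\Sigma}-\tilde{\Sigma}^{*}=\underbrace{\tfrac{1}{\tilde{N}_{\cI}}\sum\tilde{n}_k\Delta_k}_{\text{(I)}}+\underbrace{\tfrac{1}{\tilde{N}_{\cI}}\sum\tilde{n}_k(P_k^{\s}-P_0^{\s})}_{\text{(II)}}+\underbrace{\bigl(M-(P_0^{\s})^{\perp}M(P_0^{\s})^{\perp}\bigr)}_{\text{(III)}}.
\]
Term (II) has Frobenius norm $\leq h$ by Assumption \ref{assum:2}. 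For (III), a direct expansion gives $M-(P_0^{\s})^{\perp}M(P_0^{\s})^{\perp}=P_0^{\s}M+MP_0^{\s}-P_0^{\s}MP_0^{\s}$, whose Frobenius norm is controlled by $2\|MP_0^{\s}\|_F$. Using $P_k^{\p}P_k^{\s}=0$, we get $P_k^{\p}P_0^{\s}=P_k^{\p}(P_0^{\s}-P_k^{\s})$, hence $\|P_k^{\p}P_0^{\s}\|_F\leq\|P_0^{\s}-P_k^{\s}\|_F\leq h$ and $\|MP_0^{\s}\|_F\leq h$. Summing the three contributions yields $\|\tilde{\Sigma}-\tilde{\Sigma}^{*}\|_F\leq \tilde{N}_{\cI}^{-1}\|\sum\tilde{n}_k\Delta_k\|_F+O(h)$.

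Combining (i) and (ii) and dividing by the constant $g$ (absorbed in $\lesssim$) gives the claimed bound. The only subtle point is the eigen-structure/gap argument in (i) — one must notice that $P_0^{\s}$ and $M_{\perp}$ act on orthogonal subspaces so that the spectra simply concatenate, allowing a gap determined by Assumption \ref{assum:2} rather than by the more delicate perturbation of pooled covariance matrices.
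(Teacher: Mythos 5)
Your proposal is correct and follows essentially the same route as the paper: Davis--Kahan applied to $\tilde{\Sigma}$ against the reference matrix $\tilde{\Sigma}^*$, with the eigengap lower bound $d_{r_{\s}}(\tilde{\Sigma}^*)\geq g$ coming from Assumption \ref{assum:2}, and the perturbation $\tilde{\Sigma}-\tilde{\Sigma}^*$ split into the noise term, the $\sum\tilde{n}_k(P_k^{\s}-P_0^{\s})$ term, and cross terms controlled via $P_k^{\p}P_0^{\s}=P_k^{\p}(P_0^{\s}-P_k^{\s})$. Your grouping of the correction as $P_0^{\s}M+MP_0^{\s}-P_0^{\s}MP_0^{\s}$ is just a more compact arrangement of the paper's four-block expansion of $[P_0^{\s}+(P_0^{\s})^{\perp}]M[P_0^{\s}+(P_0^{\s})^{\perp}]$ (it even avoids the paper's $h^2$ term), and your explicit check that the spectra of $P_0^{\s}$ and $M_{\perp}$ concatenate so that $P_0^{\s}$ is the top-$r_{\s}$ projector of $\tilde{\Sigma}^*$ is a detail the paper leaves implicit.
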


\begin{lemma}\label{basic lemma a.2}
  Under Assumption \ref{assum:1}, recall from Lemma \ref{lemma:indPCA} that $\Delta_k$ are independent with $\left\|\left\|\Delta_k\right\|_F\right\|_{\psi_1}\lesssim \tilde{n}_k^{-1/2}$ as $\min_{k\in \{0\}\cup\cI}(n_k), p\rightarrow \infty$, we have
    $$\left\|\frac{1}{\tilde{N}_{\cI}}\left\|\sum_{k\in\{0\}\cup \cI}\tilde{n}_k\Delta_k\right\|_F\right\|_{\psi_1}\lesssim \tilde{N}_{\cI}^{-1/2}+\left(\sum_{k\in\{0\}\cup\cI}r_k^{1/2}\right)\tilde{N}_{\cI}^{-1}.$$
\end{lemma}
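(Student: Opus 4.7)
\medskip
\noindent\textbf{Proof plan for Lemma \ref{basic lemma a.2}.}
The plan is to split each $\Delta_k$ into a centered linear-in-noise part and a quadratic remainder, then apply a Hilbert-space Bernstein bound (Proposition \ref{lem:subexp}) to the former while bounding the latter by its $\psi_1$-norm term by term. The crucial observation is that $\Delta_k$ itself is biased ($\E[\Delta_k]\neq 0$), so a direct application of Proposition \ref{lem:subexp} to $\{\tilde n_k\Delta_k\}$ is not available; the Taylor expansion from Proposition \ref{lem:taylor} is precisely what isolates the centered piece.

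First I would apply Proposition \ref{lem:taylor} to the $k$-th individual PCA problem with $s=0$ and $r=r_k$, so the relevant gap is $\delta_k:=d_{r_k}(\Sigma_k^*)$, bounded below by a constant under Assumption \ref{assum:1}. This yields $\Delta_k=L_k+R_k$, where $L_k=U_k K_k U_k^\top$ depends linearly on $E_k=\hat\Sigma_k-\Sigma_k^*$, with $\|L_k\|_F\le\sqrt{2r_k}\,\|E_k\|_2/\delta_k$, while on the good event $\{\|E_k\|_2/\delta_k\le 1/10\}$ the remainder satisfies $\|R_k\|_F\lesssim r_k^{1/2}(\|E_k\|_2/\delta_k)^2$. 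Since $\E[E_k]=0$ and $L_k$ is linear in $E_k$, we have $\E[L_k]=0$. Triangle inequality on the Frobenius norm then gives
\[
\frac{1}{\tilde N_{\cI}}\Bigl\|\sum_{k\in\{0\}\cup\cI}\tilde n_k\Delta_k\Bigr\|_F
\;\le\;\underbrace{\frac{1}{\tilde N_{\cI}}\Bigl\|\sum_{k\in\{0\}\cup\cI}\tilde n_k L_k\Bigr\|_F}_{X_1}
\;+\;\underbrace{\frac{1}{\tilde N_{\cI}}\sum_{k\in\{0\}\cup\cI}\tilde n_k\|R_k\|_F}_{X_2}.
\]

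For the linear part $X_1$, I would view each $\tilde n_k L_k$ as an independent centered random element of the Hilbert space of $p\times p$ matrices with the Frobenius inner product, and apply Proposition \ref{lem:subexp}. The input bound $\|\|\tilde n_k L_k\|_F\|_{\psi_1}\lesssim \tilde n_k\cdot\tilde n_k^{-1/2}=\tilde n_k^{1/2}$ follows from the same sub-exponential control on $\|E_k\|_2/\delta_k$ that underlies Lemma \ref{lemma:indPCA} (since $L_k$ is linear in $E_k$ with the coefficient $\lesssim \sqrt{r_k}/\delta_k$ and $r_k\lesssim 1$). Proposition \ref{lem:subexp} then yields $\|\|\sum_k\tilde n_k L_k\|_F\|_{\psi_1}\lesssim(\sum_k\tilde n_k)^{1/2}=\tilde N_{\cI}^{1/2}$, hence $\|X_1\|_{\psi_1}\lesssim\tilde N_{\cI}^{-1/2}$, which is the first term of the target bound.

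For the remainder $X_2$, the deterministic triangle inequality gives $\|X_2\|_{\psi_1}\le\tilde N_{\cI}^{-1}\sum_k\tilde n_k\,\|\|R_k\|_F\|_{\psi_1}$, so it suffices to prove $\|\|R_k\|_F\|_{\psi_1}\lesssim r_k^{1/2}/\tilde n_k$; summing then yields the required $(\sum_k r_k^{1/2})/\tilde N_{\cI}$. On the good event the bound $\|R_k\|_F\lesssim r_k^{1/2}(\|E_k\|_2/\delta_k)^2$ combined with the fact that squaring a $\psi_1$-random variable of norm $O(\tilde n_k^{-1/2})$ produces a variable with finite moments of order $\tilde n_k^{-1}$ (and controllable higher moments) delivers the desired $\psi_1$-estimate. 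I expect the most delicate step to be the complementary bad event $\{\|E_k\|_2/\delta_k>1/10\}$, on which the Taylor expansion does not apply; here I would use the trivial deterministic bound $\|R_k\|_F\le\|\Delta_k\|_F+\|L_k\|_F\lesssim\sqrt{r_k}\,(1+\|E_k\|_2/\delta_k)$ together with the exponentially small probability of the bad event (implied by the $\psi_1$-tail bound on $\|E_k\|_2/\delta_k$ at a constant level) to show the bad-event contribution is of lower order than $r_k^{1/2}/\tilde n_k$. Combining the bounds on $X_1$ and $X_2$ yields the lemma.
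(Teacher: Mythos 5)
Your overall architecture (isolate a centered part that enjoys cancellation via Proposition \ref{lem:subexp}, then sum the rest term by term) matches the paper's, and your treatment of $X_1$ is fine. But your choice of where to put the centering creates a genuine gap in the $X_2$ step. You need $\bigl\|\|R_k\|_F\bigr\|_{\psi_1}\lesssim r_k^{1/2}\tilde n_k^{-1}$, and on the good event $\|R_k\|_F\lesssim r_k^{1/2}\epsilon_k^2$ with $\epsilon_k=\|E_k\|_2/d_{r_k}(\Sigma_k^*)$ satisfying only $\|\epsilon_k\|_{\psi_1}\lesssim\tilde n_k^{-1/2}$. The square of a sub-exponential random variable is in general only $\psi_{1/2}$, not $\psi_1$: the tail $\P(\epsilon_k^2>s)\le 2\exp(-cs^{1/2}\tilde n_k^{1/2})$ is strictly weaker than the required $\exp(-cs\tilde n_k)$ once $s\gtrsim\tilde n_k^{-1}$, so "squaring a $\psi_1$ variable \ldots delivers the desired $\psi_1$-estimate" is false. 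Since the lemma's conclusion is itself a $\psi_1$-norm bound, you cannot get it by summing the $\psi_1$-norms of the random remainders $\|R_k\|_F$; the quadratic fluctuations simply do not have sub-exponential tails at the scale $\tilde n_k^{-1}$.

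The paper avoids this by centering at $\E[\tilde P_k]$ rather than at $P_k^*+L_k$. It writes $\sum_k\tilde n_k\Delta_k=\sum_k\tilde n_k\tilde\Delta_k+\sum_k\tilde n_k(\E[\tilde P_k]-P_k^*)$ with $\tilde\Delta_k=\tilde P_k-\E[\tilde P_k]$. The first sum contains \emph{all} the randomness, including the fluctuation of the quadratic remainder, but each summand only needs the crude bound $\|\|\tilde\Delta_k\|_F\|_{\psi_1}\lesssim\tilde n_k^{-1/2}$ (inherited from $\Delta_k$ by the triangle and Jensen inequalities), which is enough because Proposition \ref{lem:subexp} converts it into $\tilde N_{\cI}^{-1/2}$ after cancellation. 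The second sum is \emph{deterministic}, so controlling it requires only the second moment $\E[\epsilon_k^2]\lesssim\|\epsilon_k\|_{\psi_1}^2\lesssim\tilde n_k^{-1}$ — a bound on a single moment of $\epsilon_k^2$, which is available, rather than a bound on the $\psi_1$-norm of $\epsilon_k^2$, which is not. (The paper still uses your Taylor decomposition and the good/bad event split, but only inside the expectation, to show $\|\E[\tilde P_k]-P_k^*\|_F\lesssim r_k^{1/2}\E[\epsilon_k^2]$, exploiting $\E[Q_k]=0$ for the linear term.) To repair your argument you would write $R_k=(R_k-\E R_k)+\E\Delta_k$, fold the centered piece back into the Hilbert-space sum, and bound the deterministic $\E\Delta_k$ by $\E[\epsilon_k^2]$ — which is exactly the paper's proof.
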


\subsubsection{Proof of Lemma \ref{basic lemma a.1}}
First, by definition we have
\begin{equation*}
\tilde{N}_{\cI}\tilde{\Sigma}=\sum_{k\in\{0\}\cup \cI} \tilde{n}_kP_k^*+\sum_{k\in\{0\}\cup \cI} \tilde{n}_k\Delta_k=\sum_{k\in\{0\}\cup \cI} \tilde{n}_k(P_k^{\text{s}}+P_k^{\text{p}})+\sum_{k\in\{0\}\cup \cI} \tilde{n}_k\Delta_k.
\end{equation*}
Since $P_k^{\text{s}}P_k^{\text{p}}=0$ and $\left(P_k^{\text{s}}\right)^{\perp}+P_k^{\text{s}}=I_p$ holds for each $k\in\{0\}\cup \cI$, we can obtain the following equations
\begin{equation*}
\begin{aligned}
\tilde{N}_{\cI}\tilde{\Sigma}&=\sum_{k\in\{0\}\cup \cI} \tilde{n}_kP_0^{\text{s}}+\sum_{k\in\{0\}\cup \cI} \tilde{n}_k(P_k^{\text{s}}-P_0^{\text{s}})+\sum_{k\in\{0\}\cup \cI} \tilde{n}_kP_k^{\text{p}}+\sum_{k\in\{0\}\cup \cI} \tilde{n}_k\Delta_k\\
&=\sum_{k\in\{0\}\cup \cI} \tilde{n}_kP_0^{\text{s}}+\sum_{k\in\{0\}\cup \cI} \tilde{n}_k(P_k^{\text{s}}-P_0^{\text{s}})+\left[P_0^{\text{s}}+
(P_0^{\text{s}})^{\perp}\right]\left(\sum_{k\in\{0\}\cup \cI} \tilde{n}_kP_k^{\text{p}}\right)\left[P_0^{\text{s}}+(P_0^{\text{s}})^{\perp}\right]+\sum_{k\in\{0\}\cup \cI} \tilde{n}_k\Delta_k\\
&=\sum_{k\in\{0\}\cup \cI} \left[\tilde{n}_kP_0^{\text{s}}+(P_0^{\text{s}})^{\perp}\left(\sum_{k\in\{0\}\cup \cI} \tilde{n}_kP_k^{\text{p}}\right)(P_0^{\text{s}})^{\perp}\right]+\sum_{k\in\{0\}\cup \cI} \tilde{n}_k(P_k^{\text{s}}-P_0^{\text{s}})+
P_0^{\text{s}}\left(\sum_{k\in\{0\}\cup \cI} \tilde{n}_kP_k^{\text{p}}\right)(P_0^{\text{s}})^{\perp}\\
& \quad+(P_0^{\text{s}})^{\perp}\left(\sum_{k\in\{0\}\cup \cI} \tilde{n}_kP_k^{\text{p}}\right)P_0^{\text{s}}+P_0^{\text{s}}\left(\sum_{k\in\{0\}\cup \cI} \tilde{n}_kP_k^{\text{p}}\right)P_0^{\text{s}}+\sum_{k\in\{0\}\cup \cI} \tilde{n}_k\Delta_k\\
&=\tilde{N}_{\cI}\tilde{\Sigma}^*+\sum_{k\in\{0\}\cup \cI} \tilde{n}_k\Delta_k+\sum_{k\in\{0\}\cup \cI} \tilde{n}_k(P_k^{\text{s}}-P_0^{\text{s}})+P_0^{\text{s}}\left(\sum_{k\in\{0\}\cup \cI} \tilde{n}_kP_k^{\text{p}}\right)(P_0^{\text{s}})^{\perp}+(P_0^{\text{s}})^{\perp}\left(\sum_{k\in\{0\}\cup \cI} \tilde{n}_kP_k^{\text{p}}\right)P_0^{\text{s}}\\
& \quad+\sum_{k\in\{0\}\cup \cI} \tilde{n}_k\left[(P_0^{\text{s}}-P_k^{\text{s}})P_k^{\text{p}}(P_0^{\text{s}}-P_k^{\text{s}})+
\underbrace{P_k^{\text{s}}P_k^{\text{p}}P_0^{\text{s}}+
P_0^{\text{s}}P_k^{\text{p}}P_k^{\text{s}}-P_k^{\text{s}}P_k^{\text{p}}P_k^{\text{s}}}_{0}\right]
\\
&=\tilde{N}_{\cI}\tilde{\Sigma}^*+\sum_{k\in\{0\}\cup \cI} \tilde{n}_k\Delta_k+\sum_{k\in\{0\}\cup \cI} \tilde{n}_k(P_k^{\text{s}}-P_0^{\text{s}})+P_0^{\text{s}}\left(\sum_{k\in\{0\}\cup \cI} \tilde{n}_kP_k^{\text{p}}\right)(P_0^{\text{s}})^{\perp}+
(P_0^{\text{s}})^{\perp}\left(\sum_{k\in\{0\}\cup \cI} \tilde{n}_kP_k^{\text{p}}\right)P_0^{\text{s}}\\
& \quad+\sum_{k\in\{0\}\cup \cI} \tilde{n}_k\left[(P_0^{\text{s}}-P_k^{\text{s}})P_k^{\text{p}}(P_0^{\text{s}}-P_k^{\text{s}})\right],
\end{aligned}
\end{equation*}
which directly leads to
\begin{equation*}
\begin{aligned}
\tilde{N}_{\cI}(\tilde{\Sigma}-\tilde{\Sigma}^*)&=\sum_{k\in\{0\}\cup \cI} \tilde{n}_k\Delta_k+\sum_{k\in\{0\}\cup \cI} \tilde{n}_k(P_k^{\text{s}}-P_0^{\text{s}})+P_0^{\text{s}}\left(\sum_{k\in\{0\}\cup \cI} \tilde{n}_kP_k^{\text{p}}\right)(P_0^{\text{s}})^{\perp}+(P_0^{\text{s}})^{\perp}\left(\sum_{k\in\{0\}\cup \cI} \tilde{n}_kP_k^{\text{p}}\right)P_0^{\text{s}}\\
& \quad+\sum_{k\in\{0\}\cup \cI} \tilde{n}_k\left[(P_0^{\text{s}}-P_k^{\text{s}})P_k^{\text{p}}(P_0^{\text{s}}-P_k^{\text{s}})\right].
\end{aligned}
\end{equation*}
By utilizing Davis-Kahan theorem, we have $\|\hat{P}_0^{\text{s}}-P_0^{\text{s}}\|_F\leq \|\tilde{\Sigma}-\tilde{\Sigma}^*\|_F
/d_{r_{\text{s}}}(\tilde{\Sigma}^*)$. Note that from Assumption \ref{assum:2} we have $d_{r_{\text{s}}}(\tilde{\Sigma}^*)\geq g$ for some constant $g>0$, combining the above equation allow us to obtain
\begin{equation*}
\begin{aligned}
\tilde{N}_{\cI}\|\hat{P}_0^{\text{s}}-P_0^{\text{s}}\|_F
&\lesssim
\left\|\sum_{k\in\{0\}\cup \cI} \tilde{n}_k\Delta_k+\sum_{k\in\{0\}\cup \cI} \tilde{n}_k(P_k^{\text{s}}-P_0^{\text{s}})+\sum_{k\in\{0\}\cup \cI} \tilde{n}_k\left[(P_0^{\text{s}}-P_k^{\text{s}})P_k^{\text{p}}(P_0^{\text{s}}-P_k^{\text{s}})\right]\right\|_F\\
& \quad+\left\|P_0^{\text{s}}\left(\sum_{k\in\{0\}\cup \cI} \tilde{n}_kP_k^{\text{p}}\right)(P_0^{\text{s}})^{\perp}+(P_0^{\text{s}})^{\perp}\left(\sum_{k\in\{0\}\cup \cI} \tilde{n}_kP_k^{\text{p}}\right)P_0^{\text{s}}\right\|_F\\
&\leq\left\|\sum_{k\in\{0\}\cup \cI} \tilde{n}_k\Delta_k\right\|_F+
\left\|\sum_{k\in\{0\}\cup \cI} \tilde{n}_k(P_k^{\text{s}}-P_0^{\text{s}})\right\|_F+
\left\|\sum_{k\in\{0\}\cup \cI} \tilde{n}_k\left[(P_0^{\text{s}}-P_k^{\text{s}})P_k^{\text{p}}(P_0^{\text{s}}-P_k^{\text{s}})\right]\right\|_F\\
& \quad+2\left\|P_0^{\text{s}}\left(\sum_{k\in\{0\}\cup \cI} \tilde{n}_kP_k^{\text{p}}\right)(I_p-P_0^{\text{s}})\right\|_F\\
&\lesssim \left\|\sum_{k\in\{0\}\cup \cI} \tilde{n}_k\Delta_k\right\|_F+\sum_{k\in\{0\}\cup \cI} \tilde{n}_k\left\|P_k^{\text{s}}-P_0^{\text{s}}\right\|_F+
\sum_{k\in\{0\}\cup \cI} \tilde{n}_k\left\|P_0^{\text{s}}-P_k^{\text{s}}\right\|_F^2\left\|P_k^{\text{p}}\right\|_F\\
&\quad + 2(1+\|P_0^{\text{s}}\|_F)\sum_{k\in\{0\}\cup \cI} \tilde{n}_k\left\|(P_0^{\text{s}}-P_k^{\text{s}}+P_k^{\text{s}})P_k^{\text{p}}\right\|_F \\
&\lesssim\left\|\sum_{k\in\{0\}\cup \cI} \tilde{n}_k\Delta_k\right\|_F+\sum_{k\in\{0\}\cup \cI} \tilde{n}_k(h+h^2)
+ \sum_{k\in\{0\}\cup \cI} \tilde{n}_k \left\|(P_0^{\s}-P_k^{\s})P_k^{\p} \right\|_F\\
&\lesssim\left\|\sum_{k\in\{0\}\cup \cI} \tilde{n}_k\Delta_k\right\|_F+\sum_{k\in\{0\}\cup \cI} \tilde{n}_kh,
\end{aligned}
\end{equation*}
where the last inequality holds as long as $h$ is sufficiently small. The above inequality implies that
\begin{equation}
\|\hat{P}_0^{\text{s}}-P_0^{\text{s}}\|_F\lesssim
\frac{1}{\tilde{N}_{\cI}}\left\|\sum_{k\in\{0\}\cup \cI} \tilde{n}_k\Delta_k\right\|_F+h.
\end{equation}
This completes the proof of Lemma \ref{basic lemma a.1}.

\subsubsection{Proof of Lemma \ref{basic lemma a.2}}
With Lemma \ref{basic lemma a.1} in mind, we mainly focus on the discussion of the following term:
\begin{equation}\label{main term of a.2}
\frac{1}{\tilde{N}_{\cI}}\left\|\sum_{k\in\{0\}\cup \cI} \tilde{n}_k\Delta_k\right\|_F.
\end{equation}
First let $\tilde{\Delta}_k=\tilde{P}_k-\E[\tilde{P}_k]$, then we can obtain following result of $\tilde{\Delta}_k$ by simple decomposition
\begin{equation*}
\begin{aligned}
\left\|\|\tilde{\Delta}_k\|_F\right\|_{\psi_1}&\leq
\left\|\|\Delta_k\|_F\right\|_{\psi_1}+\|\E[\tilde{P}_k]-P_k^*\|_F
=\left\|\|\Delta_k\|_F\right\|_{\psi_1}+\|\E(\tilde{P}_k-P_k^*)\|_F\\
&\leq\left\|\|\Delta_k\|_F\right\|_{\psi_1}+\E\|(\tilde{P}_k-P_k^*)\|_F\\
&=\left\|\|\Delta_k\|_F\right\|_{\psi_1}+\E\|\Delta_k\|_F\\
&\leq 2\left\|\|\Delta_k\|_F\right\|_{\psi_1}
\lesssim\left\|\|\Delta_k\|_F\right\|_{\psi_1}
\lesssim \tilde{n}_k^{-1/2}.
\end{aligned}
\end{equation*}
We can further decompose term (\ref{main term of a.2}) as
\begin{equation}\label{main decomposition of a.2}
\frac{1}{\tilde{N}_{\cI}}\left\|\left\|\sum_{k\in\{0\}\cup \cI} \tilde{n}_k\Delta_k\right\|_F\right\|_{\psi_1}\leq
\frac{1}{\tilde{N}_{\cI}}\left\|\left\|\sum_{k\in\{0\}\cup \cI} \tilde{n}_k\tilde{\Delta}_k\right\|_F\right\|_{\psi_1}+
\frac{1}{\tilde{N}_{\cI}}\sum_{k\in\{0\}\cup \cI} \tilde{n}_k\left\|\E \tilde{P}_k-P_k^*\right\|_F.
\end{equation}

For the first term of inequality of (\ref{main decomposition of a.2}), As $\E [\tilde{n}_k
\tilde{\Delta}_k]=0$ and $\left\|\left\|\tilde{n}_k
\tilde{\Delta}_k\right\|_F\right\|_{\psi_1}\lesssim
\tilde{n}_k^{1/2}$ for all $k\in\{0\}\cup \cI$ while $\tilde{\Delta}_k$ is independent from each other, we can utilize Proposition \ref{lem:subexp} to acquire
\begin{equation*}
\frac{1}{\tilde{N}_{\cI}}\left\|\left\|\sum_{k\in\{0\}\cup \cI} \tilde{n}_k\tilde{\Delta}_k\right\|_F\right\|_{\psi_1}\lesssim
\frac{1}{\tilde{N}_{\cI}}\sqrt{\sum_{k\in\{0\}\cup \cI} \left\|\left\|\tilde{n}_k\tilde{\Delta}_k
\right\|_F\right\|_{\psi_1}^2}
=\tilde{N}_{\cI}^{-1/2}.
\end{equation*}

As for the second term of inequality of (\ref{main decomposition of a.2}), we can use Lemma 2 and Theorem 3 in \cite{fan2019distributed}. Define $W_k=\tilde{P}_k-P_k^*-Q_k$ where $Q_k=f(E_kU_k^*)(U_k^*)^{\top}+U_k^*f^{\top}(E_kU_k^*)$, $E_k=\hat{\Sigma}_k-\Sigma_k^*$ while $f$ is the linear mapping defined the same as Lemma 2 in \cite{fan2019distributed}. We also denote $\epsilon_k=\|E_k\|_2/d_{r_k}(\Sigma_k^*)$, recall that $\|\epsilon_k\|_{\psi_1}\lesssim \tilde{n}_k^{-1/2}$ under the settings of Lemma \ref{lemma:indPCA}. Further let $\mathbb{I}_A(x)$ be the indicator function of event $A$, namely
\begin{equation*}
\mathbb{I}_A(x)=\begin{cases} 1 & \text{if } x \in A \\ 0 & \text{otherwise} \end{cases}.
\end{equation*}
Then by the linearity of $f$ and $\E [E_k]=0$, we have $\E [Q_k]=f((\E [E_k])U_k^*)(U_k^*)^{\top}+U_k^*f^{\top}((\E [E_k])U_k^*)=0$. Let $\Omega=(0,\infty)$, $A=(0,1/10]$ and $A^c=\Omega\setminus A$, it gives us:
\begin{equation}\label{W_k}
\begin{aligned}
W_k&=W_k\mathbb{I}_A(\epsilon_k)+(W_k+Q_k)\mathbb{I}_{A^c}(\epsilon_k)
-Q_k\mathbb{I}_{A^c}(\epsilon_k)\\
&=W_k\mathbb{I}_A(\epsilon_k)+(\tilde{P}_k-P_k^*)\mathbb{I}_{A^c}(\epsilon_k)
-Q_k\mathbb{I}_{A^c}(\epsilon_k).
\end{aligned}
\end{equation}
Take expectation on both sides of (\ref{W_k}) and applying the fact of $\E[Q_k]=0$, we have
\begin{equation*}
\begin{aligned}
\E (\tilde{P}_k-P_k^*)&=\E (\tilde{P}_k-P_k^*-Q_k)=\E [W_k]\\
&=\E(W_k\mathbb{I}_A(\epsilon_k))
+\E((\tilde{P}_k-P_k^*)\mathbb{I}_{A^c}(\epsilon_k))-
\E(Q_k\mathbb{I}_{A^c}(\epsilon_k)).
\end{aligned}
\end{equation*}
Then with the triangular inequality and Jensen's inequality, the second term of inequality (\ref{main decomposition of a.2}) can be separated as follows:
\begin{equation}\label{decomposition for the second term}
\left\|\E [\tilde{P}_k]-P_k^*\right\|_F\leq
\E\left(\|W_k\|_F\mathbb{I}_A(\epsilon_k)\right)+
\E(\|\tilde{P}_k-P_k^*\|_F\mathbb{I}_{A^c}(\epsilon_k))+
\E(\|Q_k\|_F\mathbb{I}_{A^c}(\epsilon_k)).
\end{equation}
For the first term of (\ref{decomposition for the second term}), we have
\begin{equation*}
\E\left(\|W_k\|_F\mathbb{I}_A(\epsilon_k)\right)\leq
\E(24\sqrt{r_k}\epsilon_k^2\mathbb{I}_A(\epsilon_k))\lesssim
\sqrt{r_k}\E[\epsilon_k^2],
\end{equation*}
by using Lemma 2 in \cite{fan2019distributed}.
For the second and the third term of (\ref{decomposition for the second term}), by Davis-Kahan theorem and the definition of $Q_k$, we have $\|\tilde{P}_k-P_k^*\|_F\lesssim\sqrt{r_k}\epsilon_k$ and $\|Q_k\|_F\lesssim\|f(E_kU_k^*)\|_F\lesssim\sqrt{r_k}\epsilon_k$. While for $A^c=(1/10,\infty)$, $\epsilon_k\mathbb{I}_{A^c}(\epsilon_k)\leq 10 \epsilon_k^2$. Accordingly, we have
\begin{equation*}
\E(\|\tilde{P}_k-P_k^*\|_F\mathbb{I}_{A^c}(\epsilon_k))+
\E(\|Q_k\|_F\mathbb{I}_{A^c}(\epsilon_k))\lesssim
\sqrt{r_k}\E(\epsilon_k\mathbb{I}_{A^c}(\epsilon_k))\lesssim
\sqrt{r_k}\E[\epsilon_k^2].
\end{equation*}
In conclusion, we have $\|\E [\tilde{P}_k]-P_k^*\|_F\lesssim\sqrt{r_k}\E\epsilon_k^2$, further we can deduce that
\begin{equation*}
\frac{1}{\tilde{N}_{\cI}}\sum_{k\in\{0\}\cup \cI} \tilde{n}_k\left\|\E [\tilde{P}_k]-P_k^*\right\|_F\lesssim
\frac{1}{\tilde{N}_{\cI}}\sum_{k\in\{0\}\cup \cI} \tilde{n}_k\sqrt{r_k}\E[\epsilon_k^2]\lesssim
\frac{1}{\tilde{N}_{\cI}}\sum_{k\in\{0\}\cup \cI} \tilde{n}_k\sqrt{r_k}\|\epsilon_k\|^ 2_{\psi_1}\leq
\left(\sum_{k\in\{0\}\cup\cI}r_k^{1/2}\right)\tilde{N}_{\cI}^{-1},
\end{equation*}
which completes the proof of Lemma \ref{basic lemma a.2}.

\subsection{Proof of Lemma \ref{lemma:actmain}}\label{proof:actmain}
First, according to Proposition \ref{lem:taylor}, we have $\|L\|_F \lesssim  r_\s^{1/2} s$, so that $\|L_\s \Sigma_0^*(P_0^{\s})^{\perp}\|_2\leq \|L_\s\|_2\|\Sigma_0^*\|_2\lesssim r_\s^{1/2} \|\Sigma_0^*\|_2 s$. For simplicity, here we abbreviate $\mathcal{U}_0^{\p}$ in (\ref{eq:eigen}) as $U_0$, while denoting the $i$-th column of $U_0$ as $u_i$ with a slight abuse of notations. In the spirit of Proposition \ref{lem:taylor}, we aim to write the linear term from (\ref{eq:ddotsep}) explicitly in the form of the linear matrix function, for the $p\times p$ matrix $E$, define
$$\cL(E) =  U_0  \begin{pmatrix}
	0_{(r_0-r_\s)\times(r_0-r_\s)}& \cK_1(E) &\cK_2(E)\\
          \cK_1(E)^{\top} &0_{(p-r_0)\times(p-r_0)} &\\
	\cK_2(E)^{\top} & &0_{r_\s\times r_\s}
\end{pmatrix}U_0^{\top},\quad\text{where}$$
$$\cK_1(E)_{ij} = \frac{u_i^{\top} E u_{r_0-r_\s+j}}{\lambda_i^{\p}-\lambda_{r_0+j}},\quad \text{for} \quad 1\leq i\leq r_0-r_\s,\quad 1\leq j\leq p-r_0,$$
$$\cK_2(E)_{ij} = \frac{u_i^{\top} E u_{p-r_\s+j}}{\lambda_i^{\p}},\quad \text{for} \quad 1\leq i\leq r_0-r_\s,\quad 1\leq j\leq r_\s.$$

Applying Proposition \ref{lem:taylor} to (\ref{eq:ddotsep}), for $\delta_{\p}=d_{r_0-r_\s}(\Sigma_0^{\p})=\lambda^{\p}_{r_0-r_\s}-\lambda_{r_0+1}$, by the linearity of $\cL$ we have
\begin{equation*}
    \begin{aligned}
        \dot{P}_0^{\p}-P_0^{\p} &= -\cL\left( L_\s \Sigma_0^*(P_0^{\s})^{\perp}\right)-\cL\left((P_0^{\s})^{\perp}\Sigma_0^*L_\s\right)+\cW_\s\\
        &= -\cL\left( L_\s \Sigma_0^*(P_0^{\s})^{\perp}\right)-\cL\left((P_0^{\s})^{\perp}\Sigma_0^*(P_0^{\s})^{\perp}L_\s\right)\\
        &\quad-\underbrace{\cL\left((P_0^{\s})^{\perp}\Sigma_0^*P_0^{\s}L_\s\right)}_{(e)}+\cW_\s,
    \end{aligned}
\end{equation*}
where $\|\cW_\s\|_F\lesssim \|\Sigma_0^*\|_2^2s^2/\delta^2_{\p}$ given sufficiently small $\|\Sigma_0^*\|_2s/\delta_{\p}$, while $\|(e)\|_F\lesssim \eta s/\delta_{\p}$ for $\eta = \|(U^{\p}_0)^{\top}\Sigma_0^*U^{\s}_0\|_2$, due to the fact that
\begin{equation}
    \begin{aligned}
    \|\cL\left((P_0^{\s})^{\perp}\Sigma_0^*P_0^{\s}L_\s\right)\|_F&\lesssim \|(P_0^{\s})^{\perp}\Sigma_0^*P_0^{\s}L_\s\|_2/\delta_{\p}\\
    &\leq \underbrace{\|(P_0^{*})^{\perp}\Sigma_0^*P_0^{\s}L_\s\|_2}_{0}/\delta_{\p}+\|(P_0^{\p})^{\perp}\Sigma_0^*P_0^{\s}L_\s\|_2/\delta_{\p}\\
    &\lesssim \|(U^{\p}_0)^{\top}\Sigma_0^*U^{\s}_0\|_2 s/\delta_{\p},
    \end{aligned}
\end{equation}
according to Proposition \ref{lem:taylor}, and directly
\begin{equation*}
    \begin{aligned}
        \left\|\dot{P}_0^{\p}-P_0^{\p}\right\|_F& \lesssim \underbrace{\left\|  \cK_1(L_\s \Sigma_0^*(P_0^{\s})^{\perp})\right\|_F}_{(a)}+  \underbrace{\left\|\cK_2(L_\s \Sigma_0^*(P_0^{\s})^{\perp})\right\|_F}_{(b)}\\
        &+ \underbrace{\left\|  \cK_1((P_0^{\s})^{\perp}\Sigma_0^*(P_0^{\s})^{\perp}L_\s)\right\|_F}_{(c)}+  \underbrace{\left\|\cK_2((P_0^{\s})^{\perp}\Sigma_0^*(P_0^{\s})^{\perp}L_\s)\right\|_F}_{(d)} \\
        &+\eta s/\delta_{\p} + \|\Sigma_0^*\|_2^2s^2/\delta^2_{\p}.
    \end{aligned}
\end{equation*}

We then control the remaining four terms respectively. Recall from (\ref{eq:Ls}) that $L_s = U_0 \cK_\s U_0^{\top}$ for
$$\cK_\s = \begin{pmatrix}
	0_{(p-r_\s)\times(p-r_\s)} & K_\s^{\top}\\
	K_\s & 0_{r_\s\times r_\s}
\end{pmatrix},\quad\text{and}\quad K_\s =  (U_0^{\s})^{\top} \Delta_\s (U_0^{\s})^{\perp}.$$

First, $(a)=(b)=(c)=0$ by noticing that
\begin{equation*}
    \begin{aligned}
        \cK_1(L_\s \Sigma_0^*(P_0^{\s})^{\perp})_{ij} &= \frac{(u_i^{\top} U_0) \cK_\s U_0^{\top} \Sigma_0^*\left[(P_0^{\s})^{\perp}u_{r_0-r_\s+j} \right]}{\lambda_i^{\p}-\lambda_{r_0+j}} = \frac{e_i^{\top} \cK_\s U_0^{\top} (\Sigma_0^* u_{r_0-r_\s+j})}{\lambda_i^{\p}-\lambda_{r_0+j}},\\
        &= \frac{\lambda_{r_0+j} e_i^{\top} \cK_\s (U_0^{\top} u_{r_0-r_\s+j})}{\lambda_i^{\p}-\lambda_{r_0+j}} =\frac{\lambda_{r_0+j}}{\lambda_i^{\p}-\lambda_{r_0+j}}e_i^{\top} \cK_\s  e_{r_0-r_\s+j}=0,
    \end{aligned}
\end{equation*}
\begin{equation*}
        \cK_2(L_\s \Sigma_0^*(P_0^{\s})^{\perp})_{ij} = \frac{u_i^{\top} L_\s \Sigma_0^*\left[(P_0^{\s})^{\perp} u_{p-r_\s+j} \right]}{\lambda_i^{\p}}=0,
\end{equation*}
\begin{equation*}
    \begin{aligned}
        \cK_1((P_0^{\s})^{\perp}\Sigma_0^*(P_0^{\s})^{\perp}L_\s)_{ij} &= \frac{\left(u_i^{\top} (P_0^{\s})^{\perp}\Sigma_0^*(P_0^{\s})^{\perp}\right)U_0\cK_\s\left(U_0^{\top}u_{r_0-r_\s+j} \right)}{\lambda_i^{\p}-\lambda_{r_0+j}}\\
        &= \frac{\lambda_i^{\p}}{\lambda_i^{\p}-\lambda_{r_0+j}} e_i^{\top}\cK_\s e_{r_0-r_\s+j}=0.
    \end{aligned}
\end{equation*}

In the end, we claim $(d)\leq \|\cK_\s\|_F= \|L\|_F\leq  r_\s^{1/2} s$, due to the fact that
\begin{equation*}
    \begin{aligned}
        \cK_2((P_0^{\s})^{\perp}\Sigma_0^*(P_0^{\s})^{\perp}L_\s)_{ij} &= \frac{\left(u_i^{\top} (P_0^{\s})^{\perp}\Sigma_0^*(P_0^{\s})^{\perp}\right)U_0\cK_\s\left(U_0^{\top} u_{p-r_\s+j} \right)}{\lambda_i^{\p}}\\
        &=  \frac{\lambda_i^{\p}}{\lambda_i^{\p}} \left(u_i^{\top}U_0\right)\cK_\s e_{p-r_\s+j} = (\cK_\s)_{i, p-r_\s+j}.
    \end{aligned}
\end{equation*}

Finally, for sufficiently small $\|\Sigma_0^*\|_2s/\delta_{\p}$, we are able to claim that
    $$\left\|\dot{P}_0^{\p}-P_0^{\p}\right\|_F\lesssim s+\frac{\eta s}{\delta_{\p}}+\frac{\|\Sigma_0^*\|_2^2s^2}{\delta^2_{\p}}.$$

\section{Proof of Corollary \ref{coro:AN}}

First, note that $\langle u, (\hat{P}_0-P_0^*) v\rangle=\langle u, (\hat{P}_0^{\s}-P_0^{\s}) v\rangle+\langle u, (\hat{P}_0^{\p}-P_0^{\p}) v\rangle$. Since $u$, $v\in S^{p-1}$, while $\|\hat{P}_0^{\text{s}}-P_0^{\text{s}}\|_F=O_p(s)$ according to Lemma \ref{lemma:sse}, we have $\langle u, (\hat{P}_0^{\s}-P_0^{\s}) v\rangle=O_p(s)$ as well. Then, we focus on
\begin{equation}\label{eq:coro1}
    \langle u, (\hat{P}_0^{\p}-P_0^{\p}) v\rangle=\langle u, (\hat{P}_0^{\p}-\ddot{P}_0^{\p}) v\rangle+\langle u, (\ddot{P}_0^{\p}-P_0^{\p}) v\rangle,
\end{equation}
where $\ddot{P}_0^{\p}$ is formed by taking the leading $(r_0-r_\s)$ eigenvectors of $\ddot{\Sigma}_0^{\p}:=(P_0^{\s})^{\perp} \hat{\Sigma}_0 (P_0^{\s})^{\perp}$. For the first term on the right hand side of (\ref{eq:coro1}), we focus on the following matrix perturbation quite similar to that in (\ref{eq:ddotP}):
\begin{equation*}
    \ddot{\Sigma}_0^{\p}= (P_0^{\s})^{\perp} \hat{\Sigma}_0 (P_0^{\s})^{\perp} + \Delta_\s^{\perp} \hat{\Sigma}_0 (P_0^{\s})^{\perp} + (P_0^{\s})^{\perp} \hat{\Sigma}_0 \Delta_\s^{\perp} + \Delta_\s^{\perp}\hat{\Sigma}_0 \Delta_\s^{\perp}.
\end{equation*}
We then have $\|\hat{P}_0^{\p}-\ddot{P}_0^{\p}\|_2\lesssim \|\hat{\Sigma}_0\|_2s/\delta_{\p}=O_p(\|\Sigma^*_0\|_2s/\delta_{\p})$ by Davis-Kahan theorem following similar arguments as in Section \ref{proof:main}. Note that $\hat{\Sigma}=\Sigma_0^*+E_0$ and we have $\|E_0\|_2=O_p(\|\Sigma_0^*\|_2e_0^{1/2}n_0^{-1/2})$ under Assumption \ref{assum:1} according to Theorem 9 in \cite{koltchinskii2017concentration}. As for the second term on the right hand side of (\ref{eq:coro1}), namely $\langle u, (\ddot{P}_0^{\p}-P_0^{\p}) v\rangle$, we start by introducing the following Lemma \ref{lem:lt}.

\begin{lemma}[Linear term]\label{lem:lt}
    Under the settings of Proposition \ref{lem:taylor}, if we further set $\hat{\Sigma}=\sum_{k=1}^{n}Az_kz_kA^{\top}/n$ where $\Sigma=AA^{\top}$, while the mutually independent random vectors $\{z_k\}_{k=1}^{n}$ consist of $p$ i.i.d. random variables $\{z_{k,m}\}_{m=1}^{p}$, such that $\E(z_{k,m})=0$, $\E(z^2_{k,m})=1$ and $\E(z^4_{k,m})=\nu_4$, let $E=\hat{\Sigma}-\Sigma$, take $u$, $v\in S^{p-1}$ and we have,
   $$\left\langle u, (\hat{U}_S\hat{U}_S^{\top}-U_SU_S^{\top})v\right\rangle =\frac{1}{n}\sum_{k=1}^{n}L_{S,k} +R_S,$$
    where $\{L_{S,k}\}_{k=1}^{n}$ are i.i.d. random variables with $\E(L_{S,k})=0$. Set $\rho_{ij}:=(U^{\top}u)_i(U^{\top}v)_j+(U^{\top}u)_j(U^{\top}v)_i$, and the subscript $i$ means the $i$-th element of the vector. Let $\gamma_i := A^{\top}u_i$ and $\gamma_{im}:=(A^{\top}u_i)_m$, we have
    $$ \E(L^2_{S,k})=\sigma_S^2:=\sum_{i\in S}\sum_{j\in S^c}\frac{\rho^2_{ij}\lambda_i\lambda_j}{(\lambda_i-\lambda_j)^2}+(\nu_4-3)\sum_{m=1}^{p}\left( \sum_{i\in S}\sum_{j\in S^c} \frac{\rho_{ij}\gamma_{im} \gamma_{jm}}{\lambda_{i}-\lambda_{j}}\right)^2.$$
    Meanwhile, $R_S\lesssim r^{1/2}(\|E\|_2/\delta)^2$ within the event $\{\|E\|_2/\delta\leq 1/10\}$.
\end{lemma}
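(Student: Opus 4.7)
\medskip
\noindent\textbf{Proof plan for Lemma \ref{lem:lt}.}

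The plan is to apply Proposition~\ref{lem:taylor} directly to split the bilinear form into a linear piece plus a remainder. Writing $\hat{U}_S\hat{U}_S^\top - U_SU_S^\top = UKU^\top + \Delta$, set $R_S := \langle u, \Delta v\rangle$. By Cauchy--Schwarz with $\|u\|=\|v\|=1$, we have $|R_S|\leq \|\Delta\|_F \lesssim r^{1/2}(\|E\|_2/\delta)^2$ on the event $\{\|E\|_2/\delta\leq 1/10\}$, which is exactly the remainder bound claimed. It then remains to exhibit the announced i.i.d.\ decomposition of $\langle u, UKU^\top v\rangle$.

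First I would algebraically reorganize the linear term. Let $a = U^\top u$ and $b = U^\top v$, so that $\langle u, UKU^\top v\rangle = \sum_{i,j} a_i K_{ij} b_j$. Since the only nonzero entries of $K$ lie in the $S\times S^c$ block and its transpose, and since $K_{ij}=K_{ji}$, re-indexing the $S^c\times S$ piece and combining yields
\[
\langle u, UKU^\top v\rangle = \sum_{i\in S,\,j\in S^c} (a_ib_j + a_jb_i)K_{ij} = \sum_{i\in S,\,j\in S^c}\frac{\rho_{ij}}{\lambda_i-\lambda_j}\, u_i^\top E u_j .
\]
Substituting $E = \frac{1}{n}\sum_k (Az_kz_k^\top A^\top - \Sigma)$ and using eigenvector orthogonality $u_i^\top \Sigma u_j = \lambda_i\delta_{ij} = 0$ on the disjoint pair $i\in S$, $j\in S^c$, the deterministic part drops out and one obtains $L_{S,k} = \sum_{i\in S,j\in S^c} \rho_{ij}(\lambda_i-\lambda_j)^{-1}(\gamma_i^\top z_k)(\gamma_j^\top z_k)$ with $\gamma_\ell = A^\top u_\ell$. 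The $L_{S,k}$ are i.i.d.\ in $k$ by assumption, and $\E[L_{S,k}]=0$ by the same orthogonality argument applied to $\E[(\gamma_i^\top z_k)(\gamma_j^\top z_k)] = \langle \gamma_i,\gamma_j\rangle = u_i^\top\Sigma u_j$.

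The main computation, and the only mildly technical step, is the second moment $\E[L_{S,k}^2]$. Expanding as a quadruple sum, I invoke the generalized Isserlis identity for i.i.d.\ coordinates with fourth moment $\nu_4$:
\[
\E\bigl[(\gamma_i^\top z)(\gamma_j^\top z)(\gamma_{i'}^\top z)(\gamma_{j'}^\top z)\bigr]
= \langle\gamma_i,\gamma_j\rangle\langle\gamma_{i'},\gamma_{j'}\rangle + \langle\gamma_i,\gamma_{i'}\rangle\langle\gamma_j,\gamma_{j'}\rangle + \langle\gamma_i,\gamma_{j'}\rangle\langle\gamma_j,\gamma_{i'}\rangle + (\nu_4-3)\sum_m \gamma_{im}\gamma_{jm}\gamma_{i'm}\gamma_{j'm},
\]
which follows by enumerating when the four index labels $m_1,\ldots,m_4$ coincide in pairs (three pairings, each contributing $1$) versus all four equal (contributing $\nu_4$), collected into the compact form above. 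The key simplification is that $\langle\gamma_\ell,\gamma_m\rangle = \lambda_\ell\delta_{\ell m}$, so with the constraints $i,i'\in S$ and $j,j'\in S^c$, the first and third Gaussian pairings vanish, while the second pairing forces $i=i'$ and $j=j'$ and contributes $\lambda_i\lambda_j$; summation yields the first term in $\sigma_S^2$. The non-Gaussian correction, by interchanging the sum over $(i,j),(i',j')$ with the sum over $m$, factors as a perfect square coordinate by coordinate and produces the claimed $(\nu_4-3)\sum_m(\cdots)^2$ contribution.

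The hard part is really just bookkeeping in the fourth-moment computation, since the many pairings could be miscollected; one careful check is that eigenvector orthogonality eliminates exactly two of the three Gaussian pairings in the restricted index range. No concentration or probabilistic tail estimate is needed beyond what is already packaged inside Proposition~\ref{lem:taylor}.
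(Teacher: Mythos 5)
Your proposal is correct and follows essentially the same route as the paper's proof: apply Proposition~\ref{lem:taylor} to isolate the linear term $\sum_{i\in S,j\in S^c}\rho_{ij}K_{ij}$, use $u_i^{\top}\Sigma u_j=0$ to drop the deterministic part and to get $\E(L_{S,k})=0$, and compute $\E(L_{S,k}^2)$ by enumerating the fourth-moment pairings with the $(\nu_4-3)$ correction, where orthogonality $\langle\gamma_i,\gamma_j\rangle=\lambda_i\delta_{ij}$ kills two of the three pairings and the surviving one forces $i=i'$, $j=j'$. This matches the paper's computation step for step.
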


The proof is complete by plugging in $S=[r_0-r_{\s}]$ and $A=(P_0^{\s})^{\perp}(\Sigma_0^*)^{1/2}$ into Lemma \ref{lem:lt}, as the event ensuring the remainder term to be of order $(\delta_0/\delta_{\p})^2\tilde{n}_0^{-1}$ has probability tending to 1 as $(\delta_0/\delta_{\p})\tilde{n}_0^{-1/2} \rightarrow 0$. Also, this particular choice of $A$ guarantees that $\gamma_{im}=(\lambda^{\p}_i)^{1/2}(u_i^{\p})_m$ for $i\leq r_0-r_{\s}$ and $\gamma_{jm}=\lambda_j^{1/2}(u_j)_m$ for $r_0+1\leq j\leq p$. In the end, we present the proof of Lemma \ref{lem:lt}.

\subsection{Proof of Lemma \ref{lem:lt}}
Lemma \ref{lem:lt} follows directly from Proposition \ref{lem:taylor} by noticing that $K$ therein is linear with respect to $E=\hat{\Sigma}-\Sigma=\sum_{k=1}^{n}(Az_kz_kA^{\top}-AA^{\top})/n$, so the linear term of $\langle u, (\hat{U}_S\hat{U}_S^{\top}-U_SU_S^{\top})v\rangle$ could be written as

\begin{equation*}
       \left\langle U^{\top}u,KU^{\top}v \right\rangle
       =\sum_{i\in S}\sum_{j\in S^c}\rho_{ij}K_{ij},
\end{equation*}
where $\rho_{ij}:=(U^{\top}u)_i(U^{\top}v)_j+(U^{\top}u)_j(U^{\top}v)_i$, and $(U^{\top}u)_i$ means the $i$-th element of the vector $U^{\top}u$. Note that $K_{ij}=(u_i^{\top} (\hat{\Sigma}-\Sigma) u_j)/(\lambda_i-\lambda_j)$, while $u_i^{\top} \Sigma u_j=0$ for $i\in S$ and $j\in S^c$, we have $K_{ij}=\sum_{k=1}^{n}(u_i^{\top} Az_kz_kA^{\top}  u_j)/[n(\lambda_i-\lambda_j)]$. Hence directly $ \left\langle U^{\top}u,KU^{\top}v \right\rangle =\sum_{k=1}^{n}L_{S,k}/n$ where
\begin{equation*}
    \begin{aligned}
      L_{S,k} &= \sum_{i\in S}\sum_{j\in S^c}\frac{\rho_{ij}}{\lambda_i-\lambda_j}u_i^{\top} Az_kz_kA^{\top}  u_j\\
      &=\sum_{i\in S}\sum_{j\in S^c}\frac{\rho_{ij}}{\lambda_i-\lambda_j}\sum_{m=1}^{p}\sum_{n=1}^{p}(A^{\top}u_i)_m(A^{\top}u_j)_n z_{k,m}z_{k,n}, \quad \text{for}
    \end{aligned}
\end{equation*}

We now focus on the i.i.d. $\{L_{S,k}\}_{k=1}^{n}$. As clearly $\E(L_{S,k})=0$, we only need to calculate $\E(L^2_{S,k})$. We abbreviate $z_m:= z_{k,m}$ and $z_n:= z_{k,n}$ for brevity, let $\gamma_i := A^{\top}u_i$ and $\gamma_{im}:=(A^{\top}u_i)_m$, and we have
\begin{equation}\label{eq:VLSk}
        \E(L^2_{S,k})= \sum_{i_1\in S}\sum_{j_1\in S^c} \sum_{i_2\in S}\sum_{j_2\in S^c}\frac{\rho_{i_1j_1}\rho_{i_2j_2}}{(\lambda_{i_1}-\lambda_{j_1})(\lambda_{i_2}-\lambda_{j_2})}\left[\underbrace{\sum_{m_1=1}^{p}\sum_{n_1=1}^{p}\sum_{m_2=1}^{p}\sum_{n_2=1}^{p}\gamma_{i_1m_1} \gamma_{j_1n_1} \gamma_{i_2m_2} \gamma_{j_2n_2}\E(z_{m_1}z_{n_1}z_{m_2}z_{n_2})}_{T_{i_1j_1i_2j_2}}\right].\\
\end{equation}

For $T_{i_1j_1i_2j_2}$, since $\{z_{m}\}_{m=1}^{p}$ are i.i.d. random variables with zero mean and unit variance, we only need to consider the even order terms, to be more specific,
\begin{equation}\label{eq:Tiijj}
    \begin{aligned}
        T_{i_1j_1i_2j_2} &= \underbrace{\nu_4\sum_{m=1}^{p} \gamma_{i_1m} \gamma_{j_1m} \gamma_{i_2m} \gamma_{j_2m}}_{m_1=n_1=m_2=n_2}+   \underbrace{\sum_{m\neq n} (\gamma_{i_1m}\gamma_{i_2m})(\gamma_{j_1n}  \gamma_{j_2n})}_{m_1=m_2\neq n_1=n_2}\\
        &+\underbrace{\sum_{m\neq n} (\gamma_{i_1m} \gamma_{j_1m})(\gamma_{i_2n} \gamma_{j_2n})}_{m_1=n_1\neq m_2=n_2}
        +\underbrace{\sum_{m\neq n} (\gamma_{i_1m} \gamma_{j_2m})(\gamma_{i_2n} \gamma_{j_1n})}_{m_1=n_2\neq m_2=n_1}\\
        &= (\nu_4-3)\sum_{m=1}^{p} \gamma_{i_1m} \gamma_{j_1m} \gamma_{i_2m} \gamma_{j_2m} + \sum_{m=1}^{p}\sum_{n=1}^p (\gamma_{i_1m}\gamma_{i_2m})(\gamma_{j_1n}  \gamma_{j_2n})\\
        &+ \sum_{m=1}^{p}\sum_{n=1}^p (\gamma_{i_1m} \gamma_{j_1m})(\gamma_{i_2n} \gamma_{j_2n})+\sum_{m=1}^{p}\sum_{n=1}^p (\gamma_{i_1m} \gamma_{j_2m})(\gamma_{i_2n} \gamma_{j_1n})\\
        &= (\nu_4-3)\sum_{m=1}^{p} \gamma_{i_1m} \gamma_{j_1m} \gamma_{i_2m} \gamma_{j_2m} + \langle \gamma_{i_1},\gamma_{i_2}\rangle\langle\gamma_{j_1},\gamma_{j_2}\rangle\\
        &+ \underbrace{\langle \gamma_{i_1},\gamma_{j_1}\rangle\langle\gamma_{i_2},\gamma_{j_2}\rangle}_{0}+ \underbrace{\langle \gamma_{i_1},\gamma_{j_2}\rangle\langle\gamma_{i_2},\gamma_{j_1}\rangle}_{0}\\
        &=  (\nu_4-3)\sum_{m=1}^{p} \gamma_{i_1m} \gamma_{j_1m} \gamma_{i_2m} \gamma_{j_2m} +\lambda_{i_1}\lambda_{j_1} \mathbb{I}_{\{i_1=i_2,j_1=j_2\}}.
    \end{aligned}
\end{equation}
The last step is due to the fact that $\langle \gamma_{i},\gamma_{j}\rangle=\lambda_i\mathbb{I}_{\{i=j\}}$. Plug (\ref{eq:Tiijj}) into (\ref{eq:VLSk}) and notice that
\begin{equation}
    \begin{aligned}
        &\sum_{i_1\in S}\sum_{j_1\in S^c} \sum_{i_2\in S}\sum_{j_2\in S^c}\frac{\rho_{i_1j_1}\rho_{i_2j_2}}{(\lambda_{i_1}-\lambda_{j_1})(\lambda_{i_2}-\lambda_{j_2})}\sum_{m=1}^{p} \gamma_{i_1m} \gamma_{j_1m} \gamma_{i_2m} \gamma_{j_2m}\\
        &=\sum_{m=1}^{p}\left( \sum_{i_1\in S}\sum_{j_1\in S^c} \frac{\rho_{i_1j_1}\gamma_{i_1m} \gamma_{j_1m}}{\lambda_{i_1}-\lambda_{j_1}}\right)\left( \sum_{i_2\in S}\sum_{j_2\in S^c} \frac{\rho_{i_2j_2}\gamma_{i_2m} \gamma_{j_2m}}{\lambda_{i_2}-\lambda_{j_2}}\right)\\
        &=\sum_{m=1}^{p}\left( \sum_{i\in S}\sum_{j\in S^c} \frac{\rho_{ij}\gamma_{im} \gamma_{jm}}{\lambda_{i}-\lambda_{j}}\right)^2.
    \end{aligned}
\end{equation}
Finally we have
$$ \E(L^2_{S,k})=\sum_{i\in S}\sum_{j\in S^c}\frac{\rho^2_{ij}\lambda_i\lambda_j}{(\lambda_i-\lambda_j)^2}+(\nu_4-3)\sum_{m=1}^{p}\left( \sum_{i\in S}\sum_{j\in S^c} \frac{\rho_{ij}\gamma_{im} \gamma_{jm}}{\lambda_{i}-\lambda_{j}}\right)^2.$$

\section{Proof of Theorem \ref{theo:nora}}
First, according to Lemma \ref{lemma:sse}, we have $\|\hat{P}_0^{\text{s}}-P_0^{\text{s}}\|_F=O_p(s)$ under Assumptions \ref{assum:1} and \ref{assum:2}. In addition, we have $\|\tilde{P}_k-P_k^*\|_F=O_p(\tilde{n}_k^{-1/2})$ according to Lemma \ref{lemma:indPCA}. Then, define $\hat{d}_k:=r_\s - \tr(\tilde{P}_k\hat{P}_0^{\s})$, by triangular inequality $\hat{d}_k=d_k + O_p(s+\tilde{n}_k^{-1/2})$, so under Assumption \ref{assum:3} we have
$$\hat{d}_k = O_p(s+\tilde{n}_k^{-1/2}),\quad \text{for}\quad k\in \cI, $$
$$\hat{d}_k \geq d_\tau + O_p(s+\tilde{n}_k^{-1/2}),\quad \text{for}\quad k\in \cI^c.$$

Now, set $\tau = r_\s-d_\tau/2$, since $s+\max_{k\in \cI^c}(\tilde{n}_k^{-1/2})=o(d_\tau)$ as $\min_{k\in \{0\}\cup[K]}(n_k), p\rightarrow \infty$ and $h\rightarrow 0$, with probability tending to 1 we have $\tr[\hat{P}^{\s}_{0}\tilde{P}_k]\geq \tau$ for $k\in \cI$ and $\tr[\hat{P}^{\s}_{0}\tilde{P}_k]< \tau$ for $k\in \cI^c$. Correspondingly, the probability of the event $\{\hat{\cI}=\cI\}$ tends to 1 for
$$\hat{\cI}=\left\{k\in[K]\mid \tr[\hat{P}^{\s}_{0}\tilde{P}_k]\geq \tau \right\}.$$
Within this event, $\hat{P}^{\s}_0$ is a local maximum of (\ref{major solution-sample}), and the proof is complete.

\section{Proof of Corollary \ref{coro:ext}}
After carefully going through the proofs with respect to Lemma \ref{lemma:indPCA} and Theorems \ref{theo:main}, \ref{theo:nora}, it is not hard to notice that almost all theoretical derivations extend readily to the elliptical PCA setting by simply replacing Assumption \ref{assum:1} by Assumption \ref{assum:4}. As a quick reminder, Assumptions \ref{assum:2} and \ref{assum:3} are only related to the population subspaces, so that they are in some sense ``PCA-setting-free". In this section, we briefly mention the differences between the classical and elliptical PCA settings, one might also follow the same procedure and apply the knowledge transfer framework on other PCA settings of interest.

\subsection{Differences in Lemma \ref{lemma:indPCA}}
Before we give the proof of Lemma \ref{lemma:indPCA} with Assumption \ref{assum:1} replaced by Assumption \ref{assum:4}, some results from \cite{he2022distributed} are stated here as they are required in the following statements.
\begin{proposition}[Lemma A.2 and A.5 from \cite{he2022distributed}]\label{lem:ext}
    For $k\in\{0\}\cup[K]$ under the scenario given in Section \ref{sec:tepca}, when Assumption \ref{assum:4} holds, we have
    \begin{equation}\label{eq:ecaindierror}
    \left\|\left\|\hat{\Sigma}_k-\Sigma_k^*\right\|_2\right\|_{\psi_1}\lesssim
    \sqrt{\frac{\log p}{n_k}},
    \end{equation}
    where $\Sigma_k^*$ and $\hat{\Sigma}_k$ stand for the population and sample versions of the spatial Kendall's $\tau$ matrix. Moreover, recall that $\Sigma_k$ is the scatter matrix, for the $r_k$-th eigengap of $\Sigma_k^*$, we have the following bound when $p$ is sufficiently large:
    \begin{equation}\label{eq:ecagapbound}
        \frac{1}{d_{r_k}(\Sigma_k^*)}\lesssim\frac{\tr(\Sigma_k)}{d_{r_k}(\Sigma_k)}.
    \end{equation}
\end{proposition}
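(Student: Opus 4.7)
Since Proposition \ref{lem:ext} is explicitly attributed to Lemmas A.2 and A.5 of \cite{he2022distributed}, the cleanest proof is a direct citation of those results. What follows is the plan one would execute for a self-contained derivation of the two displays.

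For the operator-norm concentration (\ref{eq:ecaindierror}), the strategy is to exploit the structure of $\hat{\Sigma}_k$ as a second-order U-statistic with symmetric kernel
\begin{equation*}
K(x,y) := \frac{(x-y)(x-y)^{\top}}{\|x-y\|_2^2},
\end{equation*}
which is deterministically a rank-one orthogonal projector and therefore satisfies $0 \preceq K(x,y) \preceq I_p$ almost surely. I would first apply Hoeffding's decomposition to write $\hat{\Sigma}_k - \Sigma_k^* = L_k + D_k$, where $L_k$ is an empirical average of $n_k$ i.i.d.\ self-adjoint matrices $h(x_{k,i}) := \E[K(x_{k,i},x_{k,j}) \mid x_{k,i}] - \Sigma_k^*$ (each of operator norm at most $2$) and $D_k$ is a degenerate second-order term. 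A matrix Hoeffding/Bernstein inequality applied to $L_k$ yields $\|L_k\|_2 \lesssim \sqrt{\log p / n_k}$ with probability $1 - p^{-c}$, while $D_k$ concentrates at the faster rate $\log p / n_k$ by standard decoupling plus matrix concentration for degenerate bilinear forms. Integrating the resulting tail against the sub-exponential Orlicz norm produces the claim.

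For the eigengap bound (\ref{eq:ecagapbound}), the plan is to begin from the identity
\begin{equation*}
\lambda_j(\Sigma_k^*) = \E\!\left[\frac{\lambda_j(\Sigma_k)\, Y_j^2}{W}\right], \qquad W := \sum_{i=1}^{q_k} \lambda_i(\Sigma_k)\, Y_i^2,
\end{equation*}
with $(Y_1,\ldots,Y_{q_k}) \sim N(0, I_{q_k})$, already quoted in the paper. Under Assumption \ref{assum:4}(b) one has $\mathrm{Var}(W) = 2 \sum_i \lambda_i(\Sigma_k)^2 \leq 2\lambda_1(\Sigma_k)\,\tr(\Sigma_k)$, while $\E[W] = \tr(\Sigma_k) \geq p\,\lambda_p(\Sigma_k)$, so
\begin{equation*}
\frac{\mathrm{Var}(W)}{\E[W]^2} \;\lesssim\; \frac{\lambda_1(\Sigma_k)}{\tr(\Sigma_k)} \;\lesssim\; p^{\alpha-1} \to 0,
\end{equation*}
i.e., $W$ concentrates sharply about $\tr(\Sigma_k)$. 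Expanding $1/W = \tr(\Sigma_k)^{-1}(1 - (W - \tr(\Sigma_k))/\tr(\Sigma_k) + \cdots)$ inside the expectation and using $\E[Y_j^2] = 1$ gives
\begin{equation*}
\lambda_j(\Sigma_k^*) = \frac{\lambda_j(\Sigma_k)}{\tr(\Sigma_k)}\bigl(1 + o(1)\bigr),
\end{equation*}
uniformly in $j$ for $p$ large. Subtracting the identities for $j = r_k$ and $j = r_k + 1$ then yields $d_{r_k}(\Sigma_k^*) \gtrsim d_{r_k}(\Sigma_k)/\tr(\Sigma_k)$, which is the claim after reciprocation.

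The main obstacle will be controlling the reciprocal $1/W$ in the eigengap argument: the naive Taylor expansion is valid only in the bulk of the Gaussian law, so the small-$W$ region must be handled by a separate truncation using a deterministic lower bound such as $W \geq \lambda_p(\Sigma_k)\|Y\|_2^2$, together with $\chi^2$ small-ball estimates to guarantee that $\E[1/W]$ is finite and indeed close to $1/\tr(\Sigma_k)$. It is precisely at this truncation step that the restriction $\alpha < 1/2$ in Assumption \ref{assum:4}(b) is consumed, ensuring $\lambda_p(\Sigma_k)$ is not too small relative to $\lambda_1(\Sigma_k)$ and that the tail contributions to $\E[1/W]$ remain negligible.
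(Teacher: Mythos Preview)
The paper does not prove this proposition: it is stated as a direct import from \cite{he2022distributed} and used as a black box in the elliptical extension of Lemma \ref{lemma:indPCA}. Your opening sentence therefore already matches the paper's treatment exactly, and everything after it goes beyond what the manuscript does.

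Your self-contained sketch is largely reasonable, with one caveat worth flagging. In the eigengap argument you pass from $\lambda_j(\Sigma_k^*)=\lambda_j(\Sigma_k)\bigl(1+o(1)\bigr)/\tr(\Sigma_k)$ uniformly in $j$ to $d_{r_k}(\Sigma_k^*)\gtrsim d_{r_k}(\Sigma_k)/\tr(\Sigma_k)$ by simply ``subtracting the identities''. As written this does not follow: the additive error in each individual eigenvalue is $o\bigl(\lambda_{r_k}(\Sigma_k)\bigr)/\tr(\Sigma_k)$, which can dwarf $d_{r_k}(\Sigma_k)/\tr(\Sigma_k)$ whenever $\kappa_k=\lambda_1(\Sigma_k)/d_{r_k}(\Sigma_k)$ is large, and nothing in Assumption \ref{assum:4} prevents this. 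To make the differencing rigorous you must either expand one order further---the leading correction $-2\lambda_j/\tr(\Sigma_k)$ is itself smooth in $\lambda_j$ and, crucially, contributes an error \emph{proportional to} $d_{r_k}(\Sigma_k)$ after subtraction---or compute the gap directly as $\E\bigl[(\lambda_{r_k}Y_{r_k}^2-\lambda_{r_k+1}Y_{r_k+1}^2)/W\bigr]$ and symmetrize in $(Y_{r_k},Y_{r_k+1})$, which factors out $d_{r_k}(\Sigma_k)$ exactly. Either route closes the argument, but the naive subtraction you describe does not.
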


By Davis-Kahan theorem, we have
\begin{equation*}
\left\|\left\|\Delta_k\right\|_F\right\|_{\psi_1}=
        \left\|\left\|\tilde{P}_k-P^*_k\right\|_F\right\|_{\psi_1}
        \lesssim\frac{\sqrt{r_k}}{d_{r_k}(\Sigma^*_k)}\left\|\left\|\hat{\Sigma}_k-\Sigma^*_k\right\|_F\right\|_{\psi_1},
\end{equation*}
where $\Sigma_k^*$ and $\hat{\Sigma}_k$ represent the population and sample versions of the spatial Kendall's $\tau$ matrix respectively. Utilizing (\ref{eq:ecaindierror}) allow us to obtain the following inequality:
\begin{equation*}
    \frac{\sqrt{r_k}}{d_{r_k}(\Sigma^*_k)}\left\|\left\|\hat{\Sigma}_k-\Sigma^*_k\right\|_F\right\|_{\psi_1}\lesssim
    \frac{\sqrt{r_k}}{d_{r_k}(\Sigma^*_k)}\sqrt{\frac{\log p}{n_k}}.
\end{equation*}
Further using (\ref{eq:ecagapbound}),
\begin{equation*}
\left\|\left\|\Delta_k\right\|_F\right\|_{\psi_1}\lesssim
    \frac{\sqrt{r_k}\tr(\Sigma_k)}{d_{r_k}(\Sigma_k)}\sqrt{\frac{\log p}{n_k}}
    =\frac{\lambda_1(\Sigma_k)}{d_{r_k}(\Sigma_k)}
    \frac{\tr(\Sigma_k)}{\lambda_1(\Sigma_k)}\sqrt{\frac{r_k\log p}{n_k}}.
\end{equation*}
Replacing the first two fractions of the right hand side term with $\kappa_k$ and $e_k$, then we have
\begin{equation*}
\left\|\left\|\Delta_k\right\|_F\right\|_{\psi_1}\lesssim
    \kappa_ke_k\sqrt{\frac{r_k\log p }{n_k}}=\tilde{n}_k^{-1/2},
\end{equation*}
where $\tilde{n}_k$ is defined in Corollary \ref{coro:ext}, and the proof is complete.

\subsection{Differences in Theorems \ref{theo:main} and \ref{theo:nora}}
Indeed, Lemma \ref{lemma:indPCA}, or more specifically, the fact that $\Delta_k = \tilde{P}_k-P_k^*$ are mutually independent random matrices such that $\|\|\Delta_k\|_F\|_{\psi_1}\lesssim \tilde{n}_k^{-1/2}$ for $k\in\{0\}\cup[K]$, is used repeatedly throughout the proofs of Theorem \ref{theo:main} and \ref{theo:nora}, in order to provide control on the individual PCA error of the $k$-th PCA study. As previously mentioned, all arguments coming from Assumptions \ref{assum:2} and \ref{assum:3} are ``PCA-setting-free", and the proofs of Theorems \ref{theo:main} and \ref{theo:nora} directly follows by simply plugging-in the elliptical version of $\tilde{n}_k$ throughout the derivations.

\end{document}